\documentclass[a4paper,10pt]{article}
\usepackage[utf8x]{inputenc}
\usepackage{multirow}
\usepackage{enumerate}

\usepackage{graphicx} 
\usepackage{subfigure} 
\usepackage{color, colortbl}

\usepackage{amssymb}
\usepackage{amsmath}
\usepackage{amsthm}
\usepackage{enumerate}
\usepackage{multirow}
\usepackage{array}
\usepackage{slashbox}

\usepackage[margin=0.9in]{geometry}
\setlength{\textwidth}{6.5in}

\def\R{\mathbb{R}}

\def\H{\mathcal{H}}
\def\L{\mathcal{L}}
\def\K{\mathcal{K}}
\def\T{\mathcal{T}}

\def\I{\mathcal{I}}

\def\D{\mathcal{D}}
\def\z{\boldsymbol z}
\def\const{\boldsymbol 1}

\def\M{\mathcal{M}}

\newcommand{\E}{\mathbb{E}}

\newtheorem{theorem}{Theorem}
\newtheorem{lemma}[theorem]{Lemma}

\newtheorem{corollary}[theorem]{Corollary}

\definecolor{Gray}{gray}{0.9}

\title{ Inverse Density as an Inverse Problem:\\ the Fredholm Equation Approach}
\author{Qichao Que, Mikhail Belkin\\Department of Computer Science and Engineering\\The Ohio State University}

\begin{document} 
\maketitle

\begin{abstract} 
In this paper we address the problem of estimating the ratio $\frac{q}{p}$ where $p$ is a density function and $q$ is another density, or, more generally an arbitrary function.  Knowing or approximating this ratio is needed in various problems of inference and integration, in particular, when one needs to average  a function with respect to one probability distribution, given a sample from another. It is often referred as {\it importance sampling} in statistical inference and is  also closely related to the problem of {\it covariate shift} in transfer learning as well as to various MCMC methods.
It may also be useful for separating the underlying geometry of a space, say a manifold, from the density function defined on it. 

Our approach is based on reformulating the problem of estimating $\frac{q}{p}$  as an inverse problem in terms of an integral operator corresponding to a kernel, and thus reducing it to an integral equation, known as the Fredholm problem of the first kind.   This formulation, combined with the techniques of regularization and kernel methods, leads to a principled kernel-based framework for constructing algorithms and for analyzing them theoretically. 

The resulting family of algorithms (FIRE, for Fredholm Inverse Regularized Estimator) is flexible,  simple and  easy to implement. 

We provide detailed theoretical analysis including concentration bounds and convergence rates for the Gaussian kernel in the case of densities defined on $\R^d$, compact domains in $\R^d$ and smooth $d$-dimensional sub-manifolds of the Euclidean space.

We also show experimental results including applications to classification and semi-supervised learning within the covariate shift framework and demonstrate some encouraging experimental comparisons. We also show how the parameters of our algorithms can be chosen in a completely unsupervised manner.
\end{abstract} 

\section{Introduction}

Density estimation is one of the best-studied and most useful problems in statistical inference. The question is to estimate the probability density function $p(x)$ from a sample $x_1,\ldots,x_n$. There is a rich literature on the subject (e.g., see the review~\cite{Izenman91}),  particularly, dealing with a class of non-parametric kernel estimators going back to the work of Parzen~\cite{KDE}.

In this paper we address the related problem of estimating the ratio of two functions, $\frac{q(x)}{p(x)}$ where $p$ is given by a sample and $q(x)$ is either a known function or another  probability density function given by a sample. We note that estimating such ratio is necessary when one attempts to integrate a function with respect to one density, given its values on a sample obtained from another distribution.  This is typical when the process generating the data is different from the averaging problem we wish to address. To give a very simple practical example of such a situation,  consider a cleaning robot equipped with a dirt sensor.  We would like to know how well the robot performs cleaning,  however,  the probability density of the robot location $p(x)$ depends on the program and is clearly not uniform. To obtain the cleaning quality, we need to average the  sensor readings with respect to the uniform density over the floor rather than the location distribution, which requires estimating the inverse $\frac{1}{p}$ (here $q(x)$ it the constant function $1$). 

An  important class of applications for density ratios relates to various Markov Chain Monte Carlo (MCMC) integration techniques used in various applications, in particular, in many tasks of Bayesian inference. 
It is  often hard to sample directly from the desirable probability distribution but it may be possible to construct an approximation which is easier to sample from. The class of techniques related to the {\it importance sampling} (see, e.g.,~\cite{ImpSampling96}) deals with this problem by using a ratio of two densities (which is typically assumed to be known in that literature).

Recently there have been a significant amount of work on estimating the density ratio (also known as te importance function) from sampled data, e.g.,~\cite{gretton2009covariate, kanamori2009least, huang2006correcting, sugiyama2007covariate, bickel2007discriminative}. Many of these papers consider this problem in the context of {\it covariate shift} assumption~\cite{shimodaira2000improving} or the so-called {\it selection bias}~\cite{zadrozny2004learning}. 
Our Fredholm Inverse Regularized Estimator (FIRE) framework introduces a very general and  flexible approach to this problem which  leads to more efficient algorithms design, provides very competitive experimental results and makes possible  theoretical analysis in terms of the sample complexity and convergence rates.

We will provide a more detailed discussion of these and other related papers and connections to our work in Section~\ref{sec:related}, where we also discuss how the Kernel Mean Matching algorithm~\cite{gretton2009covariate, huang2006correcting} can be viewed within our framework.

The approach taken in our paper is based on reformulating the density ratio estimation  as an integral equation, known as the Fredholm equation of the first kind (in the classical one-dimensional case), and solving it using the tools of regularization and Reproducing Kernel Hilbert Spaces. That allows us to develop simple and flexible algorithms for density ratio estimation within the popular kernel learning framework. In addition the integral operator approach separates 
estimation and regularization problems, thus allowing us to address certain settings where the existing methods are not applicable.  The connection to the classical operator theory setting makes it easier to apply the standard tools of spectral analysis to obtain theoretical results.

We will now briefly outline the main idea of this paper.
%
We start with the following simple equality underlying  the {\it importance sampling} method:
\begin{equation}
\label{eqn:imp_sampling}
E_q(h) = \int h(x) q(x)dx = \int h(x) \frac{q(x)}{p(x)} p(x) dx  = E_p \left(h(x) \frac{q(x)}{p(x)} \right)
\end{equation}

By replacing the function $h(x)$ with a kernel $k(x,y)$, we obtain
\begin{equation}
\label{eqn:1}
\K_{p}\frac{q}{p}(x) := \int k(x,y)\frac{q(y)}{p(y)} p(y)dy = \int k(x,y) q(y)dy := \K_{q} \const (x).
\end{equation}
Thinking of the function $\frac{q(x)}{p(x)}$ as an unknown quantity and assuming that the right hand side is known this becomes an integral equation (known as the Fredholm equation of the first type). Note that the right-hand side can be estimated given a sample from $q$ while the operator on the left can be estimated using a sample from $p$. 

To push this idea further, suppose $k_t(x,y)$  is a \lq\lq{}local\rq\rq{} kernel, (e.g., the  Gaussian,
$
k_t(x,y) = \frac{1}{(2\pi t)^{d/2}}e^{-\frac{\|x-y\|^2}{2t}}
$)
 such that $ \int_{\R^d} k_t(x,y) dx=1$. Convolution with such a kernel is close to the $\delta$-function, i.e.,
$\int_{\R^d} k_t(x,y) f(x) dx = f(y) + O(t)$.

Thus we get another (approximate) integral equality:
\begin{equation}\label{eqn:2}
\K_{t,p} \frac{q}{p}(y) :=\int_{\R^d} k_t(x,y)\frac{q(x)}{p(x)}p(x)dx \approx q(y).
\end{equation}
It becomes an integral equation for $\frac{q(x)}{p(x)}$, assuming that $q$ is known or can be approximated.

We address these  inverse problems  by formulating them within the classical framework\footnote{In fact our formulation is quite close to the original formulation of Tikhonov.} of Tiknonov-Philips regularization with the penalty term corresponding to the norm of the function in the Reproducing Kernel Hilbert Space $\H$ with kernel $k_\H$ used in many machine learning algorithms.

$$
\text{[Type I]:} ~ \frac{q}{p}  \approx \arg \min_{f\in \H_{}} \|\K_{p} f -  \K_{q} \const \|_{L_{2,p}}^2 + \lambda \|f\|_{\H}^2 ~~~~\text{[Type II]:}~\frac{q}{p}  \approx \arg \min_{f\in \H_{}} \|\K_{t,p} f - q\|_{L_{2,p}}^2 + \lambda \|f\|_{\H}^2
$$

Importantly, given a sample $x_1,\ldots,x_n$ from $p$, the integral operator $\K_{p} f$ applied to a function $f$ can be approximated by the corresponding discrete sum $\K_{p} f(x) \approx \frac{1}{n} \sum_i f(x_i) K(x_i, x)$, while $L_{2,p}$ norm is approximated by an average: $\|f\|_{L_{2,p}}^2 \approx \frac{1}{n} \sum_i f(x_i)^2$. Of course, the same holds for a sample from $q$.

Thus, we see that the Type I formulation is useful when $q$ is a density and samples from both $p$ and $q$ are available, while the Type II is useful, when the values of $q$ (which does not have to be a density function at all\footnote{This could be important in various sampling procedures, for example, when the normalizing coefficients are hard to estimate.}) are known at the data points sampled from $p$.

Since all of these involve only function evaluations at the sample points, by an application of the usual representer theorem for Reproducing Kernel Hilbert Spaces,  both Type I and II formulations lead to  simple, explicit and easily implementable  algorithms,  representing the solution of the optimization problem as linear combinations of the kernels over the points of the sample $\sum_i \alpha_i k_H(x_i,x)$ (see Section~\ref{sec:algo}). We call the resulting algorithms FIRE for Fredholm Inverse Regularized Estimator.
\smallskip

\noindent Some remarks would be useful at this point. \\
\noindent {\bf Remark 1: Other norms and loss functions.}   Norms and loss functions other that $L_{2,p}$ can also be used in our setting as long as they can be approximated  from a sample using function evaluations.  

\begin{itemize}
\item Perhaps, the most  interesting is the norm $L_{2,q}$ norm available in the Type I setting, when a sample from the probability distribution $q$ is available. In fact, given a sample from both $p$ and $q$ we can use the  combined empirical norm  $\gamma L_{2,p}+ (1-\gamma) L_{2,q}$. Optimization using those norms leads to some interesting looking kernel algorithms described in Section~\ref{sec:algo}. We note that the solution is still a linear combination of kernel functions on  centered on the sample from $p$ and can still be written explicitly.


\item In the Type I formulation, if  the kernels $k(x,y)$ and $k_\H(x,y)$ coincide, it is possible to use the RKHS norm  $\| \cdot\|_\H$ instead of $L_{2,p}$. This formulation (see Section~\ref{sec:algo}) also yields an explicit formula and is related to the Kernel Mean Matching algorithm~\cite{huang2006correcting} (see the discussion in Section~\ref{sec:related}), although with a  different optimization procedure. We note that the solution in our framework has a natural out-of-sample extension, which becomes important for proper parameter selection.

 \item Other norms/loss functions, e.g., $L_{1,p}, L_{1,q}$, $\epsilon$-insensitive loss from the SVM regression, etc., can also be used in our framework as long as they can be approximated  from a sample using function evaluations.  
We note that some of these may have advantages in terms of the sparsity of the resulting solution.
On the other hand,  a standard advantage of using the square norm is the ease of cross-validation with respect to the parameter $\lambda$. 

\end{itemize}

\noindent {\bf Remark 2: Other regularization methods.} Several  regularization methods other than Tikhonov-Philips regularization are available. We will  briefly discuss the {\it spectral cut-off regularization} and its potential advantages in Section~\ref{sec:algo}.  We note that other methods, such as early stopping (e.g.,~\cite{yao2007early,bauer2007regularization}) can be used and may have computational advantages.

%
%

\noindent {\bf Remark 3.} We note that an intermediate {\bf \lq\lq{}Type 1.5\rq\rq{}} formulation is also available.  Specifically, for two "$\delta$-kernels" $K$ and $K'$, we have $\K_{p} \frac{q}{p} \approx  \K_{q}'\const$, thus using two different kernels in the Type I formulation 
\begin{equation}
\label{eqn:1.5}
\frac{q}{p}  \approx \arg \min_{f\in \H_{}} \|\K_{p} f -  \K_{q}' \const\|_{L_{2,p}}^2+ \lambda \|f\|_{\H}^2
\end{equation}
The ability to use kernels with different bandwidth  for $p$ and $q$ may be potentially important in practice, especially when the samples from $p$ and $q$ have very different cardinality.  The resulting algorithms for this setting are described in in Section~\ref{sec:algo}. Of course, the previous two remarks apply to this setting as well.
\smallskip

Since we are dealing with a classical inverse problem for integral operators, our formulation
allows for  theoretical analysis using the methods of spectral theory. In Section~\ref{sec:theory} we prove concentration and error bounds as well as convergence rates for our algorithms when data are sampled from a distribution defined in $\R^d$, a domain in $\R^d$ with boundary or a compact $d$-dimensional sub-manifold of a Euclidean space $\R^N$ for the case of the Gaussian kernel.

It is interesting to note that unlike the usual density estimation problem the width of the kernel does not need to go to zero for convergence. However, it is necessary if we want a polynomial convergence rate. This is related to the exponential  decay of eigenvalues  for the Gaussian kernel.

Finally, in Section~\ref{sec:experiments} we discuss the experimental results on several  data sets comparing our method FIRE with the available alternatives,   Kernel Mean Matching (KMM)~\cite{huang2006correcting}  and LSIF~\cite{kanamori2009least} as well as the base-line thresholded inverse kernel density estimator\footnote{Obtained by dividing the standard kernel density estimator for $q$ by a thresholded kernel density estimator for $p$   Interestingly, despite its simplicity it performs quite well in many settings.} (TIKDE) and importance sampling (when available).

\smallskip

We summarize the contributions of the paper as follows:
\begin{enumerate}
\item We provide a formulation of estimating the density ratio (importance function) as a classical inverse problem, known as the Fredholm equation, establishing a connections to the methods of classical analysis.
The underlying idea is to \lq\lq{}linearize\rq\rq{}  the properties of the density by studying an associated integral operator.

\item To solve the resulting inverse problems we apply regularization with an RKHS norm penalty. 
This provides a flexible and principled framework, with a variety of different norms and regularization techniques available. It separates the underlying inverse problem from the necessary regularization and leads to a family of very simple and direct algorithms within the kernel learning framework in machine learning.  We call the resulting algorithms FIRE for Fredholm Inverse Regularized Estimator.

\item Using the techniques of spectral analysis and concentration, we provide a detailed theoretical analysis for the case of the Gaussian kernel,  for Euclidean case as well as distributions supported on a sub-manifold. We prove  error bounds and as well as the convergence rates (as far as we know, it is the first convergence rate analysis for density ratio estimation).   We also comment on other kernels and potential extensions of our analysis.

\item We evaluate and compare our methods on several  real-world and artificial different data sets and in several settings and demonstrate strong performance and better computational efficiency compared to the alternatives. We also propose a completely unsupervised technique for cross-validating the parameters of our algorithm and demonstrate its usefulness, thus addressing in our setting one of the most thorny issues in unsupervised/semi-supervised learning.

\item Finally, our framework allows us to address  several different  settings related to a number of  problems in areas from covariate shift classification in transfer learning to  importance sampling in  MCMC  to geometry estimation and numerical integration.  Some of these connections are explored in this paper and some we hope to address in the future work.


\end{enumerate}

\section{Related work}
\label{sec:related}

The problem of density estimation has a long history in classical statistical literature and a rich variety of methods are available~\cite{Izenman91}. However, as far as we know the problem of estimating the inverse density or density ratio from a sample has not been studied extensively until quite recently.  Some of the related older work includes density estimation for inverse problems~\cite{egger95} and  the literature on deconvolution, e.g.,~\cite{carroll1988optimal}.  

In the last few years the problem of density ratio estimation has received significant attention due in part to the increased interest in transfer learning~\cite{pan2010survey} and, in particular to the form of transfer learning known as covariate shift~\cite{shimodaira2000improving}. To give a brief summary, given the  feature space $X$ and the label space $Y$,  two probability distributions $p$ and $q$ on $X \times Y$ satisfy the covariate assumption if  for all $x,y$, $p(y|x) = q(y|x)$.  It is easy to see that training a classifier to minimize the error for $q$, given a sample from $p$ requires estimating the  ratio of the marginal distributions $\frac{q_X(x)}{p_X(x)}$.
Some of the work on covariate shift, ratio density estimation and other  closely related settings includes~\cite{zadrozny2004learning, bickel2007discriminative, gretton2009covariate, kanamori2009least, sugiyama2007covariate, huang2006correcting,sugiyama2008direct,InvDensity08, nguyen2008estimating} 

The algorithm most closely related to our approach is  Kernel Mean Matching (KMM)~\cite{huang2006correcting}. KMM is based on the observation  that $E_q (\Phi(x)) = E_p(\frac{q}{p}\Phi(x))$, where $\Phi$ is the  feature map corresponding to an RKHS $\H$. It is rewritten as an optimization problem 
\begin{equation}
\label{eqn:kmm}
\frac{q(x)}{p(x)} = \arg\min_{\beta \in L_2, \beta(x)>0, E_p(\beta) =1}\|E_q (\Phi(x)) - E_p(\beta(x)\Phi(x))\|_\H
\end{equation}
The quantity on the right can be estimated given a sample from $p$ and a sample from $q$ and the minimization becomes  a  quadratic optimization problem over the values of $\beta$ at the points sampled from $p$.
Writing down the feature map explicitly, i.e., recalling that $\Phi(x) = K_\H(x,\cdot)$,
we see that the equality $E_q (\Phi(x)) = E_p(\frac{q}{p}\Phi(x))$ is equivalent to the integral equation Eq.~\ref{eqn:1} considered as an identity in the Hilbert space $\H$.
Thus the problem of KMM can be viewed within our setting Type I (see the Remark 2 in the introduction),  with a RKHS norm but a  different optimization algorithm.

However, while the KMM optimization problem in Eq.~\ref{eqn:kmm} uses the RKHS norm, the weight function $\beta$ itself is not in the RKHS. Thus, unlike most other  algorithms in the RKHS framework (in particular, FIRE), the empirical optimization problem resulting from Eq.~\ref{eqn:kmm} does not have a natural out-of-sample extension\footnote{In particular, this becomes an issue for  model selection, see Section~\ref{sec:experiments}.}. 

Also, since there is no regularizing term, the problem is less stable (see Section~\ref{sec:experiments}  for some experimental comparisons) and the theoretical analysis is harder (however, see~\cite{gretton2009covariate} and the recent paper~\cite{yu2012covariate} for some nice theoretical analysis of KMM in certain settings).


Another  related recent algorithm is Least Squares Importance Sampling (LSIF)~\cite{kanamori2009least}, which attempts to estimate the density ratio by choosing a parametric linear family of functions and choosing a function from this family to minimize the $L_{2,p}$ distance to the density ratio. A similar setting with the Kullback-Leibler distance (KLIEP) was proposed in~\cite{sugiyama2008direct}. This has an advantage of a natural out-of-sample extension property. We note that our method for unsupervised parameter selection in Section~\ref{sec:experiments} is related to their ideas. However, in our case the set of test functions does not need to form a good basis since no approximation is required. 

We note that our methods are closely related to a large body of work on kernel methods in machine learning and statistical estimation (e.g.,~\cite{steinwart2008support,shawe2004kernel,scholkopf2001learning}).  
Many of these algorithms can be interpreted as inverse problems, e.g.,~\cite{devito2006learning,smola1998kernel} in the Tikhonov regularization or other regularization frameworks. In particular, we note  interesting methods for density estimation proposed in~\cite{vapnik1999support} and estimating the support of density through spectral regularization in~\cite{devito2010spectral}, as well as robust density estimation using RKHS formulations~\cite{kim2008robust} and conditional density~\cite{grunewalder2012conditional}.

We also note the connections of the methods in this paper to  properties of density-dependent operators in classification and clustering~\cite{williams2000effect, shi2009data}.
There are also connections to geometry and density-dependent norms for semi-supervised learning, e.g.,~\cite{belkin2006manifold}.

Finally, the setting in this paper is connected to the large literature on integral equations~\cite{kress1999linear}.
In particular, we note~\cite{wahba1977practical}, which analyzes the classical Fredholm problem using regularization for noisy data.

%
%
%
%
%

%

\section{Settings and Algorithms}
\label{sec:algo}

\subsection{Some preliminaries}

We start by introducing some objects and function spaces important for our development.
 As usual, the {space of square-integrable functions} with respect to a measure $\rho$, is defined as follows:
\[
L_{2,\rho}=\left\{f: \int_{\Omega}|f(x)|^2 d \rho<\infty\right\}.
\]
This is a Hilbert space with the inner product  defined in the usual way by  $\langle f, g \rangle_{2,\rho}= \int_{\Omega} f(x)g(x) d \rho$.

Given a function of two variables $k(x,y)$ (a kernel), we define the operator $\K_{\rho}$: 
\[
\K_{\rho} f(y) := \int_{\Omega}k(x,y)f(x) d\rho(x).
\]
We will use the notation $\K_{t,\rho}$ to explicitly refer to the parameter of the kernel function $k_t(x,y)$, when it is a $\delta$-family.

If the function $k(x,y)$ is symmetric and positive definite, then there is a corresponding 
 {\it Reproducing Kernel Hilbert space} (RKHS) $\H$. We recall the key property of the kernel $k_\H$: for any $f \in \H$, $\langle f, k_\H(x,\cdot)\rangle_\H = f(x)$. The direct consequence of this is the Representer Theorem, which allows us to write solutions to various optimization problems over $\H$ in terms of linear combinations of kernels supported on sample points (see~\cite{steinwart2008support} for an in-depth  discussion or the RKHS theory and the issues related to learning). 

It is important to note that in some of our algorithms the RKHS kernel $k_\H$ will be different from the kernel of the integral operator $k$.


Given a sample $x_1,\ldots,x_n$ from $p$, one can approximate  the $L_{2,p}$ norm of a function\footnote{$f$ needs to be in a function class where point evaluations are defined.} $f$ by $\|f\|_{2,p}^2\approx \frac{1}{n} \sum_i |f(x_i)|^2$. Similarly, the integral operator $K_{p} f (x) \approx \frac{1}{n}\sum_i k(x_i,x) f(x_i)$. 
These approximate equalities can be made precise by using appropriate concentration inequalities.


\subsection{The FIRE Algorithms}


As discussed  in the introduction, the starting point for our development  is the integral equality
\begin{equation}
\label{eqn:type1}
\text{[Type I]:} ~~~~~~ \K_{p}\frac{q}{p}(x) = \int_{\Omega}k(x,y)\frac{q(y)}{p(y)} p(y)dy = \K_{q} \const (x).
\end{equation}
Notice that in Type I, the kernel is not necessary to be in $\delta$-family. For example, it could be linear kernel. Thus, we omit the $t$ in the kernel for the Type I case.

Moreover, if the kernel $k_t(x,y)$ is a Gaussian, which we will analyze in detail,  or another $\delta$-family and for $f$ sufficiently smooth $\K_{t,q} f (x) \approx f(x) p(x) + o(1)$ and hence 
\begin{equation}
\label{eqn:type2}
\text{[Type II]:} ~~~~~~~~ \K_{t,p}\frac{q}{p}(x) = \int_{\Omega}k_t(x,y)\frac{q(y)}{p(y)} p(y)dy = q(x) + o(1).
\end{equation}
In fact, for the Gaussian kernel, the $o(1)$ term is of the order $t$. Since it is important that the kernel $k_t$ is in the $\delta$-family with bandwidth $t$, so we keep $t$ in the notation in this case.

Assuming that either $\K_{q} \const$  or $q$  are known (for simplicity we will refer to these settings  as Type I and Type II, respectively) these Eqs.~\ref{eqn:type1},\ref{eqn:type2} become integral equations for $\frac{p}{q}$, known as the Fredholm equations of the first kind. 

To address the problem of estimating $\frac{p}{q}$ we need to obtain an approximation to the solution which (a) can be obtained computationally from sampled data, (b) is stable with respect to sampling and other perturbation of the input function\footnote{Especially in Eq.~\ref{eqn:type2}, where the identity has an error term depending on $t$. } and, preferably, (c) can be analyzed using the standard machinery of functional analysis.

To provide a framework for solving   these inverse problems we apply the classical techniques of regularization combined with the RKHS norm popular in machine learning.  In particular a simple formulation of Eq.\ref{eqn:type1}  in terms of 
Tikhonov regularization with the $L_{2,p}$ norm is as follows:
\begin{equation}\label{eqn:tikh1}
\text{[Type I]:} ~~~~~~ f^\text{I}_{\lambda} = \arg \min_{f\in \H} \|\K_{p} f - \K_{q}\const \|_{2,p}^2 + \lambda \|f\|_{\H}^2
\end{equation}
Here $\H$ is an appropriate Reproducing Kernel  Hilbert Space.
Similarly Eq.~\ref{eqn:type2} can be written as
\begin{equation}\label{eqn:tikh2}
\text{[Type II]:} ~~~~~~~~ f^\text{II}_{\lambda} = \arg \min_{f\in \H_{}} \|\K_{t,p} f - q \|_{2,p}^2 + \lambda \|f\|_{\H}^2
\end{equation}



We will now discuss the empirical versions of these equations and the resulting algorithms in different settings and for different norms.

\subsection { Algorithms for the Type I setting.}

Given an iid sample from $p$, $\z_p=\{x_1,x_2,\dots,x_n\}$ and an iid sample from $q$, $\z_q=\{x_1',x_2',\dots,x_m'\}$ (we will denote the combined sample by $\z$) we can approximate the integral operators $\K_{p}$ and $\K_{q}$ by 
\begin{equation}
\label{eqn:kernel_empirical}
K_{\z_p} f (x)= \frac{1}{n}\sum_{x_i\in \z_p} k(x_i,x) f(x_i) ~~~\text{and} ~~~ K_{\z_q} f (x)= \frac{1}{m}\sum_{x_i'\in \z_q} k(x_i',x) f(x_i').
\end{equation}
Thus the empirical version of Eq.~\ref{eqn:tikh1} becomes 
\begin{equation}\label{eqn:algotype1}
f^\text{I}_{\lambda,\z} = \arg \min_{f\in \H_{}} \frac{1}{n} \sum_{{x_i\in \z_p}} 
((\K_{\z_p}f)(x_i)-(\K_{\z_q}\const) (x_i))^2 + \lambda \|f\|_{\H}^2
\end{equation}
We observe that the first term of the optimization problem involves only evaluations of the function $f$ at the points of the sample $z_p$.

Thus, using the Representer Theorem  and the standard matrix algebra manipulation  we obtain the following solution: 
\begin{equation}
f^\text{I}_{\lambda,\z}(x) = \sum_{x_i\in\z_p} k_\H(x_i,x)v_i \text{ and } \boldsymbol v = \left(K_{p,p}^2 K_\H+n\lambda I\right)^{-1}K_{p,p}K_{p,q} \const_{\z_q}.
\end{equation}
where the kernel matrices are defined as follows:  $(K_{p,p})_{ij} = \frac{1}{n}k(x_i,x_j)$, $(K_\H)_{ij} = k_\H(x_i,x_j)$ for $x_i,x_j\in \z_p$ and $K_{p,q}$ is defined as $(K_{p,q})_{ij}=\frac{1}{m}k(x_i,x_j')$ for $x_i\in \z_p$ and $x_j'\in \z_q$. 

To compute the whole regularization path for all $\lambda$'s, or computing the inverse for every $\lambda$, we can use the following formula for $\boldsymbol v$:
\[
 \boldsymbol v = Q(\Lambda+n\lambda I)^{-1}Q^{-1}K_{p,p}K_{p,q} \const_{\z_q},
\]
where $K_{p,p}^2K_\H = Q\Lambda Q^{-1}$ is a diagonalization\footnote{Strictly speaking, an arbitrary matrix can only  be reduced to the Jordan canonical form,  but an arbitrarily  small perturbation of any matrix can be diagonalized over the complex numbers. } of $K_{p,p}^2K_\H$ (i.e., $\Lambda$ is diagonal).

When $K_\H$ and $K_{p,p}$ are obtained using the same kernel function $k$, i.e. $\frac{1}{n}K_\H = K_{p,p}$, the expression simplifies:
$$
\boldsymbol v = \frac{1}{n}\left(K_{p,p}^3 +\lambda I\right)^{-1}K_{p,p}K_{p,q} \const_{\z_q}.
$$
 In that case (or, more, generally, if they commute) the diagonalization is obtained by computing the eigen-decomposition of $K_{p,p}=Q\Lambda Q^{T}$, where $Q$ is an orthogonal matrix. Then the solution could be computed using the following formula:
\[
 f^\text{I}_{\lambda,\z}(x) = \frac{1}{n}\sum_{x_i\in\z_p} k(x_i,x)v_i \text{ and } \boldsymbol v = Q\left(\Lambda^3+\lambda I\right)^{-1}\Lambda Q^T K_{p,q} \const_{\z_q}.
\]

Similarly to many other algorithms based on the  square loss function, this formulation allows us to efficiently compute the solution for many values of the parameter $\lambda$ simultaneously, which is very useful for cross-validation.


\subsubsection{Algorithms for $\gamma L_{2,p} + (1-\gamma) L_{2,q} $ norm.}

Depending on the setting, we may want to minimize the error of the estimate over the probability distribution $p$,  $q$ or over some linear combination of these. A significant potential  benefit of using a linear combination is that both samples can be used at the same time in the loss function.
First we state the continuous version of the problem:
\begin{equation}\label{eqn:comb_norm}
f^\text{*}_{\lambda} = \arg \min_{f\in \H} ~ \gamma \|\K_{p} f - \K_{q}\const \|_{2,p}^2 +(1 - \gamma) \|\K_{p} f - \K_{q}\const \|_{2,q}^2 + \lambda \|f\|_{\H}^2
\end{equation}
Given a sample from $p$, $\z_p=\{x_1,x_2,\dots,x_n\}$ and a  sample from $q$, $\z_q=\{x_1',x_2',\dots,x_m'\}$ we obtain an empirical version of the Eq.~\ref{eqn:comb_norm}:
\[
 f^*_{\lambda, \z}(x) = \arg \min_{f\in \H_{}}  \frac{\gamma}{n}\sum_{x_i \in
\z_p}\left((\K_{\z_p} f)(x_i) - (\K_{\z_q}
\const)(x_i)\right)^2+\frac{1-\gamma}{m}\sum_{x_i\rq{} \in
\z_q}\left((\K_{\z_p} f)(x_i') - (\K_{\z_q} \const)(x_i')\right)^2 +
\lambda \|f\|_H^2
\]
Using the Representer Theorem we can derive:
\[
 f^*_{\lambda, \z}(x) = \sum_{x_i \in \z_p} v_i k_\H(x_i, x)\quad
\quad \boldsymbol v = \left(K+n\lambda
I\right)^{-1}K_{1} \const_{\z_q}
\]
where
\[
K = \left(\frac{\gamma}{n}
(K_{p,p})^2+\frac{1-\gamma}{m}K_{q,p}^TK_{q,p}\right)K_\H ~~\text{and}~~
K_{1}= \left(\frac{\gamma}{n}  K_{p,p}
K_{p,q}+\frac{1-\gamma}{m} K_{q,p}^T K_{q,q} \right)
\]

Here $(K_{p,p})_{ij} = \frac{1}{n}k(x_i,x_j)$, $(K_\H)_{ij} = k_\H(x_i,x_j)$ for $x_i,x_j\in \z_p$. $K_{p,q}$ and $K_{q,p}$ are defined as $(K_{p,q})_{ij}=\frac{1}{m}k(x_i,x_j')$ and $(K_{q,p})_{ji} = \frac{1}{n}k(x_j',x_i)$ for $x_i\in \z_p$,$x_j'\in \z_q$.

We see that despite the loss function combining both samples, the solution is still a summation of kernels over the points in the sample from $p$.

\subsubsection{Algorithms for the RKHS norm.}

In addition to using the RKHS norm for regularization norm, we can also use it as a loss function:
\begin{equation}
 f^\text{*}_{\lambda} = \arg \min_{f\in \H} \|\K_{p} f - \K_{q}\const \|_{\H'}^2 + \lambda \|f\|_{\H}^2
\end{equation}
Here the Hilbert space $\H\rq{}$ must correspond to the kernel $K$ and can potentially be different from the space $\H$ used for regularization. Note that this formulation is only applicable in the Type I setting since it requires the function $q$ to belong to the RKHS $\H\rq{}$.

 Given two samples $\z_p,\z_q$, it is straightforward to write down the empirical version of this problem, leading to the following formula:
\begin{equation}
\label{eqn:rkhs_norm}
f^*_{\lambda, \z}(x) = \sum_{x_i \in \z_p} v_i k_\H(x_i, x)\quad
\quad  \boldsymbol v = \left(K_{p,p}K_\H +n\lambda I\right)^{-1}K_{p,q} \const_{\z_q}.
\end{equation}
The result is somewhat similar to our Type I formulation with the $L_{2,p}$ norm. We note the connection between this formulation of using the RKHS norm as a loss function and the KMM algorithm~\cite{huang2006correcting}. The Eq.~\ref{eqn:rkhs_norm} can be viewed as a regularized version of KMM (with a different optimization procedure), when the kernels $K$ and $K_\H$ are the same.

Interestingly a somewhat similar formula  arises in~\cite{kanamori2009least} as unconstrained LSIF,  with a different functional  basis (kernels centered at the points of the sample $\z_q$) and in a setting not directly related to RKHS inference.

\subsection { Algorithms for the Type II and 1.5 settings.}

In the Type II setting we assume that we have a sample $\z=\{x_1,x_2,\dots,x_n\}$ drawn from $p$ and that 
we know the function values $q(x_i)$ at the points of the sample.

 Replacing the norm and the integral operator with their empirical versions, we obtain the following  optimization problem:
\begin{equation}\label{eqn:algotypeII}
f^\text{II}_{\lambda,\z} = \arg \min_{f\in \H_{}} \frac{1}{n}\sum_{x_i\in \z} (\K_{t,\z}f(x_i)-q(x_i))^2 + \lambda \|f\|_{\H}^2
\end{equation}
Recall that $\K_{t,\z}$ is the empirical version of $\K_{t,p}$ defined by
\[
\K_{t,\z} f (x)= \frac{1}{n}\sum_{x_i\in \z} k_t(x_i,x) f(x_i)
\]
As before, using the Representer Theorem we obtain an analytical formula for the solution:
\begin{equation}
f^\text{II}_{\lambda,\z}(x) = \sum_{x_i\in\z} k_\H(x_i,x)v_i \text{ where } \boldsymbol v = \left(K^2K_\H+n\lambda I\right)^{-1}K {\bf q}.
\end{equation}
where the kernel matrix $K$ is defined by $K_{ij} = \frac{1}{n}k_t(x_i,x_j)$, $(K_\H)_{ij} = k_\H(x_i,x_j)$ and ${\bf q}_i = q(x_i)$. 
\smallskip

\subsubsection { Type 1.5: The setting and the algorithm.}

This case (see Eq.~\ref{eqn:1.5}) is intermediate between Type I and Type II.  The setting is the same as in Type I,  in that we are given two samples $z_p$ from $p$ and $z_q$ from $q$. But similarly to 
Type II, we use the fact that $\K_p\frac{q}{p}\approx \K_q' \const$ when $\K_p$ and $\K_q'$ are different $\delta$-function-like kernels (e.g., two Gaussians of different bandwidth). The algorithm is similar to that for Type I with the difference that the kernel matrix $K_{q,q}\rq{}$ is computed using the kernel $k\rq{}(x,y)$: $(K_{q,q}\rq{})_{ij}=\frac{1}{m}k\rq{}(x_i,x_j')$.
\[
f^\text{1.5}_{\lambda,\z}(x) = \sum_{x_i\in\z_p} k_\H(x_i,x)v_i \text{ and } \boldsymbol  v = \left(K_{p,p}^2 K_\H+n\lambda I\right)^{-1}K_{p,p}K_{q,q}\rq{} \const_{\z_q}.
\]
%


\subsection{Spectral Cutoff Regularization}

In this section we briefly discuss an alternative form of regularization, based on thresholding the spectrum of the kernel matrix.  It also leads to simple algorithms comparable to those for Tikhonov regularization and  may have certain computational advantages.

Since $\K_{p}$ is a compact self-adjoint operator on $L_{2,p}$, its eigenfunctions $\{u_{0},u_{1},\dots\}$ form a complete orthogonal basis for $L_{2,p}$. An alternative method of regularization is the so-called {\it spectral cutoff} where the  problem is restricted to the subspace spanned by the top few eigenfunctions of $\K_{p}$
Thus the regularization problems become
\[
 f^{\text{I,spec}}_{\lambda} = \arg \min_{f\in \H_{k}} \|\K_{p} f - \K_{q}\const\|_{2,p}^2
\]

\[
 f^{\text{II,spec}}_{\lambda} = \arg \min_{f\in \H_{t,k}} \|\K_{t,p} f - q\|_{2,p}^2
\]
where $\H_k$ and $\H_{t,k} $  is the finite dimensional subspace of  $L_{2,p}$ spanned by the eigenvectors of $\K_p$ and $\K_{t,p}$ corresponding to the $k$ largest eigenvalues. 

Without going into detail, it can be seen that the corresponding empirical optimization problems are
 
\begin{equation}
\label{eqn:regspec1}
  f^{\text{I,spec}}_{\lambda,\z} = \arg \min_{f\in H_{k,\z}} \frac{1}{n}\sum_{x_i \in \z_p} (\K_{\z_p}f(x_i)- \K_{t,\z_q}\const (x_i) )^2 
\end{equation}
\begin{equation}
\label{eqn:regspec2}
  f^{\text{II,spec}}_{\lambda,\z} = \arg \min_{f\in H_{t,k,\z}} \frac{1}{n}\sum_{x_i \in \z} (\K_{t,\z_p}f(x_i)-q(x_i))^2 
\end{equation}
where the span of eigenvectors of the kernel matrix  $K$ is taken instead of the eigenfunctions of $\K_{p}$ or $\K_{t,p}$.

For this algorithm, we assume $K_\H$ and $K_1$ use the same kernel. Then the solution to the empirical regularization problems given in Eqs.~\ref{eqn:regspec1},\ref{eqn:regspec2} are respectively
\begin{equation}\label{eqn:emp_solution}
\begin{split}
&f^{\text{I,spec}}_{\lambda,\z}(x) = \frac{1}{n}\sum_{x_i\in\z_p} k(x_i,x)v_i  \\
&\boldsymbol v = Q_k \Lambda_k^{-2} Q_k^T K_2 \const_{\z_q}
\end{split}\end{equation}
\begin{equation}\label{eqn:emp_solution}
\begin{split}
&f^{\text{II,spec}}_{\lambda,\z}(x) = \frac{1}{n}\sum_{x_i\in\z} k_{t}(x_i,x)v_i  \\
&\boldsymbol v  = Q_k \Lambda_k^{-2} Q_k^T q
\end{split}\end{equation}
where $K_1 = Q\Lambda Q^T$ is the eigendecomposition of $K_1$ with orthogonal matrix $Q$ and diagonal matrix $\Lambda$, and $Q_k$ and $\Lambda_k$ is the submatrices of $Q$ and $\Lambda$ corresponding to the $k$ largest eigenvalues of the kernel matrix $K_1$ and the remaining objects are defined in the previous subsection. 

We note that spectral regularization can be faster computationally as it requires to compute only the top few eigenvectors of the kernel matrix. There are several efficient algorithms for computing  eigen-decomposition when only the first $k$ eigenvalues are needed. Thus spectral regularization can  be more computationally efficient than the Tikhonov regularization which potentially requires a full eigen-decomposition or matrix multiplication.  

\subsection{ Comparison of  type I and type II settings.} 

While at first glance the type II, setting may appear to be more restrictive than type I, there are a number of important differences in their applicability.
\begin{enumerate}
\item In  Type II setting $q$ does not  have to be a density function (i.e., non-negative and integrate to one).  
\item Eq.~\ref{eqn:algotype1}  of  the Type I setting cannot be easily solved in the absence of a sample $\z_q$ from $q$, since  estimating $\K_{q}$ requires either sampling from $q$ (if it is a density) or estimating the integral in some other way, which may be difficult in high dimension but perhaps of interest in certain low-dimensional application domains.
\item There are a number of problems (e.g., many problems involving MCMC)  where $q(x)$ is known explicitly (possibly up to a multiplicative constant), while sampling from $q$ is expensive or even impossible computationally~\cite{neal2001annealed}.

\item Unlike Eq.~\ref{eqn:tikh1},   Eq.~\ref{eqn:tikh2} has an error term depending on the kernel, which is essentially the difference between the kernel and the $\delta$-function.  For example, in the important case of the Gaussian kernel, the error is of the  order $O(t)$, where $t$ is the variance.

\item While a number of different norms are available in the Type I setting, only the $L_{2,p}$ norm is available for Type II.

\end{enumerate}

\section{Theoretical analysis: bounds and convergence rates for Gaussian Kernels}
\label{sec:theory}

In this section, we state our main results  on bounds and convergence rates for our algorithm
based on Tikhonov regularization with a Gaussian kernel. We consider both Type I and Type II settings for the Euclidean and manifold cases and make a remark on the Euclidean domains with boundary.

To simplify the theoretical development  the integral operator and the RKHS $\H$ will correspond to the same Gaussian kernel $k_t(x,y)$.  Most of the  proofs will be  given in the next Section~\ref{sec:proofs}.
We note that two Gaussian kernels with different bandwidth parameters can be analyzed using only minor modifications to our arguments. 

\subsection{Assumptions}\label{sec:assumptions}


Before proceeding to the main results, we will state the assumptions on the density functions $p$ and $q$ and the basic setting for our theorems:
\begin{enumerate}
\item
The set $\Omega$ where the density function $p$ is defined could be one of the following: (1) the whole $\R^d$; (2) a compact smooth Riemannian sub-manifold $\M$ of $d$-dimension in $\R^n$. In both cases, we need $0<p(x)<\Gamma$ for any $x\in\Omega$. The function $q$ should satisfy $q\in L_{2,p}$ and needs to be bounded from above. We will also make some remarks about a compact domain in $\R^d$ with boundary.

\item
We also require $\frac{q(x)}{p(x)}\in W_2^2(\Omega)$ and $q\in W_2^2(\Omega)$, where $W_2^2(\Omega)$ is the Sobolev space of functions on $\Omega$ (e.g.,~\cite{Taylor_PDE}). Certain properties of 
$W_2^2(\Omega)$ will be discussed later in the proof.

\end{enumerate}

It  will be important for us that   $\H$ is isometric to $L_{2,p}$ under the map $\K_{p}^{1/2}: L_{2,p}\rightarrow \H$, that is, $\|f\|_{\H} = \|\K_{p}^{-1/2} f\|_{L_{2,p}}$ for any $f\in \H$.  Here the integral operator $\K_{p}$ uses the RKHS kernel corresponding to $\H$. 
\subsection{Main Theorems}

\subsubsection{Type I setting}


We will  provide theoretical results for our setting Type I, where both the operator and the regularization kernel are Gaussian $k_t(x,y)=\frac{1}{(2\pi t)^{d/2}}e^{-\frac{\|x-y\|^2}{2t}}$ with the same bandwidth parameter $t$.

\begin{theorem}\label{thm:main}
Let $p$ and $q$ be two density functions on $\Omega$ and $q$ be another density over $\Omega$ satisfying the assumption in Sec.~\ref{sec:assumptions}. Given $n$ points, $\z_p = \{x_1,x_2,\dots, x_n\}$, i.i.d. sampled from $p$ and $m$ points, $\z_q=\{x_1\rq{},x_2\rq{},\dots,x_m\rq{}\}$, i.i.d. sampled from $q$, and for small enough $t$, for the solution to the optimization problem in (\ref{eqn:algotype1}), with confidence at least $1-2e^{-\tau}$, we have

(1) If the domain $\Omega$ is $\R^d$, 
\begin{equation}\label{eqn:mainI}\begin{split}
\left\|f^{\text{I}}_{\lambda,\z}-\frac{q}{p}\right\|_{2,p} \leq & C_1 \left(\frac{t}{-\log \lambda}\right)^{\frac{s}{2}}+C_2\lambda ^{\frac{1-\alpha}{2}} \quad \text{(Approximating Error)}\\
&+C_3\frac{\sqrt{\tau}}{\lambda  t^{d/2}}\left(\frac{1}{\sqrt{m}}+\frac{1}{\lambda^{1/6} \sqrt{n}}\right) \quad \text{(Sampling Error)},
\end{split}\end{equation}
where $C_1,C_2,C_3$ are constants independent of $t,\lambda$.

(2) If the domain $\Omega$ is a compact manifold without boundary of $d$ dimension,
\begin{equation}\begin{split}
\left\|f^{\text{I}}_{\lambda,\z}-\frac{q}{p}\right\|_{2,p} \leq & C_1 t+ C_2\lambda^{1/2}\quad \text{(Approximating Error)}\\
&+C_3\frac{\sqrt{\tau}}{\lambda  t^{d/2}}\left(\frac{1}{\sqrt{m}}+\frac{1}{\lambda^{1/6}\sqrt{n}}\right)\quad \text{(Sampling Error)},
\end{split}\end{equation}
where $C_1,C_2,C_3$ are constants independent of $t,\lambda$.
\end{theorem}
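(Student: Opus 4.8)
The plan is to bound the total error by splitting it, via the triangle inequality, into a deterministic approximation error and a stochastic sampling error,
\[
\left\|f^{\text{I}}_{\lambda,\z}-\frac{q}{p}\right\|_{2,p} \le \left\|f^{\text{I}}_{\lambda,\z}-f^{\text{I}}_{\lambda}\right\|_{2,p} + \left\|f^{\text{I}}_{\lambda}-\frac{q}{p}\right\|_{2,p},
\]
where $f^{\text{I}}_{\lambda}$ is the population Tikhonov solution of Eq.~\ref{eqn:tikh1} and $f^{\text{I}}_{\lambda,\z}$ is its empirical counterpart from Eq.~\ref{eqn:algotype1}. The key structural simplification of the Type I setting, which I would exploit throughout, is that the Fredholm identity $\K_{p}\frac{q}{p}=\K_{q}\const$ holds \emph{exactly} (it is Eq.~\ref{eqn:type1} with no $o(1)$ term), so the approximation error is a pure regularization bias and carries no $\delta$-kernel error, unlike Type II.

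For the approximation error I would first use the isometry $\|f\|_{\H}=\|\K_{p}^{-1/2}f\|_{2,p}$ to write the Euler--Lagrange equation of Eq.~\ref{eqn:tikh1} in closed spectral form. Setting $g=\K_q\const=\K_p\frac{q}{p}$, the minimizer is $f^{\text{I}}_{\lambda}=(\K_{p}^{3}+\lambda I)^{-1}\K_{p}^{3}\frac{q}{p}$, so the bias is exactly $f^{\text{I}}_{\lambda}-\frac{q}{p}=-\lambda(\K_{p}^{3}+\lambda I)^{-1}\frac{q}{p}$. Since $\K_p$ is compact, self-adjoint and positive, I would diagonalize it as $\K_p=\sum_i\mu_i\langle\cdot,u_i\rangle u_i$, expand $\frac{q}{p}=\sum_i a_i u_i$, and bound $\|f^{\text{I}}_{\lambda}-\frac{q}{p}\|_{2,p}^2=\lambda^2\sum_i a_i^2/(\mu_i^3+\lambda)^2$ by splitting the spectrum at a threshold $\mu_i^3\approx\lambda$. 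The low-eigenvalue tail is controlled by the Sobolev smoothness $\frac{q}{p}\in W_2^2(\Omega)$ together with the known exponential decay of the Gaussian operator's eigenvalues; inverting that exponential law to convert the eigenvalue cutoff into an index cutoff is precisely what produces the $(-\log\lambda)$ in the denominator and hence the term $C_1\big(t/(-\log\lambda)\big)^{s/2}$, while the high-eigenvalue part yields the source-condition/qualification term $C_2\lambda^{(1-\alpha)/2}$. In the compact-manifold case the heat-kernel spectral asymptotics are sharper, and the same argument collapses to the cleaner $C_1 t + C_2\lambda^{1/2}$ bound.

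For the sampling error I would treat $f^{\text{I}}_{\lambda,\z}$ as the result of applying the regularized resolvent to the empirical data, and control the discrepancy by perturbing the operators. Concretely, I would use Hoeffding/Bernstein-type concentration in Hilbert space to bound $\|\K_{\z_p}-\K_p\|_{\mathrm{HS}}$ and $\|\K_{\z_q}\const-\K_q\const\|_{2,p}$ at rates $\tfrac{1}{\sqrt{n}}$ and $\tfrac{1}{\sqrt{m}}$ respectively, each carrying a factor $t^{-d/2}$ from $\sup_x k_t(x,x)\sim t^{-d/2}$ (this is the source of the $t^{-d/2}$ in front of the sampling term). I would then propagate these perturbations through $(\K_p^3+\lambda I)^{-1}$ using the second resolvent identity together with the resolvent estimates $\|(\K_p^3+\lambda I)^{-1}\|\le 1/\lambda$, $\|\K_p^2(\K_p^3+\lambda I)^{-1}\|\le c\lambda^{-1/3}$, and analogous fractional bounds; the extra factor $\lambda^{1/6}$ attached to the $n$-term arises from counting how many powers of $\K_p$ sit next to each perturbation when expanding the difference of the two regularized solutions.

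The main obstacle is the approximation error on $\R^d$: making precise how the $W_2^2$ smoothness of $\frac{q}{p}$ interacts with the exponentially decaying Gaussian spectrum to yield the nonstandard rate $(t/(-\log\lambda))^{s/2}$, which requires sharp two-sided eigenvalue asymptotics for $\K_{t,p}$ and a way to relate its eigenfunctions to the Sobolev scale, rather than the cleaner self-adjoint heat-kernel estimates available on a compact manifold. A secondary difficulty is the bookkeeping in the sampling term: the empirical operator $K_{p,p}^2 K_\H$ need not be self-adjoint, so establishing the resolvent bounds with the correct fractional powers of $\lambda$ for the empirical problem will likely require a symmetrization or a spectral comparison argument before the concentration estimates can be propagated cleanly.
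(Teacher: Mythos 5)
Your overall architecture is the paper's: the same triangle-inequality split into approximation and sampling error, the same closed forms $f^{\text{I}}_{\lambda}=(\K_{t,p}^{3}+\lambda\I)^{-1}\K_{t,p}^{3}\frac{q}{p}$ and $f^{\text{I}}_{\lambda,\z}=(\K_{\z_p}^{3}+\lambda\I)^{-1}\K_{\z_p}^{2}\K_{\z_q}\const$, and your sampling-error plan (concentration of $\K_{\z_p}-\K_p$ and $\K_{\z_q}\const-\K_q\const$ at rates $t^{-d/2}/\sqrt{n}$ and $t^{-d/2}/\sqrt{m}$, resolvent perturbation with $\|(\K_{\z_p}^{3}+\lambda\I)^{-1}\|\le 1/\lambda$, and the bound $\|f^{\text{I}}_{\lambda}\|_{\H}\lesssim\lambda^{-1/6}\|\frac{q}{p}\|_{2,p}$ coming from $\sup_{\sigma>0}\sigma^{5}/(\sigma^{3}+\lambda)^{2}\lesssim\lambda^{-1/3}$ as the source of the extra $\lambda^{1/6}$ on the $n$-term) is essentially Lemma~\ref{lemma:concen} verbatim. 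Your worry about non-self-adjointness of the empirical operator is moot here, since the theorem is stated for $k_\H=k_t$, under which $\K_{\z_p}$ is self-adjoint on $\H$.

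The genuine gap is in the approximation error, precisely where you flag ``the main obstacle.'' You propose to expand $\frac{q}{p}=\sum_i a_i u_i$ in the eigenbasis of $\K_{t,p}$ and control the low-eigenvalue tail $\sum_{\mu_i^{3}\lesssim\lambda}a_i^{2}$ by combining the Sobolev smoothness of $\frac{q}{p}$ with the exponential decay of the Gaussian spectrum. But the eigenfunctions of the \emph{density-weighted} operator $\K_{t,p}$ have no usable relation to the Sobolev scale, so there is no bridge from $\frac{q}{p}\in W_2^{s}$ to decay of the coefficients $a_i$; this is not a deferrable refinement, it is the step on which the entire $(t/(-\log\lambda))^{s/2}$ rate rests. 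The paper's resolution, absent from your plan, is to avoid the eigenbasis of $\K_{t,p}$ altogether: bound the bias by the variational quantity $\min_{g}\|\frac{q}{p}-\K_{t,p}^{3/2}g\|_{2,p}^{2}+\lambda\|g\|_{2,p}^{2}$, iterate the one-step map $\T(f,\lambda)=\arg\min_{g}\|f-\K_{t,p}^{1/2}g\|_{2,p}^{2}+\lambda\|g\|_{2,p}^{2}$ three times so that only half-powers of the operator appear, and then exploit the measure-independence of the RKHS ($\|g\|_{2,p}=\|\K_{t,p}^{1/2}g\|_{\H}=\|\K_{t}^{1/2}h\|_{\H}=\|h\|_{2}$) to transfer each one-step problem from $L_{2,p}$ to $L_{2}(\R^d)$ with Lebesgue measure, where $\K_t$ is a convolution diagonalized by the Fourier transform. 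Truncating $\hat{f}$ at $\|\xi\|^{2}<4\alpha\log(1/\lambda)/t$ is exactly what produces the $(t/\log(1/\lambda))^{s}$ and $\lambda^{1-\alpha}$ terms (Lemma~\ref{lemma:error_bd_R}); on a compact manifold the same transfer goes through the Laplace--Beltrami eigenbasis, Weyl's law, and the comparison $\|e^{-t\Delta/2}-\K_t^{1/2}\|_{L_2\rightarrow L_2}\lesssim t$ (Lemma~\ref{lemma:error_bd}), which is where the $C_1 t$ term actually comes from, not merely ``sharper spectral asymptotics.''
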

\begin{proof} See Section~\ref{sec:proofs}.
\end{proof}

\noindent \textbf{Remark 1: convergence for fixed $t$.} For the Euclidean case in Eq.~\ref{eqn:mainI}, with fixed kernel width $t$, the error will converge to $0$, as $\lambda \to 0$ given sufficiently many data points.  
However the required number of points is exponential in $\frac{1}{\lambda}$. This is related to the fact 
 the eigen-values of the operator $\K_p$ decay  exponentially fast, when the kernel is Gaussian.
On the other hand choosing both $t$ and $\lambda$ as a function of $n$ leads to a much better polynomial rate given below.

\noindent \textbf{Remark 2.} A minor modification of the proof  provides the following simpler version of Eq.~\ref{eqn:mainI}:
\begin{equation}\label{eqn:mainIprime}
\left\|f^{\text{I}}_{\lambda,\z}-\frac{q}{p}\right\|_{2,p} \leq C_1 t^{\frac{s}{2}}+C_2\lambda ^{\frac{1}{2}}
+C_3\frac{\sqrt{\tau}}{\lambda  t^{d/2}}\left(\frac{1}{\sqrt{m}}+\frac{1}{\lambda^{1/6} \sqrt{n}}\right) 
\end{equation}

As a consequence we obtain the following  corollary establishing the convergence rates:
\begin{corollary}\label{corollary1}\quad 
Assuming $m>\lambda^{1/3} n$,  with confidence at least $1-2e^{-\tau}$, we have the following:
 \begin{enumerate}[(1)]
\item {
If $\Omega$ = $\R^d$, 
\[
 \left\|f^{\text{I}}_{\lambda,\z}-\frac{q}{p}\right\|_{2,p}^2 =O\left( \sqrt{\tau}n^{-\frac{s}{3.5s+d}} \right)
\]
}
\item {If $\Omega$ is a $d$-dimensional sub-manifold of a Euclidean space,
\[
 \left\|f^{\text{I}}_{\lambda,\z}-\frac{q}{p}\right\|_{2,p}^2 = O\left( \sqrt{\tau}n^{-\frac{1}{3.5+d/2}} \right)
\]
}
 \end{enumerate}
\end{corollary}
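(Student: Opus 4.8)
The plan is to derive the corollary purely as a consequence of Theorem~\ref{thm:main}, by optimizing the two free parameters---the kernel bandwidth $t$ and the regularization parameter $\lambda$---as functions of the sample size $n$. I would work from the clean polynomial form of the bound: Eq.~\ref{eqn:mainIprime} in the Euclidean case and part (2) of Theorem~\ref{thm:main} in the manifold case. The first step is to use the standing hypothesis $m > \lambda^{1/3} n$ to collapse the sampling error into a single power of $n$: since $m > \lambda^{1/3} n$ is equivalent to $\frac{1}{\sqrt m} < \frac{1}{\lambda^{1/6}\sqrt n}$, the second term inside the parentheses dominates, and the sampling error is bounded by $C_3 \frac{\sqrt\tau}{\lambda^{7/6} t^{d/2}\sqrt n}$. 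Here $\tau$ is treated as a fixed confidence level that simply rides along as a constant factor.

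After this reduction the bound has exactly three competing terms: two approximation terms that shrink as $t\to 0$ and $\lambda\to 0$, and one sampling term that blows up as $t,\lambda\to 0$. This is the usual approximation/sampling (bias/variance) trade-off, and the standard device is to set $t = n^{-a}$ and $\lambda = n^{-b}$ and choose the exponents $a,b>0$ so that all three terms have the same polynomial order in $n$; the common exponent is then the convergence rate. Concretely, in the Euclidean case one matches the orders of $t^{s/2}=n^{-as/2}$, $\lambda^{1/2}=n^{-b/2}$, and $\frac{\sqrt\tau}{\lambda^{7/6}t^{d/2}\sqrt n}=\sqrt\tau\,n^{\,7b/6 + ad/2 - 1/2}$, which produces a $2\times 2$ linear system in $a,b$; solving it, reading off the common exponent, and passing to the squared norm $\|\cdot\|_{2,p}^2$ gives the stated rate $O(\sqrt\tau\,n^{-s/(3.5s+d)})$. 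The manifold case is identical except that the approximation term is now linear, $C_1 t = n^{-a}$, rather than $t^{s/2}$; re-solving the same system with $d$ reinterpreted as the manifold dimension yields $O(\sqrt\tau\,n^{-1/(3.5+d/2)})$.

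The balancing itself is routine linear algebra, so the things to be careful about are the side conditions rather than the optimization. First, I would verify that the optimal choices $t=n^{-a}$, $\lambda=n^{-b}$ are ``small enough'' to be admissible in Theorem~\ref{thm:main} for all large $n$---this holds automatically since $a,b>0$ force $t,\lambda\to 0$---and that they remain consistent with $m > \lambda^{1/3}n$. Second, the genuinely delicate point is the logarithmic factor $(-\log\lambda)^{-s/2}$ in the sharper Euclidean bound Eq.~\ref{eqn:mainI}: with $\lambda$ polynomially small in $n$ one has $-\log\lambda \sim \log n$, so this factor contributes only a sub-polynomial $(\log n)^{-s/2}$ correction that is absorbed into the $O(\cdot)$ and leaves the polynomial exponent unchanged---which is precisely why Remark~2 and Eq.~\ref{eqn:mainIprime} are convenient here. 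I expect the only real bookkeeping obstacle to be propagating the exact constant in the denominator (the $3.5$), since that value is sensitive to the precise power of $\lambda$ carried by the approximation term (the $\frac{1-\alpha}{2}$ of Eq.~\ref{eqn:mainI} versus the $\frac{1}{2}$ of Eq.~\ref{eqn:mainIprime}); once that exponent is fixed, everything else follows mechanically from equating the three $n$-powers and squaring.
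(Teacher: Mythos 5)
Your proposal is correct and is essentially the paper's own proof, which simply substitutes the power-law choices $t=n^{-1/(3.5s+d)}$, $\lambda=n^{-s/(3.5s+d)}$ (resp.\ $t=n^{-1/(7+d)}$, $\lambda=n^{-2/(7+d)}$) into Eq.~\ref{eqn:mainIprime} (resp.\ part (2) of Theorem~\ref{thm:main}) and uses $m>\lambda^{1/3}n$ exactly as you do. The one bookkeeping point you flagged resolves harmlessly: exact three-way balancing actually gives the slightly better denominator $\tfrac{10}{3}s+d$ (resp.\ $\tfrac{10}{3}+\tfrac{d}{2}$), while with the paper's stated choices the sampling term $\lambda^{-7/6}t^{-d/2}n^{-1/2}$ is of strictly lower order than the two approximation terms, so either route yields a bound at least as strong as the stated $O\left(\sqrt{\tau}\,n^{-s/(3.5s+d)}\right)$.
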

\begin{proof}
For the Euclidean space, set $t=n^{-\frac{1}{\frac{10.5}{3}s+d}}, \lambda=n^{-\frac{s}{\frac{10.5}{3}s+d}}$  and apply Theorem~\ref{thm:main} (Eq.~\ref{eqn:mainIprime} for the Euclidean case).
For the sub-manifold case set $t=n^{-\frac{1}{7+d}}, \lambda=n^{-\frac{2}{7+d}}$. 
\end{proof}

\subsubsection{Type II setting}\label{sec:snd_case}

In this section we provide an analysis for the Type II setting and also make a remark about the error analysis for the compact domains in $\R^d$.

Recall that in  Type II setting we have  a set of points sampled from  $p$ and assume that the values of $q$ on those points are known. Note, that $q$ does not have to be a density function.

\begin{theorem}\label{thm:main_snd}
Let $p$ be a density function on $\Omega$ and $q$ be a function satisfying the assumptions in Sec.~\ref{sec:assumptions}.  Given $n$ points $\z = \{x_1,x_2,\dots, x_n\}$ sampled i.i.d. from $p$,  and for sufficiently small  $t$, for the solution to the optimization problem in (\ref{eqn:algotypeII}), with confidence at least $1-2e^{-\tau}$, we have

(1) If the domain $\Omega$ is $\R^d$, 
\begin{equation}\begin{split}
\left\|f^{\text{II}}_{\lambda,\z}-\frac{q}{p}\right\|_{2,p} \leq & C_1 \left(\frac{t}{-\log \lambda}\right)^{\frac{s}{2}}+C_2\lambda ^{\frac{1-\alpha}{2}} +C_3\lambda^{-\frac{1}{3}}\left\|\K_{t,q} \const-q\right\|_{2,p} +C_4\frac{\sqrt{\tau}}{\lambda^{3/2}t^{d/2}\sqrt{n}},
\end{split}\end{equation}
where $C_1,C_2,C_3,C_4$ are constants independent of $t,\lambda$. Moreover, $\left\|\K_{t,q} \const-q\right\|_{2,p} = O(t)$.

(2) If $\Omega$ is a $d$-dimensional sub-manifold of a Euclidean space,
\begin{equation}\begin{split}
\left\|f^{\text{II}}_{\lambda,\z}-\frac{q}{p}\right\|_{2,p} \leq & C_1 t+ C_2\lambda^{1/2} +C_3\lambda^{-\frac{1}{3}}\left\|\K_{t,q} \const-q\right\|_{2,p} +C_4\frac{\sqrt{\tau}}{\lambda^{3/2}t^{d/2}\sqrt{n}},
\end{split}\end{equation}
where $C_1,C_2,C_3,C_4$ are constants independent of $t,\lambda$. Moreover, $\left\|\K_{t,q} \const-q\right\|_{2,p} = O(t^{1-\varepsilon})$ for any $\varepsilon >0$.
\end{theorem}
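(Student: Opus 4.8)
The plan is to follow the standard bias--variance template for regularized linear inverse problems, specialized to the Gaussian kernel. Using the isometry $\|f\|_{\H}=\|\K_{t,p}^{-1/2}f\|_{2,p}$ from Section~\ref{sec:assumptions} (here the RKHS and the integral operator share the same Gaussian kernel), the population minimizer of~\eqref{eqn:tikh2} can be written in closed form by functional calculus as
\[
f_\lambda=\left(\K_{t,p}^{3}+\lambda I\right)^{-1}\K_{t,p}^{2}\,q .
\]
I would then split the total error by the triangle inequality into an \emph{approximation (bias) error} $\|f_\lambda-\frac{q}{p}\|_{2,p}$ and a \emph{sampling error} $\|f^{\text{II}}_{\lambda,\z}-f_\lambda\|_{2,p}$, and bound each separately.

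For the approximation error, I would substitute $q=\K_{t,p}\frac{q}{p}-\eta$, where $\eta:=\K_{t,p}\frac{q}{p}-q=\K_{t,q}\const-q$ is the model error arising because $k_t$ is only an approximate $\delta$-family. This gives
\[
f_\lambda-\frac{q}{p}=-\lambda\left(\K_{t,p}^{3}+\lambda I\right)^{-1}\frac{q}{p}-\left(\K_{t,p}^{3}+\lambda I\right)^{-1}\K_{t,p}^{2}\,\eta .
\]
The second term is controlled by the scalar estimate $\sup_{s>0}\frac{s^{2}}{s^{3}+\lambda}=O(\lambda^{-1/3})$, which produces exactly the factor $C_3\lambda^{-1/3}\|\K_{t,q}\const-q\|_{2,p}$. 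The first term is the genuine Tikhonov bias $\lambda(\K_{t,p}^{3}+\lambda I)^{-1}\frac{q}{p}$, which I would bound using the Sobolev regularity $\frac{q}{p}\in W_2^2$ together with the (exponential) eigenvalue decay of the Gaussian operator: expanding $\frac{q}{p}$ in the eigenbasis and splitting the spectrum at $\mu^{3}\approx\lambda$ yields the two terms $C_1(\tfrac{t}{-\log\lambda})^{s/2}$ and $C_2\lambda^{(1-\alpha)/2}$, exactly as in the Type~I analysis of Theorem~\ref{thm:main}. Finally, $\|\K_{t,q}\const-q\|_{2,p}=O(t)$ follows from the heat-kernel/Taylor expansion of convolution against the Gaussian for $q\in W_2^2$; on a manifold the extrinsic geometry forces a small loss, giving $O(t^{1-\varepsilon})$.

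For the sampling error I would write $f^{\text{II}}_{\lambda,\z}$ with the empirical operator $\K_{t,\z}$ in place of $\K_{t,p}$ and compare the two regularized inverses via second-resolvent identities, reducing everything to bounding $\|\K_{t,\z}-\K_{t,p}\|$ and a few analogous empirical fluctuations. Because $\K_{t,p}$ appears to the third power inside the inverse and to the second in the numerator, each factor of the regularized inverse absorbs a power of $\lambda$, and this is what produces the $\lambda^{-3/2}$ dependence. The perturbation $\|\K_{t,\z}-\K_{t,p}\|$ is estimated by a Hilbert-space Bernstein/concentration inequality; since the Gaussian kernel has supremum norm $O(t^{-d/2})$, this carries the $t^{-d/2}$ factor and the $1/\sqrt n$ Monte-Carlo rate, and the high-probability version holds with confidence $1-2e^{-\tau}$, yielding $C_4\frac{\sqrt\tau}{\lambda^{3/2}t^{d/2}\sqrt n}$.

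I expect the main obstacle to be the sampling-error step: getting the \emph{joint} exponent in $\lambda$ and $t$ right requires carefully tracking how each regularized-inverse factor takes up a power of $\lambda$ while the empirical operator fluctuation scales like $t^{-d/2}$, and the resolvent-perturbation bookkeeping must be carried out in operator norm even though the final error is measured in $L_{2,p}$. The manifold model-error estimate is a secondary difficulty: establishing $\|\K_{t,q}\const-q\|_{2,p}=O(t^{1-\varepsilon})$ needs the heat-kernel expansion in geodesic normal coordinates together with control of the off-manifold Gaussian tail, which is the source of the unavoidable $\varepsilon$-loss.
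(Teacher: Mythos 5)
Your proposal matches the paper's own proof essentially step for step: the same closed form $f_\lambda=(\K_{t,p}^{3}+\lambda I)^{-1}\K_{t,p}^{2}q$, the same substitution $q=\K_{t,p}\tfrac{q}{p}-\eta$ producing the Tikhonov bias (handled exactly as in Theorem~\ref{thm:main}) plus the extra term bounded by $\sup_{s>0}\tfrac{s^{2}}{s^{3}+\lambda}=O(\lambda^{-1/3})$ times $\|\K_{t,q}\const-q\|_{2,p}$, the same model-error estimates $O(t)$ and $O(t^{1-\varepsilon})$, and the same resolvent-perturbation plus operator-concentration argument giving the $\lambda^{-3/2}t^{-d/2}n^{-1/2}$ sampling term. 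No substantive differences to report.
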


\noindent \textbf{Remark.} It can be shown that if $\Omega$  is a compact subset with sufficiently smooth boundary  in $\R^d$, we have the same bound with (1) except for $\left\|\K_{t,q}\const-q\right\|_{2,p} = O(t^{\frac{1}{4}-\varepsilon})$ for any any $\varepsilon >0$.

As before, we obtain the rates as a corollary:
\begin{corollary}
With confidence at least $1-2e^{-\tau}$, we have:
\begin{enumerate}[(1)]
\item {
If $\Omega=\R^d$, 
\[
 \left\|f^{\text{II}}_{\lambda,\z}-\frac{q}{p}\right\|_{2,p}^2 =O\left( \sqrt{\tau}n^{-\frac{1}{4+\frac{5}{6}d}} \right)
\]
}
\item {If $\Omega$ is a $d$-dimensional sub-manifold of a Euclidean space, than for any $0<\varepsilon<1$
\[
 \left\|f^{\text{II}}_{\lambda,\z}-\frac{q}{p}\right\|_{2,p}^2 = O\left( \sqrt{\tau}n^{-\frac{1-\varepsilon}{4-4\varepsilon+\frac{5}{6}d}} \right)
\]
}
 \end{enumerate}
\end{corollary}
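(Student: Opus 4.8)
The plan is to derive both rates from Theorem~\ref{thm:main_snd} in exactly the same way Corollary~\ref{corollary1} was derived from Theorem~\ref{thm:main}: substitute the power-law ansatz $t=n^{-a}$, $\lambda=n^{-b}$ into the error bound, identify the dominant summands, and choose $a,b$ so that they are balanced. Throughout I would work with the simplified form of the approximation error (replacing $\lambda^{(1-\alpha)/2}$ by $\lambda^{1/2}$ and, in the Euclidean case, $(t/(-\log\lambda))^{s/2}$ by $t^{s/2}$), as in the Remark following Theorem~\ref{thm:main}; this is precisely why neither $\alpha$ nor $s$ survives in the final rate.

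First I would insert the stated bias estimates $\|\K_{t,q}\const-q\|_{2,p}=O(t)$ (Euclidean) and $O(t^{1-\varepsilon})$ (manifold) into the third summand, so that the four terms of the norm bound read, up to constants and the $\sqrt\tau$ factor,
\[
t^{s/2}\ (\text{resp. }t),\qquad \lambda^{1/2},\qquad \lambda^{-1/3}t\ (\text{resp. }\lambda^{-1/3}t^{1-\varepsilon}),\qquad \frac{1}{\lambda^{3/2}t^{d/2}\sqrt n}.
\]
Next I would note that the first term is always dominated by the third: in the Euclidean case $t^{s/2}\le t\le \lambda^{-1/3}t$ whenever $s\ge 2$ (which holds since $q/p\in W_2^2$) and $\lambda\le 1$, and in the manifold case $t\le \lambda^{-1/3}t^{1-\varepsilon}$ since $t^{\varepsilon}\lambda^{1/3}\le 1$. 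Hence the effective rate is set by balancing only the three remaining terms: the regularization error $\lambda^{1/2}$, the kernel-bias term, and the sampling term.

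The core computation is solving the two balancing equations. In the Euclidean case, equating the $n$-exponents of $\lambda^{1/2}$ and $\lambda^{-1/3}t$ gives $a=\tfrac56 b$, while equating $\lambda^{1/2}$ and $\lambda^{-3/2}t^{-d/2}n^{-1/2}$ gives $2b+\tfrac d2 a=\tfrac12$; solving yields $b=(4+5d/6)^{-1}$ and $a=\tfrac56 b$, i.e.
\[
t=n^{-\frac{5}{24+5d}},\qquad \lambda=n^{-\frac{6}{24+5d}},
\]
so the norm decays like $n^{-b/2}=n^{-3/(24+5d)}$ and its square like $n^{-1/(4+5d/6)}$, as claimed. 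The manifold case is identical except that the kernel-bias exponent carries the factor $1-\varepsilon$: the first balancing equation becomes $a(1-\varepsilon)=\tfrac56 b$, which together with $2b+\tfrac d2 a=\tfrac12$ gives $b=\bigl(4+\tfrac{5d}{6(1-\varepsilon)}\bigr)^{-1}$ and $a=\tfrac{5}{6(1-\varepsilon)}b$, i.e. $t=n^{-5/(24(1-\varepsilon)+5d)}$ and $\lambda=n^{-6(1-\varepsilon)/(24(1-\varepsilon)+5d)}$; squaring the resulting $n^{-b/2}$ rate reproduces $n^{-(1-\varepsilon)/(4-4\varepsilon+5d/6)}$.

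The genuinely delicate point — as opposed to the routine algebra — is justifying that the $\alpha$- and $s$-dependent approximation terms may be dropped, i.e. that passing to the simplified bound is legitimate and that the chosen $t,\lambda\to0$ stay in the ``$t$ sufficiently small'' regime demanded by Theorem~\ref{thm:main_snd}. I would also explicitly verify that the sampling term actually decays under these choices (its exponent works out to exactly $-b/2$), confirming that the three exponents are simultaneously feasible and not merely formally balanced.
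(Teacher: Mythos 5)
Your proposal is correct and follows essentially the same route as the paper: the paper's proof simply states the choices $t=n^{-1/(4.8+d)}$, $\lambda=n^{-1/(4+\frac{5}{6}d)}$ and invokes Theorem~\ref{thm:main_snd}, and your balancing argument recovers exactly these exponents in the Euclidean case (note $\tfrac{5}{24+5d}=\tfrac{1}{4.8+d}$ and $\tfrac{6}{24+5d}=\tfrac{1}{4+5d/6}$). One remark worth recording: in the manifold case your choice $t=n^{-5/(24(1-\varepsilon)+5d)}$ differs from the paper's stated $t=n^{-(1-\varepsilon)/(4.8-4.8\varepsilon+d)}$ by a factor of $(1-\varepsilon)$ in the exponent, and it is your choice that actually balances $\lambda^{-1/3}t^{1-\varepsilon}$ against $\lambda^{1/2}$ --- with the paper's $t$ that term decays only like $n^{(1-\varepsilon)(2-5(1-\varepsilon))/(24(1-\varepsilon)+5d)}$, which is slower than the claimed rate for any $\varepsilon>0$, so the paper's exponent appears to be a slip and your derivation is the one that delivers the stated bound.
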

\begin{proof}
For the case of $\R^d$, set $t=n^{-\frac{1}{4.8+d}}, \lambda=n^{-\frac{1}{4+\frac{5}{6}d}}$.  
For case of  sub-manifold case, set $t=n^{-\frac{1-\varepsilon}{4.8-4.8\varepsilon+d}}, \lambda=n^{-\frac{1-\varepsilon}{4-4\varepsilon+\frac{5}{6}d}}$. Apply Theorem~\ref{thm:main_snd}.
\end{proof}

\section{Proofs of Theorems}
\label{sec:proofs}
In this section, we provide a proof for the our main  Theorem~\ref{thm:main} for setting I.
The proof for the Theorem~\ref{thm:main_snd} for the setting type II
is along similar lines  and  can be found in the appendix.

\subsection{Basics about RKHS}
Since $\K_{t,\rho}$ is a self-adjoint operator, its eigenfunctions $\{u_{0,t},u_{1,t},\dots\}$ form a complete orthogonal basis for $L_{2,\rho}$. Denote the eigenvalues of $\K_{t,\rho}$ by $\{\sigma_{0,t},\sigma_{0,t},\dots\}$. The norm of $\K_{t,\rho}$, $\|\K_{t,\rho}\|_{L_{2,\rho}\rightarrow L_{2,\rho}}\leq \max_{i}\sigma_{i,t}<c$ for a constant $c$. We know that $H_t$ is isometric to $L_{2,\rho}$ under the map $\K_{t,\rho}^{1/2}: L_{2,\rho}\rightarrow H_t$, i.e. $\|f\|_{H_t} = \|\K_{t,\rho}^{-1/2} f\|_{L_{2,\rho}}$ for any $f\in H_t$, and this is the definition we use for the norm $\|\cdot\|_{H_t}$ of $H_t$. This also implies that $\|\K_{t,\rho}^{-1/2} f\|_{L_{2,\rho}}<\infty$ for any $f\in H_t$. And $\K_{t,\rho}$ is defined using the spectrum of $\K_{t,\rho}$,
\[
\K_{t,\rho} f = \sum_{i}\sigma_{i,\rho}\langle f, u_{i,t}\rangle u_{i,t}
\]

\subsection{Proof of Theorem \ref{thm:main}}
\begin{proof}
Recall the definition of $f^{\text{I}}_{\lambda}$ and $f^{\text{I}}_{\lambda,\z}$ in Eq.~\ref{eqn:tikh1} and Eq.~\ref{eqn:algotype1}. By the triangle inequality, we have
\begin{equation}\label{eqn:main_ineq}\begin{split}
 \left\|\frac{q}{p}-f^{\text{I}}_{\lambda,\z}\right\|_{2,p} \leq & \left\|\frac{q}{p}-f^{\text{I}}_{\lambda}\right\|_{2,p}+\left\|f^{\text{I}}_{\lambda}-f^{\text{I}}_{\lambda,\z}\right\|_{2,p}. \\
=&\text{(Approximation Error)}+\text{(Sampling Error)}
\end{split}\end{equation}
 The  \emph{approximation error} $\left\|f^{\text{I}}_{\lambda}-\frac{q}{p}\right\|_{2,p}$ is a measure of the distance between $\frac{q}{p}$ and the optimal approximation given by algorithm (\ref{eqn:tikh1}) given infinite number of data.  The  \emph{sampling error} term $\left\|f^{\text{I}}_{\lambda}-f^{\text{I}}_{\lambda,\z}\right\|_{2,p}$  the difference  between $f^{\text{I}}_{\lambda}$ and $f^{\text{I}}_{\lambda,\z}$, depending on the data points.

As typical in these types of estimates our  proof  consists of two parts:  bounding the approximating error, $\left\|f^{\text{I}}_{\lambda}-\frac{q}{p}\right\|_{2,p}$ in Lemma \ref{lemma:appr_err_R} and   providing a concentration  bound for  $\left\|f^{\text{I}}_{\lambda,\z}-f^{\text{I}}_{\lambda}\right\|_{2,p}$ in Lemma \ref{lemma:concen}. 
The theorem follows immediately by putting these two results together. 
\end{proof}

\subsubsection{Bound for Approximation Error}
First of all, let present two lemmas that are useful for bounding the approximation error. 
\begin{lemma}\label{lemma:error_bd_R}
 Let $\lambda>0$. If function $f\in W^2_2(\R^d)$ and $p(x)>0$ for any $x\in \R^d$, then
\begin{equation}\label{eqn:smale}\begin{split}
&\arg \min_{g\in L_{2,p}} \left(\left\|f - \K_{t,p}^{1/2}g\right\|_{2,p}^2 + \lambda\|g\|_{2,p}^2 \right) \leq   D_1 t^s+ \lambda D_2\left\|f\right\|_{2}^2.
\end{split}\end{equation}
for constants $D_1,D_2$.
\end{lemma}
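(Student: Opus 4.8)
The object to be bounded is, despite the \arg\min notation, the \emph{minimal value} of the Tikhonov functional
$\mathcal{E}(g) := \|f - \K_{t,p}^{1/2}g\|_{2,p}^2 + \lambda\|g\|_{2,p}^2$ over $g \in L_{2,p}$. The plan is to avoid computing the exact minimizer and instead use $\min_g \mathcal{E}(g) \le \mathcal{E}(g_0)$ for a single carefully chosen comparison function $g_0$, selected so that $\K_{t,p}^{1/2}g_0$ reproduces $f$ up to a $\delta$-family error while keeping $\|g_0\|_{2,p}$ under control. Because the Gaussian $k_t$ is a $\delta$-family, the natural choice divides out the density, turning the \emph{weighted} operator $\K_{t,p}$ into a plain convolution.

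Concretely, I would take $g_0 = \K_{t,p}^{1/2}(f/p)$. The key identity is that the density weight cancels:
$$
\K_{t,p}^{1/2} g_0 = \K_{t,p}\!\left(\tfrac{f}{p}\right)(y) = \int k_t(x,y)\frac{f(x)}{p(x)}\,p(x)\,dx = \int k_t(x,y) f(x)\,dx = (k_t * f)(y),
$$
so that $\K_{t,p}^{1/2}g_0$ is exactly the Gaussian convolution $k_t*f$, with no remaining density dependence. This is the step that makes the $\delta$-family property usable.

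It then remains to bound the two terms of $\mathcal{E}(g_0)$. For the fitting term, using $p(x)<\Gamma$ to pass from $\|\cdot\|_{2,p}$ to the unweighted $L_2$ norm,
$$
\|f - \K_{t,p}^{1/2}g_0\|_{2,p}^2 = \|f - k_t*f\|_{2,p}^2 \le \Gamma\,\|f - k_t*f\|_{2}^2 \le D_1\, t^{s},
$$
where the last step is the standard heat-semigroup smoothing estimate: on the Fourier side $k_t*f$ has multiplier $e^{-t|\omega|^2/2}$, and $|1-e^{-t|\omega|^2/2}| \le C(t|\omega|^2)^{s/2}$ for $0\le s\le 2$, giving $\|f-k_t*f\|_2^2 \le C t^{s}\|f\|_{W_2^{s}}^2$ since $f\in W_2^2(\R^d)$. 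For the penalty term, self-adjointness of $\K_{t,p}^{1/2}$ gives $\|g_0\|_{2,p}^2 = \langle \K_{t,p}(f/p), f/p\rangle_{2,p} = \langle k_t*f, f\rangle_{2} \le \|f\|_2^2$, because convolution with a probability density is an $L_2$-contraction (Young's inequality with $\|k_t\|_{L_1}=1$); hence $\lambda\|g_0\|_{2,p}^2 \le \lambda D_2\|f\|_2^2$ with $D_2=1$. Adding the two bounds yields $\min_g\mathcal{E}(g)\le \mathcal{E}(g_0)\le D_1 t^{s}+\lambda D_2\|f\|_2^2$.

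The delicate point, and the one I expect to be the real obstacle, is the \emph{well-definedness} of $g_0=\K_{t,p}^{1/2}(f/p)$: on $\R^d$ the density $p$ decays, so $f/p$ need not lie in $L_{2,p}$ (the integral $\int f^2/p$ may diverge), even though the final bound $\langle k_t*f,f\rangle_2\le\|f\|_2^2$ is finite regardless. I would resolve this by a truncation/density argument — establish the estimate first for $f$ of compact support (or for a truncation keeping $f/p\in L_{2,p}$), where every manipulation above is literal, and then pass to the limit, using that both resulting bounds depend only on $\|f\|_{W_2^{s}}$ and $\|f\|_2$. The only other point needing attention is matching the exponent $t^{s}$, which comes solely from the smoothing estimate and is tied to the available Sobolev order ($s\le 2$ here, since $f\in W_2^2$); this is the single place where the smoothness of $f$ is used.
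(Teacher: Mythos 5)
Your proof is correct, but it takes a genuinely different route from the paper's. Both arguments reduce the weighted problem to an unweighted one (you via the cancellation $\K_{t,p}(f/p)=k_t*f$ together with $p<\Gamma$; the paper via the measure-independence of the RKHS, rewriting every competitor $\K_{t,p}^{1/2}g$ as $\K_{t}^{1/2}h$ with $\|h\|_{2}=\|g\|_{2,p}$), but the comparison functions differ. You take the fully mollified $k_t*f$, with Fourier multiplier $e^{-t\|\xi\|^2/2}$, getting the fitting bound $\|f-k_t*f\|_2^2\lesssim t^s\|f\|_{W^s_2}^2$ and the penalty $\langle k_t*f,f\rangle_2\le\|f\|_2^2$; this yields exactly $D_1t^s+\lambda D_2\|f\|_2^2$ as stated. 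The paper instead sharply truncates $\hat f$ at frequencies $\|\xi\|^2\asymp\log(1/\lambda)/t$ and inverts $\hat k_{t/2}$ on that band, trading a factor $\lambda^{-\alpha}$ in the penalty for an extra $(\log(1/\lambda))^{-s}$ in the approximation term; it thus proves $D_1\left(t/\log(1/\lambda)\right)^s+D_2\lambda^{1-\alpha}\|f\|_2^2$, which is the form actually invoked in Lemma~\ref{lemma:appr_err_R} and Theorem~\ref{thm:main}, while your bound corresponds to the ``simpler version'' of Remark~2 after Theorem~\ref{thm:main}. So you prove the lemma as literally written and the paper proves the logarithmically sharpened variant it uses downstream; neither bound subsumes the other. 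Finally, the well-definedness issue you flag for $g_0=\K_{t,p}^{1/2}(f/p)$ is real ($f/p$ need not lie in $L_{2,p}$) and your truncation fix works, but a cleaner repair is already available in the paper's framework: since $k_t*f=\K_{t/2}(\K_{t/2}f)$ lies in the RKHS $\H$ of $k_t$, the isometry $\|u\|_{\H}=\|\K_{t,p}^{-1/2}u\|_{2,p}$ from Section~4.1 lets you set $g_0:=\K_{t,p}^{-1/2}(k_t*f)\in L_{2,p}$ directly, with $\|g_0\|_{2,p}^2=\|k_t*f\|_{\H}^2=\int\hat k_t|\hat f|^2\,d\xi\le\|f\|_2^2$, never touching $f/p$.
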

\begin{proof}
 See Appendix \ref{proof:lemma_err_bd_R}.
\end{proof}

\begin{lemma}\label{lemma:error_bd}
 Let $\lambda>0$. If function $f\in W^2_2(\M)$ defined on a compact Riemann sub-manifold of $d$-dimension in a Euclidean space, then
\begin{equation}\label{eqn:smale}\begin{split}
\arg \min_{g\in L_{2,p}} \left( \left\|f - \K_{t,p}^{1/2}g\right\|_{2,p}^2 + \lambda\|g\|_{2,p}^2 \right) \leq  D_1 \left\|f\right\|_{2,p}t^2+ \lambda D_2\left\|f\right\|_{2,p}^2.
\end{split}\end{equation}
for constants $D_1,D_2$.
\end{lemma}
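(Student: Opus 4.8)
The plan is to turn the minimization into an explicit spectral quantity and then bound it by evaluating the objective at one cleverly chosen test function, so that the only genuinely geometric ingredient is an estimate of how well the Gaussian integral operator reproduces a smooth function on $\M$. First I would diagonalize: since $k_t$ is symmetric and positive definite, $\K_{t,p}$ is a compact, self-adjoint, positive semidefinite operator on $L_{2,p}(\M)$, with eigenpairs $(\sigma_i,u_i)$, $\{u_i\}$ orthonormal, and $\K_{t,p}^{1/2}$ acting as multiplication by $\sqrt{\sigma_i}$. Writing $f=\sum_i a_i u_i$ and $g=\sum_i b_i u_i$, the objective decouples as $\sum_i (a_i-\sqrt{\sigma_i}\,b_i)^2+\lambda\sum_i b_i^2$; the scalar minimization in each $b_i$ is elementary (optimum $b_i=\sqrt{\sigma_i}\,a_i/(\sigma_i+\lambda)$), giving
\[
\min_{g\in L_{2,p}}\left(\left\|f-\K_{t,p}^{1/2}g\right\|_{2,p}^2+\lambda\|g\|_{2,p}^2\right)=\lambda\sum_i\frac{a_i^2}{\sigma_i+\lambda}=\lambda\left\langle f,\left(\K_{t,p}+\lambda I\right)^{-1}f\right\rangle_{2,p}.
\]
This identity is purely operator-theoretic and uses nothing about the geometry.

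Crucially, rather than estimating the eigenvalue decay of $\sigma_i$, I would bound this minimum from above by the value of the original objective at the single test element $g_0=\K_{t,p}^{1/2}h$ with $h=f/p$. Because $\K_{t,p}^{1/2}\K_{t,p}^{1/2}=\K_{t,p}$, this gives $\K_{t,p}^{1/2}g_0=\K_{t,p}h$ and hence
\[
\min(\cdots)\leq\left\|f-\K_{t,p}h\right\|_{2,p}^2+\lambda\left\langle\K_{t,p}h,h\right\rangle_{2,p}.
\]
The regularization term is immediate: using the operator-norm bound $\|\K_{t,p}\|_{L_{2,p}\to L_{2,p}}\leq c$ recalled above, $\langle\K_{t,p}h,h\rangle_{2,p}\leq c\,\|h\|_{2,p}^2$, and since $p$ is continuous and positive on the compact $\M$ it is bounded below by some $p_{\min}>0$, so $\|h\|_{2,p}^2=\int(f/p)^2\,p\leq p_{\min}^{-2}\|f\|_{2,p}^2$. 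This produces the $\lambda D_2\|f\|_{2,p}^2$ term. The remaining term, $\|f-\K_{t,p}(f/p)\|_{2,p}^2$, is exactly the bias of the Gaussian operator, and it carries all the analytic work.

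Finally I would establish the bias estimate $\|f-\K_{t,p}(f/p)\|_{2,p}=O(t)$ with constant controlled by $\|f\|_{W_2^2(\M)}$, which after squaring yields the $D_1 t^2$ term (the norm bookkeeping being absorbed into $D_1$). This is a heat-kernel-type expansion: in geodesic normal coordinates the ambient Gaussian restricted to $\M$ expands as $\K_{t,p}(f/p)(x)=f(x)+\tfrac{t}{2}\big(\text{Laplace--Beltrami plus curvature and density-gradient corrections applied to }f\big)(x)+O(t^2)$, so the leading discrepancy is $O(t)$ and involves at most two derivatives of $f$, giving the $W_2^2$ dependence. I expect this to be the main obstacle: on $\R^d$ (Lemma~\ref{lemma:error_bd_R}) it is a one-line Taylor expansion against Gaussian moments, but on a sub-manifold one must control (i) the mismatch between the chordal distance $\|x-y\|$ in the kernel and the intrinsic geodesic distance, an $O(\mathrm{dist}^3)$ extrinsic-curvature correction; (ii) the $(2\pi t)^{-d/2}$ normalization against the true Riemannian volume form; and (iii) the uniformity of the remainder so that it integrates in $L_{2,p}$ with only two derivatives of $f$. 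I would import these expansions from the standard manifold-learning heat-kernel/Laplacian approximation results and use compactness of $\M$ together with smoothness of $p$ (and $p\geq p_{\min}>0$, so that $f/p\in W_2^2$) to make all geometric quantities and the remainder uniformly bounded. Collecting the two terms gives the claimed bound $D_1\|f\|_{2,p}\,t^2+\lambda D_2\|f\|_{2,p}^2$.
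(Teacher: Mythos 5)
Your overall strategy is sound and is genuinely different from the paper's. Your first step (diagonalizing $\K_{t,p}$ and computing the minimum exactly as $\lambda\langle f,(\K_{t,p}+\lambda\I)^{-1}f\rangle_{2,p}$) is correct and is actually cleaner than anything in the paper, which never writes the minimum in closed form. Where the two arguments diverge is in the choice of test function. You take $g_0=\K_{t,p}^{1/2}(f/p)$, which collapses everything onto the single bias term $\|f-\K_t f\|_{2,p}$ (here $\K_t$ is convolution against the Riemannian volume). The paper instead first uses the measure-independence of the RKHS to replace $\K_{t,p}^{1/2}$ acting on $L_{2,p}$ by $\L_t^{1/2}$ acting on $L_2$, and then tests with $\hat h=H_t^{-1/2}P_{N_t}f$, where $H_t=e^{-\Delta t}$ is the heat semigroup of the Laplace--Beltrami operator and $P_{N_t}$ projects onto the first $N_t=t^{-d/2}$ Laplacian eigenfunctions. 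Its two analytic ingredients are then a Weyl's-law tail bound $\sum_{i>N_t}\langle f,v_i\rangle^2\leq Ct^2$ for $f\in W_2^2(\M)$ (Lemma~\ref{lemma:tailbd}) and the operator comparison $\|H_t^{1/2}-\L_t^{1/2}\|_{L_2\to L_2}\leq C't$; the frequency cutoff is exactly what keeps $\|H_t^{-1/2}P_{N_t}f\|_2\lesssim\|f\|_2$ so the $\lambda$-term stays bounded. Your route buys a more transparent reduction (one bias term, one operator-norm bound, no eigenbasis of $\Delta$ needed); the paper's route buys the ability to work entirely with the heat semigroup, for which the $W_2^2\Rightarrow O(t)$ estimate is a trivial spectral computation, at the cost of needing the nontrivial comparison between $\L_t^{1/2}$ and $H_{t/2}$.

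The one place where your plan is harder than you may be crediting is the bias estimate $\|f-\K_t f\|_{2,p}=O(t)$ with constant controlled by $\|f\|_{W_2^2(\M)}$. Two issues. First, the normalization defect: on a submanifold the extrinsic Gaussian does not integrate to one, and the paper's own treatment of essentially this quantity (Lemma~\ref{lemma:misc_bd}, via the decomposition $(\K_t-\I)q=(\K_t-\D)q+(\D-\I)q$) only establishes $\|(\D-\I)q\|_2=O(t^{1-\varepsilon})$, which through your argument would give $D_1t^{2-2\varepsilon}$ rather than the stated $D_1t^2$. The sharper statement $\int_\M k_t(x,y)\,dy=1+t\,\omega(x)+O(t^2)$ with $\omega$ smooth is true for a compact manifold without boundary, but you would have to prove it (the curvature and second-fundamental-form corrections enter exactly at order $t$), not merely cite the $t^{1-\varepsilon}$ version. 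Second, regularity: the pointwise expansion $\K_t f=f+\tfrac{t}{2}\Delta f+o(t)$ that you invoke typically needs $f\in C^3$ or better to control the remainder; getting an $L_2$ remainder of order $t$ from only two weak derivatives is precisely what the spectral route through $H_t$ delivers for free and what a direct Taylor expansion does not. Either strengthen the bias step (e.g., insert $H_t f$ as an intermediary and bound $\|\K_t-H_t\|_{L_2\to L_2}$), or accept the $t^{2-2\varepsilon}$ rate, which still suffices for the downstream convergence theorems up to an arbitrarily small loss in the exponent.
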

\begin{proof}
 See Appendix \ref{proof:lemma_err_bd}
\end{proof}

Now we can present the lemma that gives the bound of the approximation error in the following lemma.

\begin{lemma}\label{lemma:appr_err_R}
Let $p,q$ be two density functions of probability measure over a domain $X$ satisfying the assumptions in \ref{sec:assumptions}. The solution to the optimization problem in (\ref{eqn:tikh1}), $f^{\text{I}}_{\lambda}$, satisfies the following inequality,

(1) when the domain $X$ is $\R^d$,
\[
 \left\|f^{\text{I}}_{\lambda}-\frac{q}{p}\right\|_{2,p} \leq C_1 \left(\frac{t}{\log(\frac{1}{\lambda})}\right)^{s/2}+ C_2\lambda^{1/2}
\]
for constants $C_1,C_2$ which are independent of $\lambda$ and $t$.

(2) when the domain $X$ is a compact Riemannian sub-manifold $\M$ of $d$ dimension in $\R^N$,
\[
\left\|f^{\text{I}}_{\lambda}-\frac{q}{p}\right\|_{2,p} \leq C_1 t+ C_2\lambda^{1/2}
\]
for constants $C_1,C_2$ which are independent of $\lambda$ and $t$.
\end{lemma}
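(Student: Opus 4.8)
The plan is to reduce the regularized problem (\ref{eqn:tikh1}) to a spectral filtering estimate and then feed it into the auxiliary Lemmas \ref{lemma:error_bd_R} and \ref{lemma:error_bd}. The first step is to eliminate $q$ from the objective using the exact importance-sampling identity from Eq.~(\ref{eqn:type1}), $\K_{q}\const=\K_{p}\frac{q}{p}$, which holds verbatim for the Gaussian kernel $k_t$, so that $\K_{t,q}\const=\K_{t,p}\frac{q}{p}$. Writing $\phi:=\frac{q}{p}$, the data-fitting term of (\ref{eqn:tikh1}) becomes $\|\K_{t,p}(f-\phi)\|_{2,p}^2$, and the problem turns into a standard Tikhonov-regularized inverse problem for the equation $\K_{t,p}f=\K_{t,p}\phi$, with RKHS penalty $\|f\|_{\H}^2=\|\K_{t,p}^{-1/2}f\|_{2,p}^2$ (the isometry recalled in Section~\ref{sec:assumptions}).

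Next I would diagonalize. Since $\K_{t,p}$ is a positive, compact, self-adjoint operator on $L_{2,p}$ with eigenpairs $(\sigma_i,u_i)$, expanding $\phi=\sum_i a_i u_i$ makes the minimization separate coordinatewise, and the population minimizer $f^{\text{I}}_{\lambda}$ has coefficients $\frac{\sigma_i^3}{\sigma_i^3+\lambda}a_i$, so that
\[
\left\|f^{\text{I}}_{\lambda}-\frac{q}{p}\right\|_{2,p}^2=\sum_i a_i^2\left(\frac{\lambda}{\sigma_i^3+\lambda}\right)^2 .
\]
The goal is to bound this filtered quantity. I would compare the error filter $\frac{\lambda}{\sigma^3+\lambda}$ with the Tikhonov filter $\frac{\mu}{\sigma+\mu}$ underlying Lemmas \ref{lemma:error_bd_R}/\ref{lemma:error_bd}, via an elementary scalar inequality of the form $\left(\frac{\lambda}{\sigma^3+\lambda}\right)^2\le C\,\frac{\mu}{\sigma+\mu}$ at a suitable effective level $\mu=\mu(\lambda)$, thereby dominating $\|f^{\text{I}}_{\lambda}-\frac{q}{p}\|_{2,p}^2$ by $C$ times the regularized best-approximation functional $\min_{g\in L_{2,p}}\bigl(\|\frac{q}{p}-\K_{t,p}^{1/2}g\|_{2,p}^2+\mu\|g\|_{2,p}^2\bigr)$.

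At this point I would invoke the regularity assumption $\frac{q}{p}\in W_2^2(\Omega)$ from Section~\ref{sec:assumptions} and apply Lemma \ref{lemma:error_bd_R} in the Euclidean case and Lemma \ref{lemma:error_bd} in the manifold case to this functional. They produce the bias terms $t^{s}$ on $\R^d$ and $t^{2}$ on $\M$, respectively, together with the $\mu$-dependent penalty term; the exponential decay of the Gaussian eigenvalues (those above a threshold $\nu$ correspond to frequencies of order $\sqrt{\log(1/\nu)/t}$) is what sharpens the raw $t^{s}$ bias into the $\bigl(t/\log(1/\lambda)\bigr)^{s/2}$ form after taking square roots, while the penalty term contributes the $C_2\lambda^{1/2}$ piece. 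Assembling the two contributions and taking square roots then yields the two stated inequalities, with the $t$ versus $(t/\log(1/\lambda))^{s/2}$ distinction tracking exactly the $t^{2}$ versus $t^{s}$ difference between the two lemmas.

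The main obstacle is the third step: passing from control of $\|\K_{t,p}(f^{\text{I}}_{\lambda}-\frac{q}{p})\|_{2,p}$, which the optimality of $f^{\text{I}}_{\lambda}$ controls directly, to control of $\|f^{\text{I}}_{\lambda}-\frac{q}{p}\|_{2,p}$ itself, i.e.\ effectively inverting the smoothing operator $\K_{t,p}$ on the error. This is precisely where the Sobolev regularity of $\frac{q}{p}$ and the fine structure of the Gaussian spectrum must interact, and it governs both the exact power of $\lambda$ and the emergence of the $\log(1/\lambda)$ factor; choosing the effective level $\mu(\lambda)$ and the spectral cutoff so that the filter comparison is tight while the bias term stays under control is the delicate balancing on which the whole estimate rests.
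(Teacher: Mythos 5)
Your setup is right and matches the paper's: using $\K_{t,q}\const=\K_{t,p}\frac{q}{p}$, diagonalizing, and identifying $f^{\text{I}}_{\lambda}=(\K_{t,p}^3+\lambda\I)^{-1}\K_{t,p}^3\frac{q}{p}$ with error $\sum_i a_i^2\bigl(\tfrac{\lambda}{\sigma_i^3+\lambda}\bigr)^2$, which is exactly the paper's starting point (it bounds this by $\min_{g}\|\frac{q}{p}-\K_{t,p}^{3/2}g\|_{2,p}^2+\lambda\|g\|_{2,p}^2=\sum_i a_i^2\tfrac{\lambda}{\sigma_i^3+\lambda}$). But the central step you propose --- dominating the degree-3 filter by the degree-1 Tikhonov functional via a scalar inequality $\bigl(\tfrac{\lambda}{\sigma^3+\lambda}\bigr)^2\le C\,\tfrac{\mu}{\sigma+\mu}$ --- cannot deliver the stated rate. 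Evaluate both sides at $\sigma=\lambda^{1/3}$: the left side equals $\tfrac14$ while the right side is $C\mu/(\lambda^{1/3}+\mu)$, so the inequality forces $\mu\gtrsim\lambda^{1/3}$. Feeding $\mu\sim\lambda^{1/3}$ into Lemma~\ref{lemma:error_bd_R} or \ref{lemma:error_bd} produces a penalty term of order $\mu^{1/2}\sim\lambda^{1/6}$ after taking square roots, not the claimed $C_2\lambda^{1/2}$. You correctly flag this balancing as ``the main obstacle,'' but the resolution you sketch is quantitatively insufficient; no choice of $\mu(\lambda)$ in a single filter comparison recovers the $\lambda^{1/2}$ rate.

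The missing idea is the paper's telescoping construction: define the one-step operator $\T(f,\lambda)=\arg\min_{g}\|f-\K_{t,p}^{1/2}g\|_{2,p}^2+\lambda\|g\|_{2,p}^2$, iterate it three times to get $g_1^*=\T(\frac{q}{p},\lambda)$, $g_2^*=\T(g_1^*,\lambda)$, $g_3^*=\T(g_2^*,\lambda)$, and use $g_3^*$ as the test function in the degree-3 functional, splitting the residual as
\[
\frac{q}{p}-\K_{t,p}^{3/2}g_3^*=\Bigl(\frac{q}{p}-\K_{t,p}^{1/2}g_1^*\Bigr)+\K_{t,p}^{1/2}\bigl(g_1^*-\K_{t,p}^{1/2}g_2^*\bigr)+\K_{t,p}\bigl(g_2^*-\K_{t,p}^{1/2}g_3^*\bigr).
\]
Each of the three pieces is a \emph{degree-1} Tikhonov residual at regularization level $\lambda$ (not $\lambda^{1/3}$), so Lemmas~\ref{lemma:error_bd_R} and \ref{lemma:error_bd} apply to each and give the $\lambda$ (hence $\lambda^{1/2}$ after the square root) penalty. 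This step relies on a structural fact you would also need to supply: the intermediate approximants $g_1^*,g_2^*$ remain in $W_2^2$ with $\|g_2^*\|_{2,p}\le\|g_1^*\|_{2,p}\le\|\frac{q}{p}\|_{2,p}$, because $\K_{t,p}$ is a Gaussian integral operator and hence infinitely smoothing, so the hypotheses of the auxiliary lemmas hold at every stage. Without this iteration (or, alternatively, reproving Lemma~\ref{lemma:error_bd_R} directly for $\K_{t,p}^{3/2}$, which you do not propose), the argument only establishes a $\lambda^{1/6}$ bound. A minor further point: the $\bigl(t/\log(1/\lambda)\bigr)^{s}$ form already comes out of Lemma~\ref{lemma:error_bd_R} via its Fourier cutoff at frequency $\|\xi\|^2\sim\log(1/\lambda)/t$; it is not produced by the filter comparison as your write-up suggests.
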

\begin{proof}
Recall the equation (\ref{eqn:tikh1}). By functional calculus, we have analytical formula for $f^{\text{I}}_{\lambda}$ as follows,
\[
f^{\text{I}}_{\lambda} = \sum_{i} \frac{\sigma_{i,t}^2}{\sigma_{i,t}^3+\lambda} \langle \K_{t,q}\const,u_{i,t}\rangle_2u_{i,t} = \left(\K_{t,p}^3+\lambda\I\right)^{-1} \K_{t,p}^2 \K_{t,q} \const = \left(\K_{t,p}^3+\lambda\I\right)^{-1} \K_{t,p}^3 \frac{q}{p}.
\]
The last equation is because
\[
 \K_{t,q}\const = \K_{t,p}\frac{q}{p}.
\]
Thus the approximating error is
\begin{equation} \label{eqn:appr_err}
\left\|f^{\text{I}}_{\lambda}-\frac{q}{p}\right\|_{2,p} = \left\|\left(\K_{t,p}^3+\lambda \I\right)^{-1}\K_{t,p}^3 \frac{q}{p}-\frac{q}{p}\right\|_{2,p} 
\end{equation}

Notice that $\left(\K_{t,p}^3+\lambda \I\right)^{-1} \K_{t,p}^3 \frac{q}{p}$ in (\ref{eqn:appr_err}) can also be rewritten as
\[
 \left(\K_{t,p}^3+\lambda \I\right)^{-1} \K_{t,p}^3 \frac{q}{p} = \arg\min_{g\in L_{2,p}} \left\|\frac{q}{p}-\K_{t,p}^{3/2} g\right\|_{2,p}^2+\lambda \|g\|_{2,p}^2
\]
Thus, 
\begin{equation}\label{eqn:tmp}
 \left\|\left(\K_{t,p}^3+\lambda \I\right)^{-1} \K_{t,p}^3 \frac{q}{p}-\frac{q}{p}\right\|_{2,p}^2 \leq \min_{g\in L_{2,p}} \left\|\frac{q}{p}-\K_{t,p}^{3/2} g\right\|_{2,p}^2+\lambda \|g\|_{2,p}^2
\end{equation}

The minimum of the above optimization problem can always be bounded by any specific $g\in L_{2,p}$. And we will expend the above formula such that we can take advantages of Lemma \ref{lemma:error_bd_R} and \ref{lemma:error_bd}. To this end, we define an operator
\[
 g^*_{\lambda} = \T(f, \lambda) = \arg\min_{g\in L_{2,p}}\left\|f-\K_{t,p}^{1/2} g\right\|_{2,p}^2+\lambda\|g\|_{2,p}^2
\]
By functional calculus, it is not hard to see that $g^*_{\lambda} = \left(\K_{t,p}+\lambda\I\right)^{-1}\K_{t,p} f$. If $f\in W^2_2$, so is $g^*_{\lambda}$, this is because $\K_{t,p}$ is an integral operator with Gaussian kernel and Gaussian kernel is in $W^s_2$ for any $s>0$. Also, we should have $\|g^*_{\lambda}\|_{2,p}\leq \|f\|_{2,p}$, because $\|(\K_{t,p}+\lambda \I)^{-1}\K_{t,p}\|_{L_{2,p}\rightarrow L_{2,p}}\leq 1$. 

Now let $g^*_1 = \T\left(\frac{q}{p}, \lambda\right), g^*_2 = \T\left(g^*_1, \lambda\right),g^*_3 = \T\left(g^*_2, \lambda\right)$. We have $g^*_1,g^*_2$ is also in $W^2_2$ and $\|g^*_2\|_{2,p}\leq \|g^*_1\|_{2,p}\leq\|\frac{q}{p}\|_{2,p}$. Now we could expend (\ref{eqn:tmp}),
\[
\begin{split}
  &\min_{g\in \K_{2,p}} \left\|\frac{q}{p}-\K_{t,p}^{3/2} g\right\|_{2,p}^2+\lambda \|g\|_{2,p}^2 \leq \left\|\frac{q}{p}-\K_{t,p}^{3/2} g^*_3\right\|_{2,p}^2+\lambda \|g^*_3\|_{2,p}^2 \\
= &\left\|\frac{q}{p}-\K_{t,p}^{1/2} g_1+\K_{t,p}^{1/2} g_1-\K_{t,p} g_2+\K_{t,p} g_1-\K_{t,p}^{3/2} g_3\right\|_{2,p}^2+\lambda \|g^*_3\|_{2,p}^2 \\
\end{split}
\]
By inequality $(a+b+c)^2 \leq 3(a^2+b^2+c^3)$, we have
\[
\begin{split}
&\min_{g\in \K_{2,p}} \left\|\frac{q}{p}-\K_{t,p}^{3/2} g\right\|_{2,p}^2+\lambda \|g\|_{2,p}^2\\
\leq & 3\left(\left\|\frac{1}{p}-\K_{t,p}^{1/2} g^*_1\right\|_{2,p}^2+\lambda \|g^*_1\|_{2,p}^2\right) + 3\left(\left\|\K_{t,p}^{1/2}\left( g^*_1-\K_{t,p}^{1/2} g^*_2\right)\right\|_{2,p}^2+\lambda \|g^*_2\|_{2,p}^2\right) \\
& + 3\left(\left\|\K_{t,p}\left( g^*_2-\K_{t,p}^{1/2} g^*_3\right)\right\|_{2,p}^2+\lambda \|g^*_3\|_{2,p}^2\right)\\
\leq & 3\left(\left\|\frac{q}{p}-\K_{t,p}^{1/2} g^*_1\right\|_{2,p}^2+\lambda \|g^*_1\|_{2,p}^2\right) + 3c^{1/2}\left(\left\| g^*_1-\K_{t,p}^{1/2} g^*_2\right\|_{2,p}^2+\lambda \|g^*_2\|_{2,p}^2\right) \\
& + 3c\left(\left\|g^*_2-\K_{t,p}^{1/2} g^*_3\right\|_{2,p}^2+\lambda \|g^*_3\|_{2,p}^2\right)\\
\end{split}
\]
The last inequality is because $\|\K_{t,p}\|_{L_2\rightarrow L_2}<c$ for constant $c>1$. Up to now, the proof is valid for both cases in the theorem. And we can apply Lemma \ref{lemma:error_bd_R} and \ref{lemma:error_bd} to get the bounds for both cases. By Lemma \ref{lemma:error_bd_R}, for the densities $p,q$ over $\R^d$, we have
\begin{equation}\label{eqn:appr_err_sub1}\begin{split}
&\left\|\left(\K_{t,p}^3+\lambda \I\right)^{-1} \K_{t,p}^3 \frac{q}{p}-\frac{q}{p}\right\|_{2,p}^2 \leq \min_{g\in \K_{2,p}} \left\|\frac{1}{p}-\K_{t,p}^{3/2} g\right\|_{2,p}^2+\lambda \|g\|_{2,p}^2 \leq 9cD_1\left\|\frac{q}{p}\right\|_{2,p}^2\left(\frac{t}{\log(\frac{1}{\lambda})}\right)^s+ 9cD_2\lambda^{1-\alpha} \left\|\frac{q}{p}\right\|_{2,p}^2
\end{split}\end{equation}

Recall (\ref{eqn:appr_err}), we have
\begin{equation}\label{eqn:res1}\begin{split}
\left\|f^{\text{I}}_{\lambda}-\frac{q}{p}\right\|_{2,p} \leq & \sqrt{9cD_1\left\|\frac{q}{p}\right\|_{2,p}^2t^s+ 9cD_2\lambda \left\|\frac{q}{p}\right\|_{2,p}^2 } \leq  C_1 \left(\frac{t}{\log(\frac{1}{\lambda})}\right)^{s/2}+ C_2\lambda^{(1-\alpha)/2}
\end{split}\end{equation}
where $C_1 = 3\sqrt{cD_1\left\|\frac{q}{p}\right\|_{2,p}}, C_2 = 3\sqrt{cD_2\left\|\frac{q}{p}\right\|_{2,p}}$.

Applying Lemma \ref{lemma:error_bd}, we will have the result for manifold case, 
\begin{equation}\label{eqn:res1}\begin{split}
\left\|f^{\text{I}}_{\lambda}-\frac{q}{p}\right\|_{2,p} \leq & \sqrt{9cD_1\left\|\frac{q}{p}\right\|_{2,p}^2t^2+ 9cD_2\lambda \left\|\frac{q}{p}\right\|_{2,p}^2 } \\
\leq & C_1 t+ C_2\lambda^{1/2}
\end{split}\end{equation}
where $C_1 = 3\sqrt{cD_1\left\|\frac{q}{p}\right\|_{2,p}}, C_2 = 3\sqrt{cD_2\left\|\frac{q}{p}\right\|_{2,p}}$.
\end{proof}

\subsubsection{Bound for Sampling Error}
In the next lemma, we will give concentration of the sampling error, $\|f^{\text{I}}_{\lambda}-f^{\text{I}}_{\lambda,\z}\|_{2,p}$. 

\begin{lemma}\label{lemma:concen}
Let $p$ be a density of a probability measure over a domain $X$ and $q$ another density function. They satisfy the assumptions in \ref{sec:assumptions}. Consider $f^{\text{I}}_{\lambda}$ and $f^{\text{I}}_{\lambda,\z}$ defined in (\ref{eqn:tikh1}) and (\ref{eqn:algotype1}), with confidence at least $1-2e^{-\tau}$, we have
\[
 \left\|f^{\text{I}}_{\lambda}-f^{\text{I}}_{\lambda,\z}\right\|_{2,p} \leq C_3\left(\frac{\kappa_t\sqrt{\tau}}{\lambda \sqrt{m}}+\frac{k_t\sqrt{\tau}}{\lambda^{7/6} \sqrt{n}}\right)
\]
where $\kappa_t = \sup_{x\in \Omega}k_t(x,x) = \frac{1}{(2\pi  t)^{d/2}}$
\end{lemma}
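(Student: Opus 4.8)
The plan is to put both minimizers in closed operator form and then control their difference by a resolvent perturbation argument. Using the isometry $\|f\|_{\H}=\|\K_{t,p}^{-1/2}f\|_{2,p}$ and the substitution $f=\K_{t,p}^{1/2}g$, functional calculus turns the population problem (\ref{eqn:tikh1}) into $\min_g \|\K_{t,p}^{3/2}g-\K_{t,q}\const\|_{2,p}^2+\lambda\|g\|_{2,p}^2$, so that, exactly as in Lemma~\ref{lemma:appr_err_R},
\[
f^{\text{I}}_{\lambda}=\left(\K_{t,p}^3+\lambda\I\right)^{-1}\K_{t,p}^2\,\K_{t,q}\const .
\]
The empirical minimizer $f^{\text{I}}_{\lambda,\z}$ has the same algebraic shape with $\K_{t,p}$ replaced by the empirical operator $\K_{\z_p}$ and $\K_{t,q}\const$ replaced by $\K_{\z_q}\const$. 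The one wrinkle is that (\ref{eqn:algotype1}) fits in the empirical norm $\frac1n\sum_{x_i\in\z_p}(\cdot)^2$ rather than in $\|\cdot\|_{2,p}$, so the empirical solution is really $\K_{\z_p}$ sandwiched with the sampling (restriction) operator on $\z_p$. I would first record this closed form and, to keep the comparison clean, pass through an intermediate minimizer that uses the empirical operators but the true $L_{2,p}$ norm, folding the empirical-versus-population norm gap into the $n$-dependent term.

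The heart of the argument is the perturbation bound. Writing $A=\K_{t,p}$, $\hat A=\K_{\z_p}$, $b=\K_{t,q}\const$, $\hat b=\K_{\z_q}\const$, I would telescope
\[
f^{\text{I}}_{\lambda}-f^{\text{I}}_{\lambda,\z}=\underbrace{\left(\hat A^3+\lambda\I\right)^{-1}\hat A^2\,(b-\hat b)}_{\text{right-hand-side perturbation}}+\underbrace{\left[\left(A^3+\lambda\I\right)^{-1}A^2-\left(\hat A^3+\lambda\I\right)^{-1}\hat A^2\right]b}_{\text{operator perturbation}},
\]
and rewrite the operator-perturbation bracket with the resolvent identity
\[
\left(A^3+\lambda\I\right)^{-1}-\left(\hat A^3+\lambda\I\right)^{-1}=\left(A^3+\lambda\I\right)^{-1}\left(\hat A^3-A^3\right)\left(\hat A^3+\lambda\I\right)^{-1},
\]
expanding $\hat A^3-A^3$ and $A^2-\hat A^2$ as telescoping products that are linear in $\hat A-A$. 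Every scalar power of $\lambda$ then comes from functional-calculus estimates of the type $\sup_{0\le s\le c}\frac{s^\beta}{s^3+\lambda}$ applied to these self-adjoint operators: e.g.\ $\|(A^3+\lambda\I)^{-1}\|\le\lambda^{-1}$, $\|(A^3+\lambda\I)^{-1}A^2\|\le\lambda^{-1/3}$, $\|(A^3+\lambda\I)^{-1}A^{3/2}\|\le\tfrac12\lambda^{-1/2}$, and $\|(A^3+\lambda\I)^{-1}A\|\le\lambda^{-2/3}$. Distributing the powers of $A$ and $\hat A$ symmetrically across the two resolvents in the operator-perturbation term yields the factor $\lambda^{-7/6}$ multiplying $\|\hat A-A\|$, whereas the cruder estimate $\|(\hat A^3+\lambda\I)^{-1}\hat A^2\|\le\lambda^{-1}$ governs the right-hand-side term.

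It then remains to bound the two random quantities $\|b-\hat b\|$ and $\|\hat A-A\|$. Both are centered averages of i.i.d.\ terms: $\hat b=\frac1m\sum_{x_j'\in\z_q}k_t(x_j',\cdot)$ has mean $b$, and $\hat A$ is the empirical analogue built from $\frac1n\sum_{x_i\in\z_p}k_t(x_i,\cdot)\otimes k_t(x_i,\cdot)$ with mean $A$. Each summand is bounded, in the relevant $L_{2,p}$ (resp.\ Hilbert--Schmidt) norm, by a constant multiple of $\kappa_t=\sup_x k_t(x,x)=(2\pi t)^{-d/2}$, using the crude estimate $\|k_t(x',\cdot)\|_{2,p}\le\kappa_t$. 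A Hoeffding/Bernstein inequality for Hilbert-space--valued (operator-valued) random variables then gives, with probability at least $1-e^{-\tau}$ for each, $\|b-\hat b\|\le C\kappa_t\sqrt{\tau}/\sqrt{m}$ and $\|\hat A-A\|\le C\kappa_t\sqrt{\tau}/\sqrt{n}$; a union bound upgrades this to confidence $1-2e^{-\tau}$. Combining with the $\lambda^{-1}$ and $\lambda^{-7/6}$ factors produces the stated bound.

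I expect the main obstacle to be the bookkeeping that pins down the exact exponent $\lambda^{-7/6}$ in the $n$-term. Because $A$ and $\hat A$ do not commute, the functional-calculus bounds cannot be applied to a single operator, so one must split the powers of $A$ and $\hat A$ across the two resolvents in precisely the right balanced way, while at the same time absorbing the empirical-norm-versus-$L_{2,p}$-norm discrepancy created by the double use of the sample $\z_p$ in (\ref{eqn:algotype1}). By comparison, the concentration step is routine once the i.i.d.\ summands are identified and bounded by $\kappa_t$.
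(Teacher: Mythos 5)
Your overall architecture matches the paper's: closed forms $f^{\text{I}}_{\lambda}=(\K_{t,p}^3+\lambda\I)^{-1}\K_{t,p}^2\K_{t,q}\const$ and $f^{\text{I}}_{\lambda,\z}=(\K_{\z_p}^3+\lambda\I)^{-1}\K_{\z_p}^2\K_{\z_q}\const$ (note the empirical data-fit norm is already absorbed into the extra power of $\K_{\z_p}$, so your proposed intermediate minimizer with empirical operators but population norm is unnecessary), an intermediate quantity, a resolvent/telescoping expansion linear in $\K_{\z_p}-\K_p$ and $\K_{\z_q}-\K_q$, Hilbert-space concentration with the $\kappa_t$ bound, and a union bound. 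Up to reordering, your two-term split is algebraically the same as the paper's decomposition through $\tilde f=(\K_{\z_p}^3+\lambda\I)^{-1}\K_p^2\K_q\const$.

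The genuine gap is in the step you yourself flag as the main obstacle: where the exponent $\lambda^{-7/6}$ comes from. Your proposed mechanism --- distributing powers of $A$ and $\hat A$ across the two resolvents and invoking bounds like $\|(A^3+\lambda\I)^{-1}A^2\|\le\lambda^{-1/3}$ --- cannot be executed, because those functional-calculus estimates apply only to powers of the \emph{same} self-adjoint operator, while the perturbation term necessarily interleaves $A=\K_{t,p}$ and $\hat A=\K_{\z_p}$; a mixed product such as $(\hat A^3+\lambda\I)^{-1}A^2$ admits only the crude bound $\lambda^{-1}\|A\|^2$. Moreover $\hat A-A$ is only controlled as an operator on $\H$ (the empirical operator requires point evaluations), so in the dominant term $(\hat A^3+\lambda\I)^{-1}A^2(\hat A-A)f^{\text{I}}_{\lambda}$ one is forced to pay $\|f^{\text{I}}_{\lambda}\|_{\H}$. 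The paper's resolution, absent from your sketch, is the estimate
\[
\|f^{\text{I}}_{\lambda}\|_{\H}^2=\sum_i\frac{\sigma_i^5}{(\sigma_i^3+\lambda)^2}\Bigl\langle\tfrac{q}{p},u_i\Bigr\rangle^2\le\Bigl(\sup_{\sigma>0}\tfrac{\sigma^5}{(\sigma^3+\lambda)^2}\Bigr)\Bigl\|\tfrac{q}{p}\Bigr\|_{2,p}^2\le c^2\lambda^{-1/3}\Bigl\|\tfrac{q}{p}\Bigr\|_{2,p}^2,
\]
which uses $\K_{t,q}\const=\K_{t,p}\frac{q}{p}$ so that $f^{\text{I}}_{\lambda}=(\K_{t,p}^3+\lambda\I)^{-1}\K_{t,p}^3\frac{q}{p}$; the resulting $\lambda^{-1/6}$, multiplied by the $\lambda^{-1}$ from the resolvent, is exactly the $\lambda^{-7/6}$ in the $n$-term. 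Without this ingredient your bookkeeping yields at best $\lambda^{-4/3}$ (or $\lambda^{-2}$ with the naive ordering) for that term, so the stated exponent is not reached.
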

\begin{proof}
Recall that, 
\[
f^{\text{I}}_{\lambda} = \arg \min_{f\in H_{t}} \|\K_{t,p} f - \K_{t,q}\const \|_{2,p}^2 + \lambda \|f\|_{H_t}^2
\]
and
\[
f^{\text{I}}_{\lambda,\z} = \arg \min_{f\in H_{t,\z}} \frac{1}{n}\sum_{x_i^p\in \z_p}\left( (\K_{\z_p} f)(x_i^p)-(\K_{\z_q} \const)(x_i^p)\right)^2 + \lambda \|f\|_{H_t}^2
\]

Using functional calculus, we will get the explicit formula for $f_{\lambda}^{\text{I}}$ and $f_{\lambda,\z}^{\text{I}}$ as follows,
\[
 f_{\lambda}^{\text{I}} = \left(\K_p^3+\lambda \I\right)^{-1} \K_p^2 \K_q \const
\]
and 
\[
 f_{\lambda,\z}^{\text{I}} = \left(\K_{\z_p}^3+\lambda \I\right)^{-1} \K_{\z_p}^2 \K_{\z_q} \const.
\]
Then the bound for sampling error is to bound the above two objects. Let $\tilde{f} = \left(\K_{\z_p}^3+\lambda \I\right)^{-1} \K_p^2 \K_q\const$. We have  $f_{\lambda}^{\text{I}}-f_{\lambda,\z}^{\text{I}} = f_{\lambda}^{\text{I}}-\tilde{f}+\tilde{f}-f_{\lambda,\z}^{\text{I}}$. For $f_{\lambda}^{\text{I}}-\tilde{f}$, using the fact that $\left(\K_p^3+\lambda \I\right)f_{\lambda}^{\text{I}} = \K_p^2 \K_q \const$, we have
\[\begin{split}
& f_{\lambda}^{\text{I}}-\tilde{f} \\
= & f_{\lambda}^{\text{I}} -\left(\K_{\z_p}^3+\lambda \I\right)^{-1} \left(\K_p^3+\lambda \I\right)f_{\lambda}^{\text{I}} \\
= & \left(\K_{\z_p}^3+\lambda \I\right)^{-1}\left( \K_{\z_p}^3 - \K_p^3\right)f_{\lambda}^{\text{I}} \\
\end{split}\]
And
\[\begin{split}
& \tilde{f}-f_{\lambda,\z}^{\text{I}} \\
= & \left(\K_{\z_p}^3+\lambda \I\right)^{-1} \K_p^2 \K_q \const -\left(\K_{\z_p}^3+\lambda \I\right)^{-1} \K_{\z_p}^2 \K_{\z_q} \const \\
= & \left(\K_{\z_p}^3+\lambda \I\right)^{-1}\left( \K_p^2\K_q - \K_{\z_p}^2\K_{\z_q}\right)\const \\
\end{split}\]

Notice that we have $\K_{\z_p}^3-\K_p^3$ and $\K_{\z_p}^2\K_{\z_q}-\K_p^2\K_q$ in the identity we get. For these two objects, it is not hard to verify the following identities,
\[\begin{split}
&\K_{\z_p}^3-\K_p^3\\
=&\left(\K_{\z_p}-\K_p\right)^3+ \K_p\left(\K_{\z_p}-\K_p\right)^2+\left(\K_{\z_p}-\K_p\right)\K_p\left(\K_{\z_p}-\K_p\right)+\left(\K_{\z_p}-\K_p\right)^2\K_p\\
&+\K_p^2\left(\K_{\z_p}-\K_p\right)+ \K_p\left(\K_{\z_p}-\K_p\right)\K_p+\left(\K_{\z_p}-\K_p\right)\K_p^2 .
\end{split}\]
And
\[\begin{split}
&\K_{\z_p}^2\K_{\z_q}-\K_p^2\K_q\\
=&\left(\K_{\z_p}-\K_p\right)^2\left(\K_{\z_q}-\K_q\right)+ \K_p\left(\K_{\z_p}-\K_p\right)\left(\K_{\z_q}-\K_q\right)+\left(\K_{\z_p}-\K_p\right)\K_p\left(\K_{\z_q}-\K_q\right)\\
&+\K_p^2\left(\K_{\z_q}-\K_q\right)+\left(\K_{\z_p}-\K_p\right)^2\K_q+ \K_p\left(\K_{\z_p}-\K_p\right)\K_q+\left(\K_{\z_p}-\K_p\right)\K_p \K_q.
\end{split}\]

Thus, in these two identities, the only two random variables are $\K_{\z_p}-\K_p$ and $\K_{\z_q}-\K_q$. By results about concentration of $\K_{\z_p}$ and $\K_{\z_q}$, we have with probability $1-2e^{-\tau}$,
\begin{equation}\label{eqn:conc_ineq}\begin{split}
 &\|\K_{\z_p}-\K_p\|_{\H\rightarrow \H} \leq \frac{\kappa_t \sqrt{\tau}}{\sqrt{n}}, \\
 &\|\K_{\z_q}-\K_q\|_{\H\rightarrow \H} \leq \frac{\kappa_t \sqrt{\tau}}{\sqrt{m}}, \\
 &\left\|\K_{\z_q}\const - \K_{q}\const\right\|_{\H} \leq \frac{\kappa_t\sqrt{2\tau}}{\sqrt{m}}
\end{split}
\end{equation}
And we know that for a large enough constant $c$ which is independent of $t$ and $\lambda$,
\[\begin{split}
 \|\K_{p}\|_{\H\rightarrow \H}<c, \|\K_{q}\|_{\H\rightarrow \H}<c, \left\|\left(\K_{\z_p}^3+\lambda \I\right)^{-1}\right\|_{\H\rightarrow \H} \leq \frac{1}{\lambda}, \|\K_q \const\|_{\H}<c
\end{split}\]
and 
\[
 \|f_{\lambda}^{\text{I}}\|_{\H}^2 = \sum_i \frac{\sigma_i^5}{(\sigma_i^3+\lambda)^2}\left\langle\frac{q}{p}, u_i\right\rangle^2 \leq \left(\sup_{\sigma>0}\frac{\sigma^5}{(\sigma^3+\lambda)^2}\right)\sum_i\left\langle\frac{q}{p} , u_i\right\rangle^2 \leq c^2\frac{1}{\lambda^{1/3}}\left\|\frac{q}{p}\right\|_{2,p}^2
\]
thus, $\|f_{\lambda}^{\text{I}}\|_{\H} \leq \frac{c}{\lambda^{1/6}} \left\|\frac{q}{p}\right\|_{2,p}$. 

Notice that $\left\|\left(\K_{\z_p}-\K_p\right)^2\right\|_{\H}\leq \left\|\K_{\z_p}-\K_p\right\|_{\H}^2$ and $\left\|\left(\K_{\z_p}-\K_p\right)^3\right\|_{\H}\leq \left\|\K_{\z_p}-\K_p\right\|_{\H}^3$, both of this could be of smaller order compared with $\left\|\K_{\z_p}-\K_p\right\|_{\H}$. For simplicity we hide the term including them in the final bound without changing the dominant order. We could also hide the terms with the product of any two the random variables in Eq.~\ref{eqn:conc_ineq}, which is of prior order compared to the term with only one random variable. Now let us put everything together,
\[\begin{split}
 &\|f^{\text{I}}_{\lambda}-f^{\text{I}}_{\lambda,\z}\|_{2,p} \leq c^{1/2} \|f^{\text{I}}_{\lambda}-f^{\text{I}}_{\lambda,\z}\|_{H_t} \\
\leq & c^{1/2}\left(\frac{c^3\kappa_t \sqrt{\tau}}{\lambda^{7/6}\sqrt{n}}\left\|\frac{q}{p}\right\|_{2,p}+ \frac{c^2 \kappa_t\sqrt{\tau}}{\lambda \sqrt{m}}\right) \\
\leq & C_3\left(\frac{\kappa_t\sqrt{\tau}}{\lambda \sqrt{m}}+\frac{\kappa_t\sqrt{\tau}}{\lambda^{7/6} \sqrt{n}}\right)
\end{split}\]
where $C_3 = c^{5/2}\max\left(c\left\|\frac{q}{p}\right\|_{2,p},1\right)$.
\end{proof}

\section{Experiments}\label{sec:experiments}
In this section we explore the empirical performance of our methods under various settings.  
We will primarily concentrate on our setting Type II and use the same Gaussian kernel for the integral operator and the regularization term to simplify model selection. 

This section is organized as follows. 
In Subsection~\ref{subsec:param}  we describe a completely unsupervised procedure for parameter selection, which will be used throughout the experimental section.
In Subsection~\ref{subsec:data} we briefly describe the data sets and the re-sampling procedures we use. 
In Subsection~\ref{subsec:testcv} we provide a comparison between our  methods using different norms and other methods based on the expected performance  under our evaluation criteria. In Subsection~\ref{subsec:learning} we provide a number of experiments comparing our method to different methods on several different data sets for classification and regression tasks. Finally in Subsection~\ref{subsec:toy1} we study performance of different kernels in both Type-I and Type-II setting using two simulated data sets.

\subsection{Experimental Setting and Model Selection}\label{subsec:param}
\noindent{\bf  The setting:} In our experiments, we have a set of a data set $X^p=\{x_i^p, i=1,2,\dots,n\}$ and another set of instances $X^q=\{x_j^q, j=1,2,\dots, m\}$. The goal is to estimate $\frac{q}{p}$ under the assumption that $X^p$ is sampled from $p$ and $X^q$ is sampled from $q$.

We note that our algorithms typically has two parameters, which need to be selected, the kernel width $t$ and the regularization parameter $\lambda$. In general choosing parameters in a unsupervised or semi-supervised setting is a hard problem as it may be difficult to validate the resulting classifier/estimator. However, certain features of our setting allow us to construct an adequate unsupervised proxy for the performance of the algorithm. Now we construct a performance measure for the quality of the estimator.

\noindent \textbf{Performance Measure.}  We describe a set of performance measures to use for parameter selection.

For a given function $u$, we have the following importance sampling equality (Eq.~\ref{eqn:imp_sampling}):
\[
 \E_q(u(x)) = \E_p\left(u(x)\frac{q(x)}{p(x)}\right).
\]
If $f(x)$ is an approximation of the true ratio $\frac{q}{p}$, using the samples from $X^p$ and $X^q$ respectively, we will have the following approximation to the above equation:
\[
 \frac{1}{n} \sum_{i=1}^nu(x_i^p)f(x_i^p) \approx \frac{1}{m}\sum_{j=1}^mu(x_j^q). 
\]
Therefore, after obtaining an estimate $f$ of the ratio, we can validate it by using a set of test functions $U = \{u_1,u_2,\dots,u_F\}$ using the following performance measure:
\begin{equation}\label{eqn:cv_err_mea}
 J(f; X^p, X^q, U) = \frac{1}{F}\sum_{l=1}^F\left(\sum_{i=1}^nu_l(x_i^p)f(x_i^p) - \sum_{j=1}^mu_l(x_j^q)\right)^2
\end{equation}
where $U = \{u_1,u_2,\dots,u_F\}$ is a collection of function chosen as criterion.  Using this performance measure allows various cross-validation procedures to be sued for parameter selection.

We note that this way of measuring the error is related to the LSIF~\cite{kanamori2009least} and KLIEP~\cite{sugiyama2008direct},  algorithms. However, there a similar measure is used to construct an approximation to the ratio $frac{q}{p}$ using  functions $u_1,\ldots, u_F$ as a basis. In our setting, to choose parameters, we can  use validations sets (such as linear functions) which are poorly suited as a basis for approximating the density ratio.

\noindent \textbf{Choice of validation function sets for parameter selection.}  
In principle, any set of (sufficiently well-behaved) functions can be used as a validation set. From a practical point of view we would like functions to be simple to compute and readily available for different data sets.

In the our experiments, we will use the following two families of functions for parameter tuning:
\begin{enumerate}[(1)]
\item{
 Sets of random linear functions $u(x)=\beta^T x$ where $\beta\sim N(0,Id)$.
}
\item{
Sets of random half-space indicator functions, $u(x)=\const_{\beta^T x>0}$ where $\beta\sim N(0,Id)$.
}
\end{enumerate}

\noindent\textbf{Remark 1:}
We have also tried (a) coordinates functions,  (b) random combination of kernel functions, and (c) random combination of kernel functions with thresholding.
In our experience the coordinate functions are not rich enough for adequate parameter tuning. 
On the other hand, using  the kernel functions significantly increases the complexity of the procedure (due to the necessity of choosing the kernel width and other parameters) without increasing the performance significantly.  

\noindent\textbf{Remark 2:}  Note that for linear functions, the cardinality of the set should not exceed the dimension of the space due to linear dependence.

\noindent\textbf{Remark 3:}  It appears that linear functions work well for regression tasks, while half-spaces are well-suited for classification.

\smallskip

\noindent \textbf{Procedures for  parameter selection.}

We optimize the performance using cross-validation by splitting the data set in two parts  $X^{p,train}$ and $X^{q,train}$ used for training and  $X^{p,cv}$ and $X^{q,cv}$ used for validation, and repeating this process five times to find the optimal values of parameters\footnote{We note that this procedure cannot be used with KMM as it has no out-of-sample extension. 
Therefore in subsection~\ref{subsec:testcv}  we do not compare our method with KMM since there is no obvious way to extend the results to the validation data set.}.

For the   two parameters which need to be tuned, the kernel width $t$ and the regularization parameter $\lambda$, we specify a parameter grid as follows. The range for kernel width $t$ is $(t_0, 2t_0, \dots, 2^9 t_0)$, where $t_0$ is the average distance of the 10 nearest neighbors, and regularization parameter $\lambda$ is $(1e-5, 1e-6, \dots, 1e-10)$. 

%


\subsection{Data sets and Resampling}
\label{subsec:data}
In our experiments, several data sets are considered: Bank8FM, CPUsmall and Kin8nm for regression; and USPS and 20 news groups for classification.

For each data set, we assume they are i.i.d. sampled from a distribution denoted by $p$. We draw the first $500$ or $1000$ points from the original data set as $X^p$. To obtain $X^q$, we apply a resampling scheme on the remaining points of the original data set. Two ways of resampling, using the features of the data and using the label information, are used (along the  lines similar to those proposed  in~\cite{gretton2009covariate}). 

Specifically, given a set of data points with labels $\{(x_1,y_1), (x_2,y_2), \dots,(x_n,y_n)\}$ we resample as follows:
\begin{itemize}
\item{
\textbf{Resampling using feature information (labels $y_i$ are not used).} We subsample the data points so that the probability $P_i$  of selecting the instance $i$, is defined by the following  (sigmoid) function:
\[
 P_i = \frac{e^{(a\langle x_i, e_1\rangle -b)/\sigma_v}}{1+e^{(a\langle x_i, e_1\rangle -b)/\sigma_v}}
\]
where $a,b$ are the resampling parameters, $e_1$ is the first principal component, and $\sigma_v$ is the standard deviation of the projection to $e_1$. Note that in this resampling scheme, the probability of taking one point is only conditioned on the feature information $x_i$. This resampling method will be denoted by PCA$(a,b)$.
}
\item{
\textbf{Resampling using label information.} The probability of selecting the $i$\rq{}th instance, denoted by $P_i $, is defined by
\[
 P_i = \begin{cases}
1 \quad y_1\in L_q\\
0 \quad \text{Otherwise.}                            
\end{cases}
\]
where $y_i\in L=\{1,2,\dots, k\}$ and $L_q$ is a subset of the complete label set $L$. We apply this for binary problems obtained by aggregating different classes in the multi-class setting. 
}
\end{itemize}


\subsection{Testing the FIRE algorithm}
\label{subsec:testcv}
In first experiment, we test our method for selecting the parameters, which is described in Section~\ref{subsec:param}, by focusing on the the error $J(f; X^p, X^q, U)$ in Eq.~\ref{eqn:cv_err_mea} for different function classes $U$. We use different families of functions for tuning parameters and validation. This measure is important because in practice the functions we are interested may not be in the collection we chosen for validation. To avoid confusion, we denote the function for cross validation by $f^{\text{cv}}$ and the function for measuring error by $f^{\text{err}}$.

We use the CPUsmall and USPS hand-written digits data sets. For each of them, we generate two data sets $X^p$ and $X^q$ using the resampling method, PCA$(a,\sigma_v)$, describe in Section~\ref{subsec:data}. We compare FIRE with several methods including TIKDE, LSIF. Figure~\ref{fig:cv_exp} gives an illustration of the procedure and usage of data for the experiments. And the results are shown in Table~\ref{tbl:cv_err2} and~\ref{tbl:cv_err1}. The numbers in the table are the average errors defined in Eq.~\ref{eqn:cv_err_mea} on held-out set $X^{\text{err}}$ over 5 trials, using different criterion functions $f^{\text{cv}}$(Columns) and error-measuring functions $f^{\text{err}}$(Row). $N$ is the number of random function we are using for the cross-validation.

\begin{figure}[h!]
\centering
\includegraphics[width=0.8\textwidth]{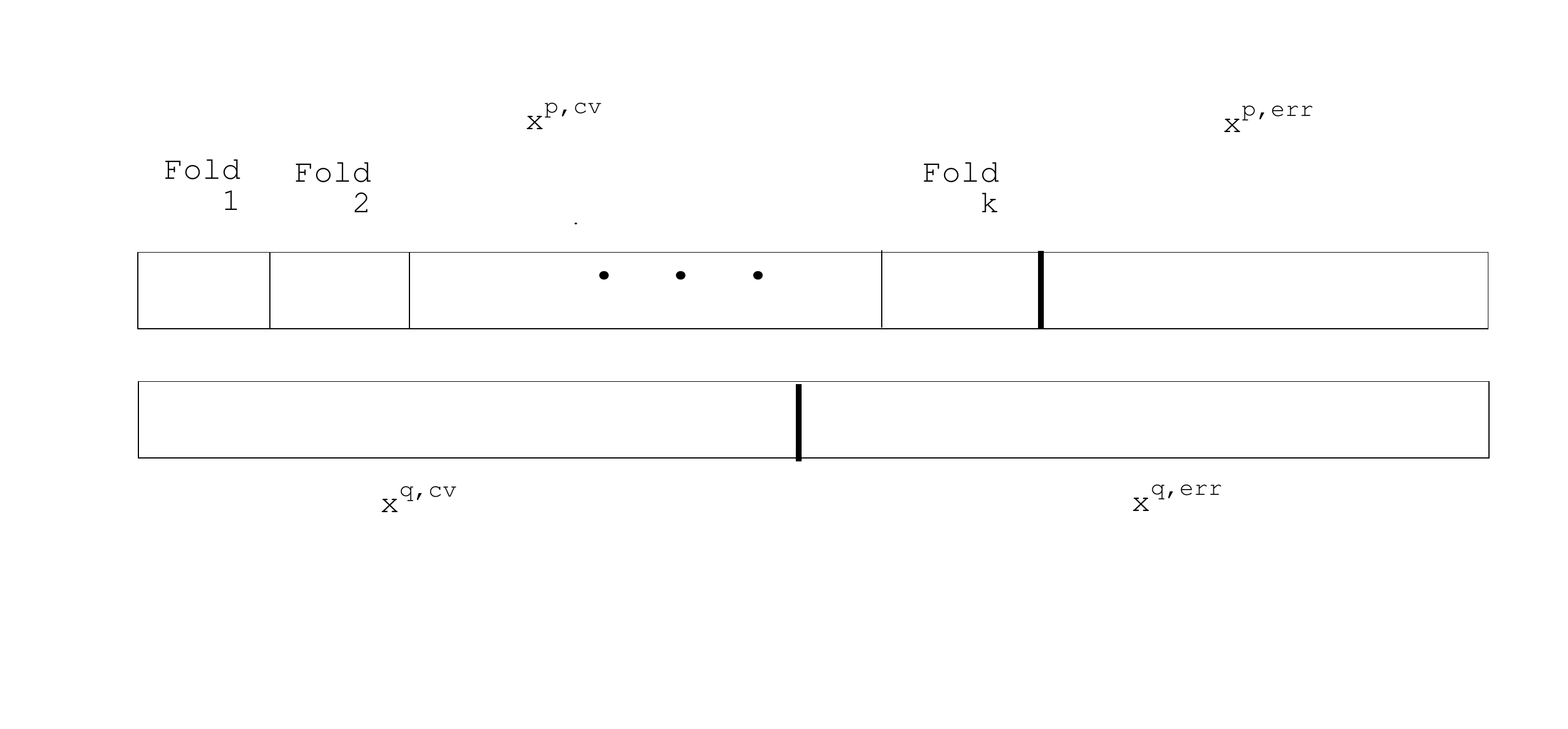}
\caption{First of all $X^p, X^q$ are split into $X^{p\text{,cv}}$ and $X^{p\text{,err}}$, $X^{q\text{,cv}}$ and $X^{q\text{,err}}$. Then we further split $X^{p\text{,cv}}$ into $k$ folds. For each fold $i$, density ratios at the sample points are estimated using only folds $j\neq i$ and $X^{q\text{,cv}}$, and compute the error using fold $i$ and $X^{q\text{,cv}}$. We choose the parameter gives the best average error over the $k$ folds of $X^{p\text{,cv}}$. And we measure the final performance using $X^{p\text{,err}}$ and $X^{q\text{,err}}$.}\label{fig:cv_exp}
\end{figure}

For the error-measuring functions, we have several choices as follows:
\begin{enumerate}[(1)]
\item{
Linear: Sets of Random linear functions $f(x)=\beta^T x$ where $\beta\sim N(0,Id)$.
}
\item{
Half-space: Sets of random half-space indicator functions, $f(x)=\const_{\beta^T x>0}$ where $\beta\sim N(0,Id)$.
}
\item{
Kernel: Sets of random linear combination of kernel functions centered at the training data, $f(x)=\gamma^T K$ where $\gamma\sim N(0,Id)$ and $K_{ij} = k(x_i,x_j)$ where $x_i$ are points from the data set.
}
\item{
K-indicator: Sets of random kernel indicator functions centered at the training data, $f=1_{\gamma^T K>0}$ where $\gamma\sim N(0,Id)$ and $K_{ij} = k(x_i,x_j)$ where $x_i$ are points from the data set.
}
\item{
Coord: Sets of coordinate functions.
}
\end{enumerate}

\begin{table*}[h!]
\caption{USPS data set with resampling using PCA$(5,\sigma_v)$, where $\sigma_v$ is the standard deviation of projected value on the first principal component. And $|X^p|=500$ and $|X^q|=1371$. Around 400 in $X^p$ and 700 in $X^q$ are used in 5-folds CV. }\label{tbl:cv_err2}
\centering
\vskip .1in
\begin{tabular}{|>{\small}c|>{\small}c|>{\small}c|>{\small}c|>{\small}c|>{\small}c|}
\hline
\multicolumn{2}{|c|}{} & \multicolumn{2}{c|}{Linear} & \multicolumn{2}{c|}{Half-spaces}  \\ \hline
\multicolumn{2}{|c|}{} & N=50 & N=200 & N=50 & N=200 \\ \hline\hline
\multirow{5}{*}{Linear} & TIKDE & 10.9 & 10.9 & 10.9 & 10.9\\ \cline{2-6}
 & LSIF & 14.1 & 14.1 & 26.8 & 28.2\\ \cline{2-6}
 & FIRE($L_{2,p}$) & \textbf{3.56} & 3.75 & 5.52 & 6.32\\ \cline{2-6}
 & FIRE($L_{2,p}+L_{2,q}$) & 4.66 & 4.69 & 7.35 & 6.82\\ \cline{2-6}
 & FIRE($L_{2,q}$) & 5.89 & 6.24 & 9.28 & 9.28\\ \hline\hline
\multirow{5}{*}{Half-spaces} & TIKDE & 0.0259 & 0.0259 & 0.0259 & 0.0259\\ \cline{2-6}
 & LSIF & 0.0388 & 0.0388 & 0.037 & 0.039\\ \cline{2-6}
 & FIRE($L_{2,p}$) & 0.00966 & \textbf{0.0091} & 0.0103 & 0.0118\\ \cline{2-6}
 & FIRE($L_{2,p}+L_{2,q}$) & 0.0094 & 0.0102 & 0.0143 & 0.0107\\ \cline{2-6}
 & FIRE($L_{2,q}$) & 0.0124 & 0.0135 & 0.0159 & 0.0159\\ \hline\hline
\multirow{5}{*}{Kernel} & TIKDE & 4.74 & 4.74 & 4.74 & 4.74\\ \cline{2-6}
 & LSIF & 16.1 & 16.1 & 15.6 & 13.8\\ \cline{2-6}
 & FIRE($L_{2,p}$) & 1.19 & \textbf{1.05} & 2.78 & 3.57\\ \cline{2-6}
 & FIRE($L_{2,p}+L_{2,q}$) & 2.06 & 1.99 & 4.2 & 2.59\\ \cline{2-6}
 & FIRE($L_{2,q}$) & 5.16 & 4.27 & 6.11 & 6.11\\ \hline\hline
\multirow{5}{*}{K-Indicator} & TIKDE & 0.0415 & 0.0415 & 0.0415 & 0.0415\\ \cline{2-6}
 & LSIF & 0.0435 & 0.0435 & 0.0531 & 0.044\\ \cline{2-6}
 & FIRE($L_{2,p}$) & 0.00862 & 0.00676 & 0.0115 & 0.0114\\ \cline{2-6}
 & FIRE($L_{2,p}+L_{2,q}$) & \textbf{0.00559} & 0.00575 & 0.0191 & 0.0108\\ \cline{2-6}
 & FIRE($L_{2,q}$) & 0.0117 & 0.00935 & 0.0217 & 0.0217\\ \hline\hline
\multirow{5}{*}{Coord.} & TIKDE & 0.0541 & 0.0541 & 0.0541 & 0.0541\\ \cline{2-6}
 & LSIF & 0.0647 & 0.0647 & 0.139 & 0.162\\ \cline{2-6}
 & FIRE($L_{2,p}$) & 0.0183 & \textbf{0.0165} & 0.032 & 0.0334\\ \cline{2-6}
 & FIRE($L_{2,p}+L_{2,q}$) & 0.0211 & 0.0201 & 0.0423 & 0.0355\\ \cline{2-6}
 & FIRE($L_{2,q}$) & 0.0277 & 0.0233 & 0.0496 & 0.0496\\ \hline
\end{tabular}
\end{table*}

\begin{table*}
\caption{CPUsmall data set with resampling using PCA$(5,\sigma_v)$, where $\sigma_v$ is the standard deviation of projected value on the first principal component. And $|X^p|=1000$ and $|X^q|=2000$. Around 800 in $X^p$ and 1000 in $X^q$ are used in 5-folds CV.}\label{tbl:cv_err1}
\centering
\vskip .1in
\begin{tabular}{|>{\small}c|>{\small}c|>{\small}c|>{\small}c|>{\small}c|>{\small}c|}
\hline
\multicolumn{2}{|c|}{} & \multicolumn{2}{c|}{Linear} & \multicolumn{2}{c|}{Half-spaces}  \\ \hline
\multicolumn{2}{|c|}{} & N=50 & N=200 & N=50 & N=200 \\ \hline\hline
\multirow{5}{*}{Linear} & TIKDE & 0.102 & 0.0965 & 0.102 & 0.0984\\ \cline{2-6}
 & LSIF & 0.115 & 0.115 & 0.115 & 0.115\\ \cline{2-6}
 & FIRE($L_{2,p}$) & 0.0908 & 0.0858 & 0.0891 & 0.0924\\ \cline{2-6}
 & FIRE($L_{2,p}+L_{2,q}$) & 0.0832 & 0.0825 & 0.0825 & \textbf{0.0718} \\ \cline{2-6}
 & FIRE($L_{2,q}$) & 0.0889 & 0.0907 & 0.0932 & 0.0899\\ \hline\hline
\multirow{5}{*}{Half-spaces} & TIKDE & 0.00469 & 0.00416 & 0.00469 & 0.00462\\ \cline{2-6}
 & LSIF & 0.00487 & 0.00487 & 0.00487 & 0.00487\\ \cline{2-6}
 & FIRE($L_{2,p}$) & 0.00393 & 0.00389 & 0.00435 & 0.00436\\ \cline{2-6}
 & FIRE($L_{2,p}+L_{2,q}$) & 0.00385 & 0.00383 & 0.00383 & \textbf{0.00345}\\ \cline{2-6}
 & FIRE($L_{2,q}$) & 0.00421 & 0.0044 & 0.00459 & 0.00427\\ \hline\hline
\multirow{5}{*}{Kernel} & TIKDE & 9.82 & 8.48 & 9.82 & 9.3\\ \cline{2-6}
 & LSIF & 9.6 & 9.6 & 9.6 & 9.6\\ \cline{2-6}
 & FIRE($L_{2,p}$) & 6.96 & 6.17 & 8.02 & 8.19\\ \cline{2-6}
 & FIRE($L_{2,p}+L_{2,q}$) & 6.62 & 6.62 & 6.62 & \textbf{6.35}\\ \cline{2-6}
 & FIRE($L_{2,q}$) & 7.23 & 7.17 & 7.44 & 7.38\\ \hline\hline
\multirow{5}{*}{K-Indicator} & TIKDE & 0.00411 & 0.00363 & 0.00411 & 0.00404\\ \cline{2-6}
 & LSIF & 0.00478 & 0.00478 & 0.00478 & 0.00478\\ \cline{2-6}
 & FIRE($L_{2,p}$) & 0.0033 & 0.00313 & 0.0036 & 0.00373\\ \cline{2-6}
 & FIRE($L_{2,p}+L_{2,q}$) & 0.00306 & 0.00306 & 0.00306 & \textbf{0.00288}\\ \cline{2-6}
 & FIRE($L_{2,q}$) & 0.00358 & 0.00354 & 0.00365 & 0.00366\\ \hline\hline
\multirow{5}{*}{Coord.} & TIKDE & 0.00784 & 0.0077 & 0.00784 & 0.00758\\ \cline{2-6}
 & LSIF & 0.00774 & 0.00774 & 0.00774 & 0.00774\\ \cline{2-6}
 & FIRE($L_{2,p}$) & 0.00696 & 0.00676 & 0.00681 & 0.00734\\ \cline{2-6}
 & FIRE($L_{2,p}+L_{2,q}$) & 0.00647 & 0.00637 & 0.00637 & \textbf{0.00584}\\ \cline{2-6}
 & FIRE($L_{2,q}$) & 0.00693 & 0.00692 & 0.00699 & 0.00689\\ \hline
\end{tabular}
\end{table*}

\clearpage

\subsection{Supervised Learning: Regression and Classification}
\label{subsec:learning}
In our experiments, we compare our method FIRE with several methods under the setting of supervised learning, i.e. regression and classification. More specifically, we consider the situation part or all of the training set  $X^p$  are labeled and all of $X^q$ are unlabeled. 
 In the following experiments, we will estimate the density ratio function  using  1000 points in $X^p$ and use the labeled data from  $X^p$ to build a regression function or classifier on $X^q$.

\subsubsection{Regression}
Given data sets $(X^p,Y^p)$ where $X^p$ is for independent variable, and $Y^p$ is for dependent variable, and a test data set $X^q$ with a different distribution, the regression problem is to obtain a function a predictor on $X^q$.  To make the comparison between unweighted regression method and different weighting schemes, we use the simplest regression method, the   least square linear regression. With this method, the regression function is of the form
\[
 f(x,\beta) = \beta^t x, 
\]
where $\beta=(XWX^T)^+XWY$ and $(\cdot)^+$ denotes the pseudo-inverse of a matrix. Here $W$ is a diagonal matrix with the estimated density ratio on the diagonal. These  are estimated using FIRE and other density ratio estimation methods for comparison. The results on 3 regression data sets are shown in Table~\ref{tbl:bank},~\ref{tbl:cpu} and~\ref{tbl:kin}.

\begin{table*}[h!]
\caption{CPUsmall resampled using PCA$(5,\sigma_v)$, where $\sigma_v$ is the standard deviation of projected value on the first principal component. $|X^p|=1000$, $|X^q|=2000$. }\label{tbl:cpu}
\centering\begin{tabular}{|>{\small}c|>{\small}c|>{\small}c|>{\small}c|>{\small}c|>{\small}c|>{\small}c|>{\small}c|>{\small}c|}
\hline
No. of Labeled & \multicolumn{2}{c|}{~100} & \multicolumn{2}{c|}{~200} & \multicolumn{2}{c|}{~500} & \multicolumn{2}{c|}{1000} \\ \hline
Weighting method & Linear & Half-spaces & Linear & Half-spaces & Linear & Half-spaces & Linear & Half-spaces \\ \hline
 OLS & \multicolumn{2}{c|}{0.740} & \multicolumn{2}{c|}{0.497} & \multicolumn{2}{c|}{0.828} & \multicolumn{2}{c|}{0.922} \\ \hline
TIKDE & 0.379&0.359&0.299&0.291&0.278&0.279&0.263&0.267 \\\hline 
KMM & 1.857&1.857&1.899&1.899&2.508&2.508&2.739&2.739 \\ \hline
LSIF & 0.390&0.390&0.309&0.309&0.329&0.329&0.314&0.314 \\ \hline
 FIRE($L_{2,p}$) & 0.327&0.327&0.286&0.286&0.272&0.272&0.260&0.260 \\ \hline
FIRE($L_{2,p}+L_{2,q}$) & 0.326&0.330&0.285&0.287&0.272&0.272&0.261&\textbf{0.259} \\ \hline
FIRE($L_{2,q}$) &  \textbf{0.324}&0.333&\textbf{0.284}&0.288&\textbf{0.271}&0.272&0.261&0.260 \\ \hline
\end{tabular}
\caption{Kin8nm resampled using PCA$(10,\sigma_v)$. $|X^p|=1000$, $|X^q|=2000$.}\label{tbl:kin}
\centering\begin{tabular}{|>{\small}c|>{\small}c|>{\small}c|>{\small}c|>{\small}c|>{\small}c|>{\small}c|>{\small}c|>{\small}c|}
\hline
No. of Labeled & \multicolumn{2}{c|}{~100} & \multicolumn{2}{c|}{~200} & \multicolumn{2}{c|}{~500} & \multicolumn{2}{c|}{1000} \\ \hline
Weighting method & Linear & Half-spaces & Linear & Half-spaces & Linear & Half-spaces & Linear & Half-spaces \\ \hline
 OLS & \multicolumn{2}{c|}{0.588} & \multicolumn{2}{c|}{0.552} & \multicolumn{2}{c|}{0.539} & \multicolumn{2}{c|}{0.535} \\ \hline
TIKDE & 0.572&0.574&0.545&0.545&0.526&0.529&0.523&0.524 \\\hline 
KMM & 0.582&0.582&0.547&0.547&0.522&0.522&\textbf{0.514}&\textbf{0.514} \\ \hline
LSIF & 0.565&0.563&0.543&0.541&0.520&0.520&0.517&0.516 \\ \hline
 FIRE($L_{2,p}$) & 0.567&\textbf{0.560}&0.548&\textbf{0.540}&0.524&\textbf{0.519}&0.522&0.515 \\ \hline
FIRE($L_{2,p}+L_{2,q}$) & 0.563&\textbf{0.560}&0.546&\textbf{0.540}&0.522&\textbf{0.519}&0.520&0.515 \\ \hline
FIRE($L_{2,q}$) &  0.563&\textbf{0.560}&0.546&0.541&0.522&\textbf{0.519}&0.520&0.515 \\ \hline
\end{tabular}
\caption{Bank8FM resampled using PCA$(1,\sigma_v)$. $|X^p|=1000$, $|X^q|=2000$.}\label{tbl:bank}
\centering\begin{tabular}{|>{\small}c|>{\small}c|>{\small}c|>{\small}c|>{\small}c|>{\small}c|>{\small}c|>{\small}c|>{\small}c|}
\hline
No. of Labeled & \multicolumn{2}{c|}{~100} & \multicolumn{2}{c|}{~200} & \multicolumn{2}{c|}{~500} & \multicolumn{2}{c|}{1000} \\ \hline
Weighting method & Linear & Half-spaces & Linear & Half-spaces & Linear & Half-spaces & Linear & Half-spaces \\ \hline
 OLS & \multicolumn{2}{c|}{0.116} & \multicolumn{2}{c|}{0.111} & \multicolumn{2}{c|}{0.105} & \multicolumn{2}{c|}{0.101} \\ \hline
TIKDE & 0.111&0.111&\textbf{0.100}&\textbf{0.100}&\textbf{0.096}&\textbf{0.096}&\textbf{0.092}&\textbf{0.092} \\\hline 
KMM & 0.112&0.161&0.103&0.164&0.099&0.180&0.095&0.178 \\ \hline
LSIF & 0.113&0.113&0.109&0.109&0.104&0.104&0.099&0.099 \\ \hline
 FIRE($L_{2,p}$) & \textbf{0.110}&\textbf{0.110}&0.101&0.102&0.097&0.097&0.093&0.094 \\ \hline
FIRE($L_{2,p}+L_{2,q}$) & 0.113&\textbf{0.110}&0.103&0.102&0.099&0.097&0.097&0.094 \\ \hline
FIRE($L_{2,q}$) &  0.112&0.118&0.102&0.106&0.099&0.103&0.096&0.102 \\ \hline
\end{tabular}
\vskip -0.1in
\end{table*}

\subsubsection{Classification}    
Similarly to the case of regression the density ratio can also be used for building a classifier such as SVM. Given a set of labeled data, $\{(x_1,y_1)$,$(x_2,y_2)$, $\dots$, $(x_n,y_n)\}$ and $x_i\sim q$, we building a linear classifier $f$ by the weighted linear SVM algorithm as follows:
\[
 f = \arg \min_{\beta\in R^d} \frac{C}{n}\sum_{i=1}^n w_i(\beta^Tx_i - y_i)_+ +\|\beta\|_2^2
\]
The weights $w_i$'s are obtained by various density ratios estimation algorithms using two data sets $X^p$ and $X^q$. Note that estimating the density ratios using $X^p$ and $X^q$ is completely independent of the label information. We also explore the performance of these weighted SVM as the number of labeled points used for training classifier changes. In the experiments, we first estimate the density ratios on the whole $X^p$ with the parameters selected by cross validation. Then we subsample a portion of $X^p$ and use their labels to train the classifier. And the performance of the classifier in terms of prediction error is estimated using all the points in $X^q$. The results on USPS hand-written digits and 20 news groups are shown in Table \ref{tbl:usps_x}, \ref{tbl:usps_y}, \ref{tbl:20n_x} and \ref{tbl:20n_y}.

\begin{table*}[h!]
\caption{USPS resampled using \textbf{Feature} information, PCA$(5,\sigma_v)$, where $\sigma_v$ is the standard deviation of projected value on the first principal component. $|X^p|=1000$ and $|X^q|=1371$, with $0-4$ as $-1$ class and $5-9$ as $+1$ class.}\label{tbl:usps_x}
\centering\begin{tabular}{|>{\small}c|>{\small}c|>{\small}c|>{\small}c|>{\small}c|>{\small}c|>{\small}c|>{\small}c|>{\small}c|}
\hline
No. of Labeled & \multicolumn{2}{c|}{~100} & \multicolumn{2}{c|}{~200} & \multicolumn{2}{c|}{~500} & \multicolumn{2}{c|}{1000} \\ \hline
Weighting method & Linear & Half-spaces & Linear & Half-spaces & Linear & Half-spaces & Linear & Half-spaces \\ \hline
SVM & \multicolumn{2}{c|}{0.102} & \multicolumn{2}{c|}{0.081} & \multicolumn{2}{c|}{0.057} & \multicolumn{2}{c|}{0.058} \\ \hline
TIKDE & 0.094&0.094&0.072&0.072&0.049&0.049&0.042&0.042 \\\hline 
KMM & 0.081&0.081&0.059&0.059&0.047&0.047&0.044&0.044 \\ \hline
LSIF & 0.095&0.102&0.073&0.081&0.050&0.057&0.044&0.058 \\ \hline
 FIRE($L_{2,p}$) & 0.089&0.068&0.053&0.050&\textbf{0.041}&\textbf{0.041}&0.037&0.036 \\ \hline
FIRE($L_{2,p}+L_{2,q}$) & 0.070&0.070&0.051&0.051&\textbf{0.041}&\textbf{0.041}&0.036&0.036 \\ \hline
FIRE($L_{2,q}$) &  \textbf{0.055}&0.073&\textbf{0.048}&0.054&\textbf{0.041}&0.044&\textbf{0.034}&0.039 \\ \hline
\end{tabular}
\caption{USPS resampled based on \textbf{Label} information, $X^q$ only contains point with labels in $L'=\{0,1,5,6\}$. The binary classes are with $+1$ class$=\{0,1,2,3,4\}$, $-1$ class$=\{5,6,7,8,9\}$. And $|X^p|=1000$ and $|X^q|=2000$.}\label{tbl:usps_y}
\centering\begin{tabular}{|>{\small}c|>{\small}c|>{\small}c|>{\small}c|>{\small}c|>{\small}c|>{\small}c|>{\small}c|>{\small}c|}
\hline
No. of Labeled & \multicolumn{2}{c|}{~100} & \multicolumn{2}{c|}{~200} & \multicolumn{2}{c|}{~500} & \multicolumn{2}{c|}{1000} \\ \hline
Weighting method & Linear & Half-spaces & Linear & Half-spaces & Linear & Half-spaces & Linear & Half-spaces \\ \hline
SVM & \multicolumn{2}{c|}{0.186} & \multicolumn{2}{c|}{0.164} & \multicolumn{2}{c|}{0.129} & \multicolumn{2}{c|}{0.120} \\ \hline
TIKDE & 0.185&0.185&0.164&0.164&0.124&0.124&0.105&0.105 \\\hline 
KMM & \textbf{0.175}&\textbf{0.175}&\textbf{0.135}&\textbf{0.135}&\textbf{0.103}&\textbf{0.103}&\textbf{0.085}&\textbf{0.085} \\ \hline
LSIF & 0.185&0.185&0.162&0.163&0.122&0.122&0.108&0.108 \\ \hline
 FIRE($L_{2,p}$) & 0.179&0.184&0.161&0.161&0.115&0.120&0.107&0.105 \\ \hline
FIRE($L_{2,p}+L_{2,q}$) & 0.180&0.185&0.161&0.162&0.116&0.120&0.106&0.107 \\ \hline
FIRE($L_{2,q}$) & 0.183&0.184&0.160&0.162&0.118&0.120&0.106&0.103 \\ \hline
\end{tabular}
\end{table*}
\clearpage
\begin{table*}[h!]
\caption{20 News groups resampled using \textbf{Feature} information, PCA$(5,\sigma_v)$, where $\sigma_v$ is the standard deviation of projected value on the first principal component. $|X^p|=1000$ and $|X^q|=1536$, with $\{2,4,\dots, 20\}$ as $-1$ class and $\{1,3,\dots,19\}$ as $+1$ class.}\label{tbl:20n_x}
\centering\begin{tabular}{|>{\small}c|>{\small}c|>{\small}c|>{\small}c|>{\small}c|>{\small}c|>{\small}c|>{\small}c|>{\small}c|}
\hline
No. of Labeled & \multicolumn{2}{c|}{~100} & \multicolumn{2}{c|}{~200} & \multicolumn{2}{c|}{~500} & \multicolumn{2}{c|}{1000} \\ \hline
Weighting method & Linear & Half-spaces & Linear & Half-spaces & Linear & Half-spaces & Linear & Half-spaces \\ \hline
SVM & \multicolumn{2}{c|}{0.326} & \multicolumn{2}{c|}{0.286} & \multicolumn{2}{c|}{0.235} & \multicolumn{2}{c|}{0.204} \\ \hline
TIKDE & 0.326&0.326&0.286&0.285&0.235&0.235&0.204&0.204 \\\hline 
KMM & 0.338&0.338&0.303&0.303&0.252&0.252&0.242&0.242 \\ \hline
LSIF & 0.329&0.325&0.297&0.285&0.238&0.235&0.210&0.204 \\ \hline
 FIRE($L_{2,p}$) & \textbf{0.314}&0.324&0.276&0.278&\textbf{0.231}&0.234&0.202&0.210 \\ \hline
FIRE($L_{2,p}+L_{2,q}$) & 0.315&0.323&0.276&0.277&0.232&0.233&0.200&0.208 \\ \hline
FIRE($L_{2,q}$) & 0.317&0.321&0.277&\textbf{0.275}&0.232&\textbf{0.231}&\textbf{0.197}&0.207 \\ \hline
\end{tabular}
\caption{20 News groups resampled based on \textbf{Label} information, $X^q$ only contains point with labels in $L'=\{1,2,\dots,8\}$. The binary classes are with $+1$ class$=\{1,2,3,4\}$, $-1$ class$=\{5,6,\dots, 20\}$. $|X^p|=1000$ and $|X^q|=4148$.}\label{tbl:20n_y}
\centering\begin{tabular}{|>{\small}c|>{\small}c|>{\small}c|>{\small}c|>{\small}c|>{\small}c|>{\small}c|>{\small}c|>{\small}c|}
\hline
No. of Labeled & \multicolumn{2}{c|}{~100} & \multicolumn{2}{c|}{~200} & \multicolumn{2}{c|}{~500} & \multicolumn{2}{c|}{1000} \\ \hline
Weighting method & Linear & Half-spaces & Linear & Half-spaces & Linear & Half-spaces & Linear & Half-spaces \\ \hline
SVM & \multicolumn{2}{c|}{0.354} & \multicolumn{2}{c|}{0.333} & \multicolumn{2}{c|}{0.300} & \multicolumn{2}{c|}{0.284} \\ \hline
TIKDE & 0.354&0.353&0.334&0.335&0.299&0.298&0.281&0.285 \\\hline 
KMM &  0.368&0.368&0.341&0.341&\textbf{0.295}&\textbf{0.295}&\textbf{0.270}&\textbf{0.270} \\ \hline
LSIF & 0.353&0.354&0.336&0.334&0.304&0.305&0.286&0.284 \\ \hline
 FIRE($L_{2,p}$) & \textbf{0.347}&0.348&0.334&0.332&0.303&0.300&0.282&0.277 \\ \hline
FIRE($L_{2,p}+L_{2,q}$) & 0.348&0.348&0.332&0.332&0.301&0.301&0.277&0.277 \\ \hline
FIRE($L_{2,q}$) & \textbf{0.347}&0.349&\textbf{0.330}&\textbf{0.330}&0.303&0.300&0.284&0.278 \\ \hline
\end{tabular}
\end{table*}

\subsection{Simulated Examples}
\label{subsec:toy1}

\subsubsection{Simulated Dataset 1.}

We  use a simple example, where the two densities are known, to demonstrate the properties of our methods and how the number of data points influences the performance.

For this experiment, we suppose $p = 0.5N(-2, 1^2)+0.5N(2, 0.5^2)$ and $q = N(0, 0.5^2)$ and fix $|X^q|=2000$, and vary $|X_p|$ from 50 to 1000. We compare our method with the other two methods for the same problem: TIKDE and KMM. For all the methods we consider in this experiment, we will choose the optimal parameter based on the empirical $L_2$ norm of the difference between the estimated ratio and the true ratio, which is supposed to be known in this toy example. Figure~\ref{fig:1d_2sp} gives the reader an intuition about how the estimated ratios behave for different methods.
\begin{figure}[ht]
\centering
\subfigure[TIKDE]{\includegraphics[width=0.30\textwidth]{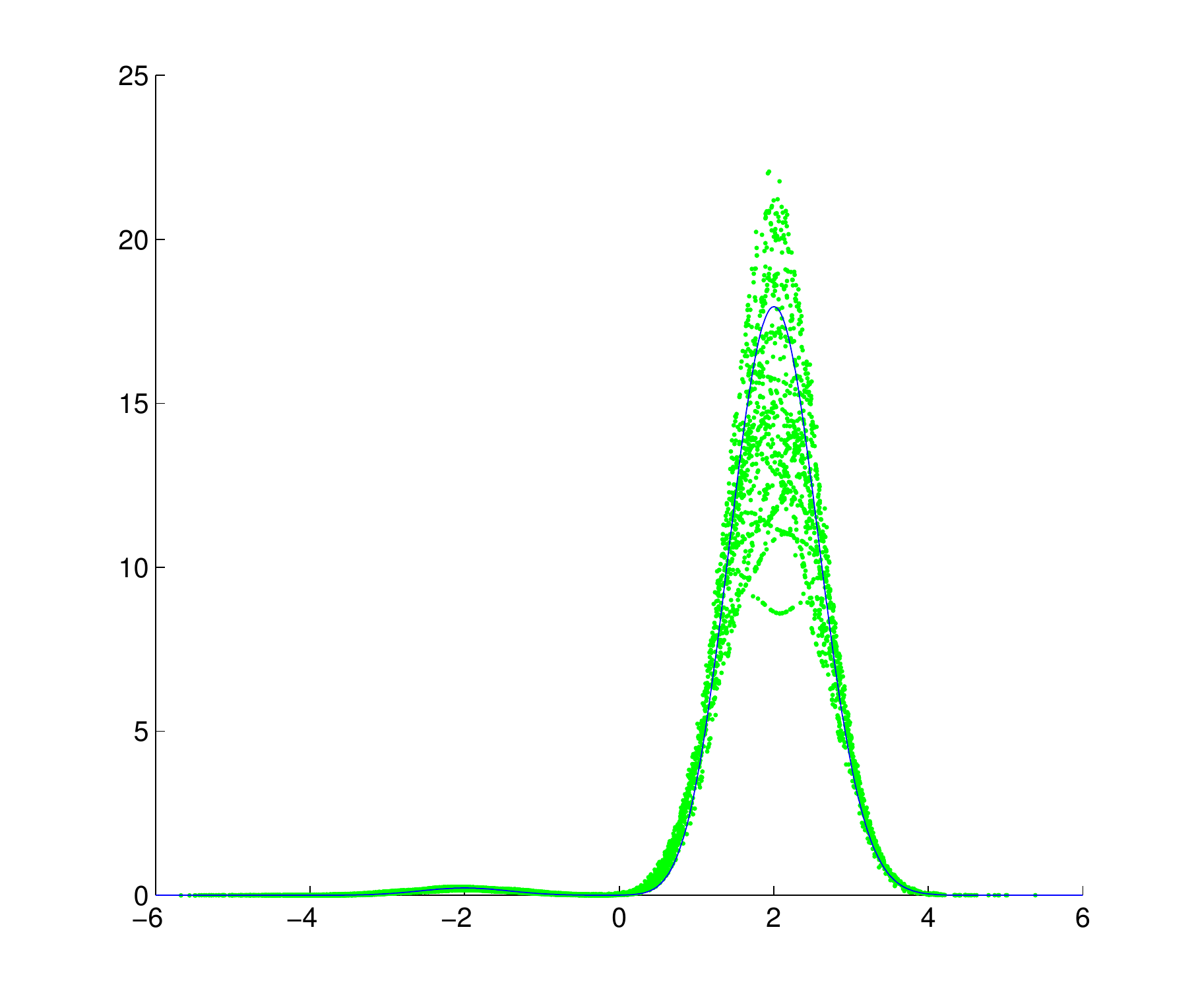}}
\subfigure[FIRE]{\includegraphics[width=0.30\textwidth]{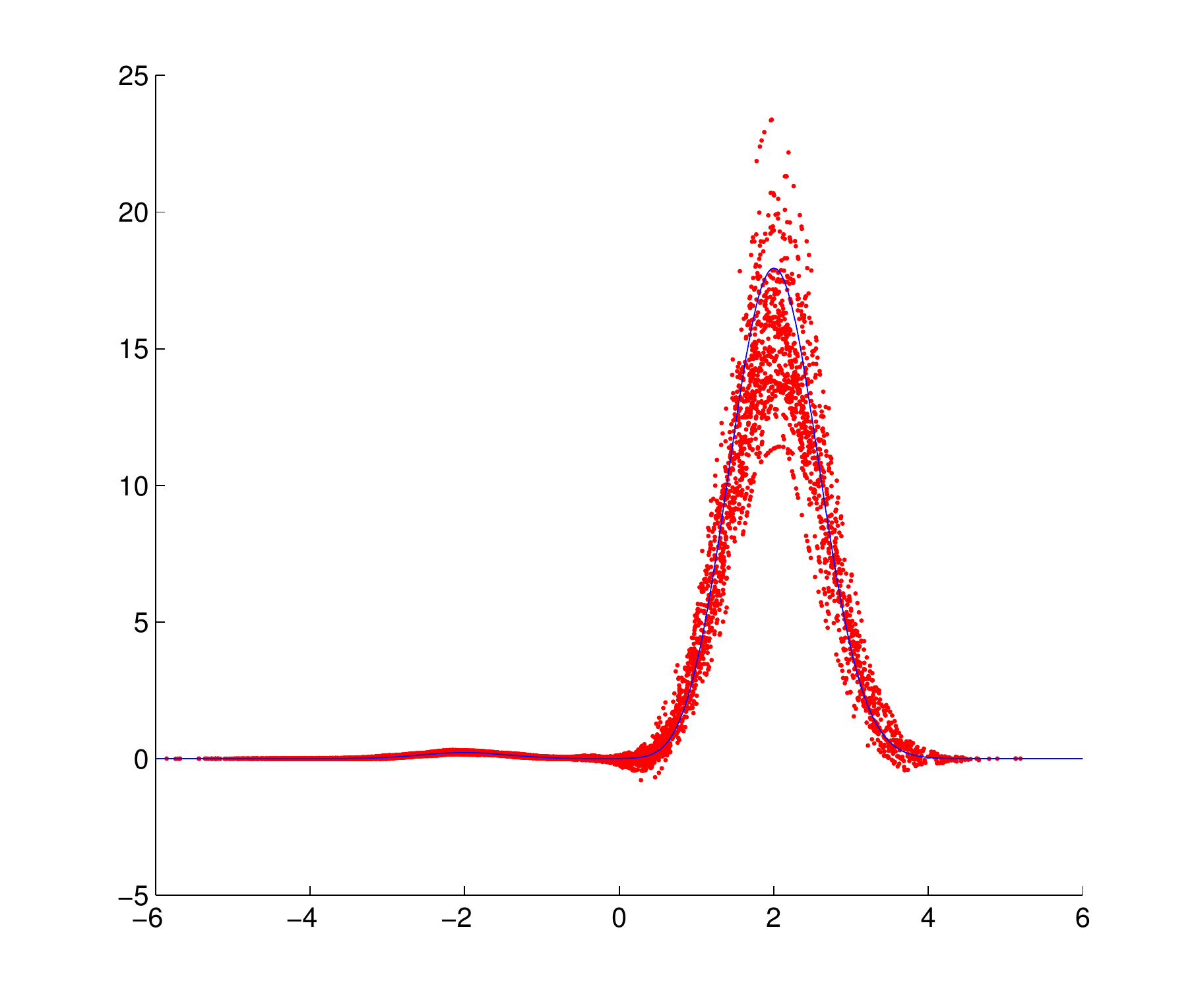}}
\subfigure[KMM]{\includegraphics[width=0.30\textwidth]{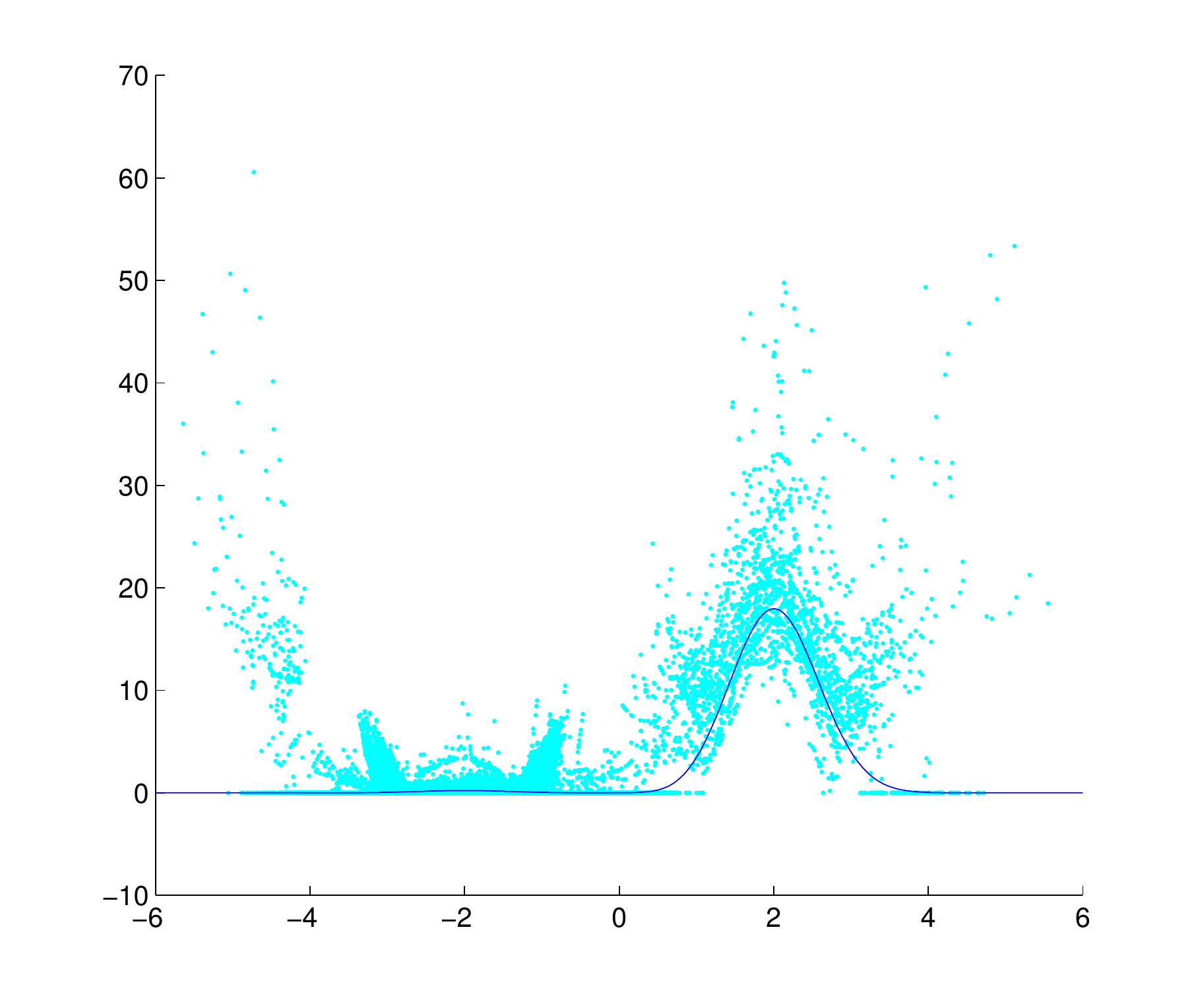}}
\caption{Plots of estimation of the ratio of densities with $|X^p|=500$ of points from $p=0.5N(-2, 1^2)+0.5N(2, 0.5^2)$ and $|X^q|=2000$ points from $q = N(0, 0.5^2)$. The blues lines are true ratio, $\frac{q}{p}$. Left column is the estimations from KDE with proper chosen threshold. Middle column is the estimations from our method, FIRE. And right one is the estimation from KMM.}\label{fig:1d_2sp}
\end{figure}

And Figure \ref{fig:2sp_bplot} shows how different methods perform when $|X^p|$ varies from 50 to 1000 and $|X^q|$ is fixed to be 2000. The boxplot is also a good way to illustrate the stability of the methods over 50 independent repetitions.

\begin{figure}[ht]
\centering
\includegraphics[width=0.8\textwidth]{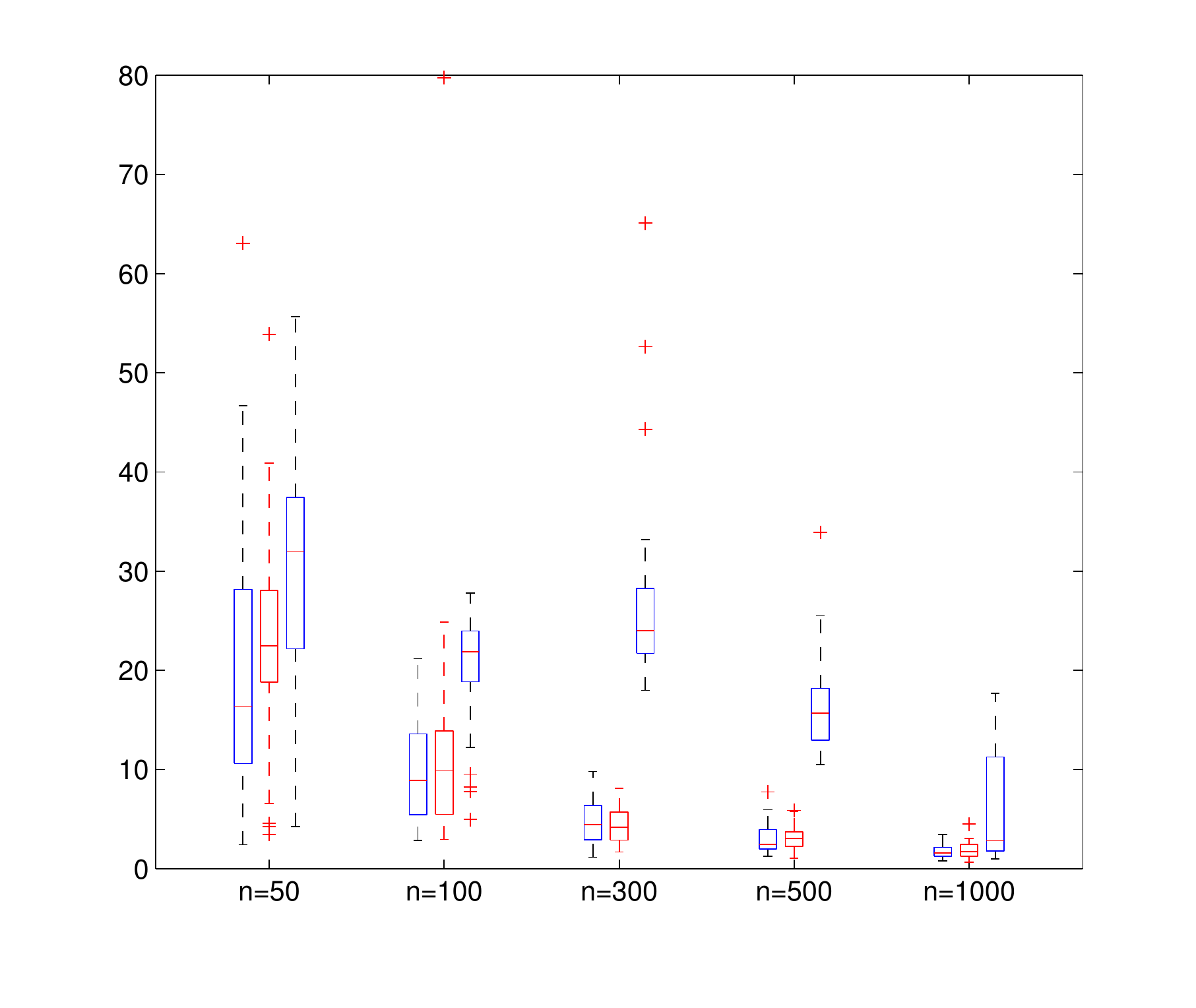}
\caption{Number of points from $p$, $n$ varies from 50 to 1000 as the horizontal axis indicates, and the number of points from $q$ is fixed to be 2000. For each $n$, the three bars, from left to right, belongs to TIKDE, FIRE(marked as red) and KMM.}\label{fig:2sp_bplot}
\end{figure}
%

\subsubsection{Simulated Dataset 2.}
\label{subsec:toy2}
In the second simulated example, we will test our method for various kernels and different norms as the cost function. More specifically, we suppose $p = N(0, 0.5^2)$ and $q=\text{Unif}([-1,1])$. We will use this example to explore the power of our methods with different kernels. Three settings are considered in this experiments: (1)Different kernels $k_h$ for the RKHS. We use polynomial kernels of degree 1, 5 and 20, exponential kernel and Gaussian kernel; (2) Type-I setting and Type-II setting; (3) Different norm for the cost function in the algorithm, i.e. $\|\cdot\|_{2,p}$ and $\|\cdot\|_{2,q}$. In this example, $\|\cdot\|_{2,p}$ focuses on the region close 0, but still has penalty outside interval $[-1,1]$; $\|\cdot\|_{2,p}$ has uniform penalty on $[-1,1]$ and has no penalty at all outside the interval.

In all settings, we fix the convolution kernel to be Gaussian, $k_t$. When the RKHS kernel is exponential and Gaussian, we also need to decide their width. For simplicity, we just fix their width to be $20t$, where $t$ is the width of the convolution kernel $k_t$. For setting Type-I, we will set $|X^p|=500$ and $|X^q|=500$; for Type-II setting, we only specify $|X^p|=500$. The results are shown in Figure \ref{fig:toy_eg2}.

\begin{figure}[ht]
\centering
\subfigure[p.d.f. of the two density we considered.]{\includegraphics[width=0.45\textwidth]{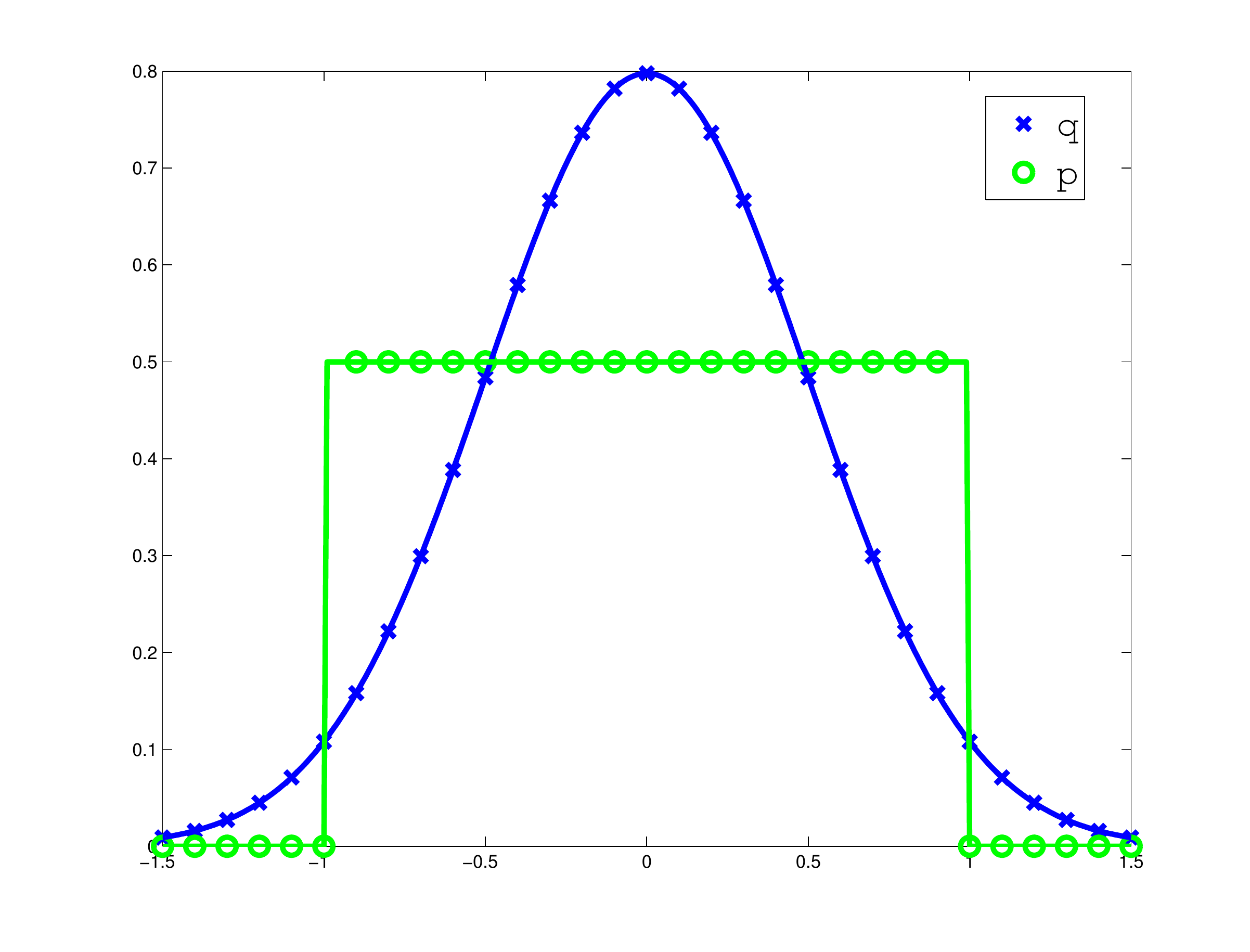}}
\subfigure[$t=0.05, \lambda=1e-3$, various kernel for RKHS.]{\includegraphics[width=0.45\textwidth]{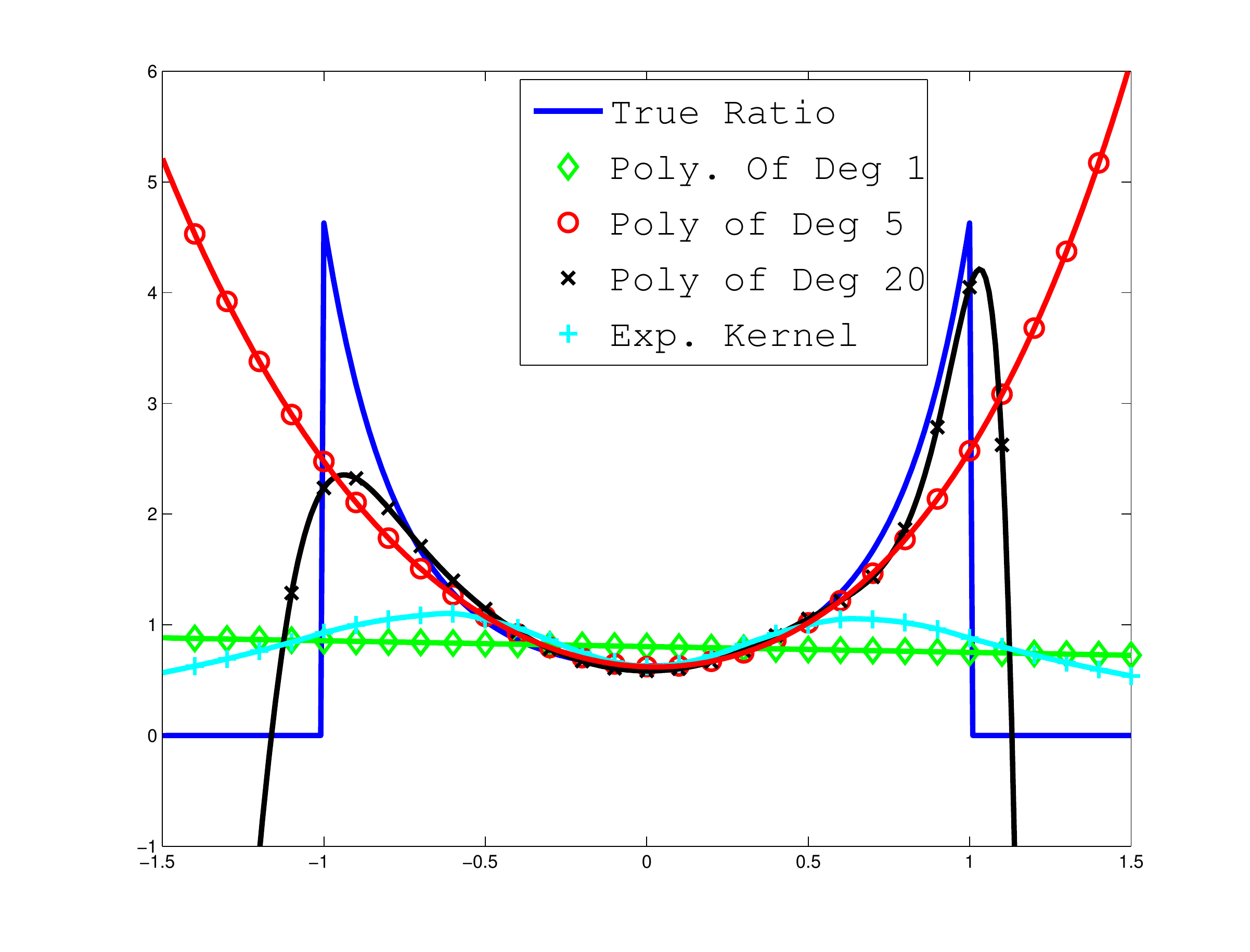}}
\subfigure[$t=0.03, \lambda=1e-5$, Gaussian, Type-I]{\includegraphics[width=0.45\textwidth]{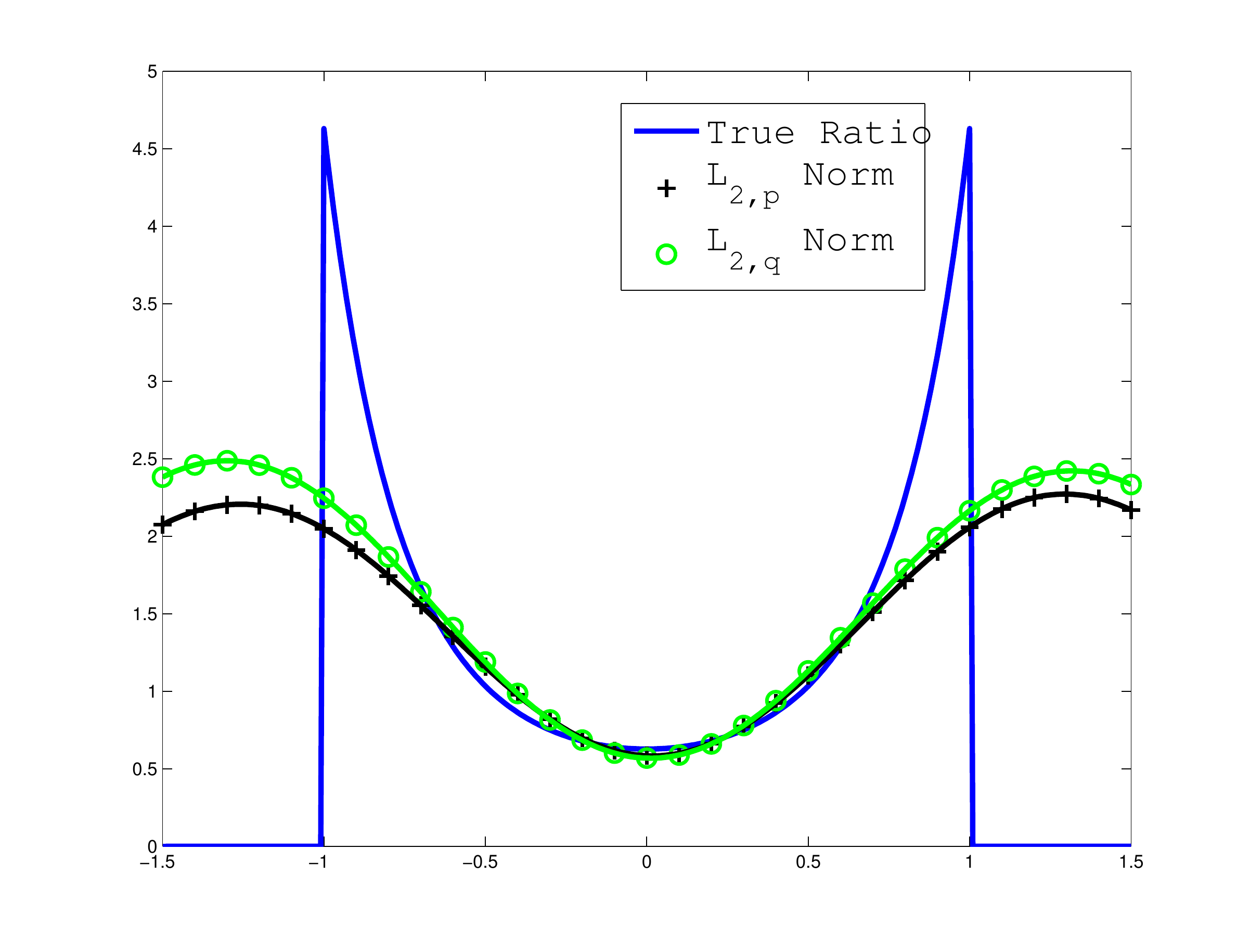}}
\subfigure[$t=0.03, \lambda=1e-5$, Gaussian, Type-II]{\includegraphics[width=0.45\textwidth]{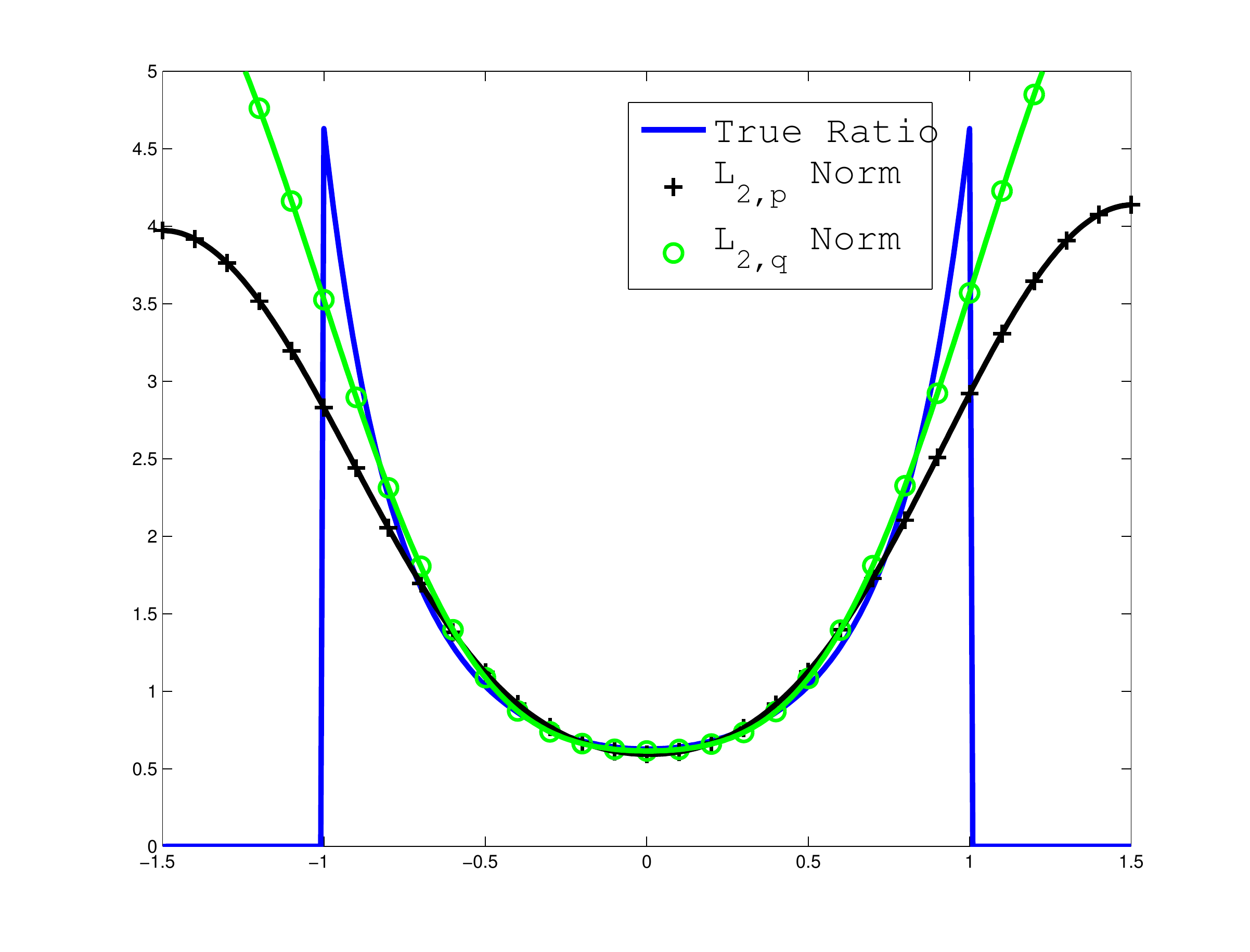}}
\caption{Estimating the ratio between $p=$Unif$([-1,1])$ and $q=$N$(0,0.5)$. (a) shows the $p(x)$ and $q(x)$. The blue lines in the rest subfigures is the true ratio $\frac{q}{p}$. In (b), different kernels for RKHS are used with $t=0.05$ and $\lambda=1e-3$. In (c), we suppose two samples from $p$ and $q$ are available, thus Type-I setting is used. And we use Gaussian as RKHS kernel with fix the kernel width $t=0.03$ and the regularization parameter $\lambda=1e-5$ and the $L_2$ norm for the cost function. In (d), we suppose it is Type-II setting, thus $X^p$ is available and the function $q$ is known. Besides this, (d) use the same parameters with (c).}\label{fig:toy_eg2}
\end{figure}

\section*{Acknowledgements}

We are grateful to Yusu Wang for many valuable discussions and suggestions. We also thank Lorenzo Rosasco for  very useful discussions and Christoph Lampert for pointing us to  important related papers.

We thank the Austrian Institute of Technology (ISTA) and Herbert Edelsbrunner\rq{}s research group for their hospitality during  writing of the paper. 

The work was  partially supported by the NSF Grants IIS~0643916 and IIS~1117707.

\bibliographystyle{plain}

\clearpage

\appendix
\section{Proof for Lemma \ref{lemma:error_bd_R}} \label{proof:lemma_err_bd_R}
\begin{proof}
RKHS is unique for a given domain and kernel, so is independent of the measure used to define the $L_{2,\rho}$. Thus for any $g\in L_{2,p}$, there should be $h\in L_2$ such that $\L_{t,p}^{1/2} g = \L_{t}^{1/2} h$ and 
\[
\|g\|_{2,p} = \|\L_{t,p}^{1/2} g\|_{H_t}= \|\L_{t}^{1/2} h\|_{H_t}= \|h\|_{2}.
\]
Since this is true for arbitrary $g\in L_{2,p}$, we have
\[\begin{split}
&\min_{g\in L_{2,p}}\left\|f - \L_{t,p}^{1/2}g\right\|_{2,p}^2 + \lambda'\|g\|_{2,p}^2 = \min _{h\in L_2}\left\|f - \L_{t}^{1/2}h\right\|_{2,p}^2 + \lambda'\|h\|_{2}^2
\end{split}\]
Because $\|\cdot\|_{2,p}\leq \Gamma\|\cdot\|_2$, 
\begin{equation}\label{eqn:min}\begin{split}
&\min _{h\in L_2}\left\|f - \L_{t}^{1/2}h\right\|_{2,p}^2 + \lambda'\|h\|_{2}^2 \leq  \Gamma^2\left(\min _{h\in L_2}\left\|f - \L_{t}^{1/2}h\right\|_{2}^2 + \frac{\lambda'}{\Gamma^2}\|h\|_{2}^2\right)
\end{split}\end{equation}
To bound
\[
\min _{h\in L_2}\left\|f - \L_{t}^{1/2}h\right\|_{2}^2 + \frac{\lambda'}{\Gamma^2}\|h\|_{2}^2
\]
We need the Fourier transform $F:L_2(\R^d)\rightarrow L_2(\R^d)$, defined as
\[
 \hat{f}(\xi) = \int_{\R^d} e^{-i\xi x} f(x) dx.
\]
$\L_t$ on $\R^d$ is the heat operator, thus $\L_t^{1/2}=\L_{\frac{t}{2}}$. And
\[
 \L_t f(x) = \int_{\R^d} k_t(x,y) f(y) dy = (k_t*f)(x),
\]
So, $F(\L_t f) = \hat{k}_t\hat{f}$.
Note that $F$ is an isometry. Thus it is the same to transform the (\ref{eqn:min}) using Fourier transform. Then we have
\[
 \min_{\hat{h}\in L_2}\left\| \hat{f}-\hat{k}_{\frac{t}{2}} \hat{h}\right\|_2^2+\frac{\lambda'}{\Gamma^2} \|\hat{h}\|_2^2
\]
where $\hat{k}_t(\xi) = e^{\frac{-\|\xi\|^2 t}{2}}$. And let 
\[
 \tilde{f}(\xi) = \begin{cases}
\hat{f}(\xi)\quad\quad \text{ if }\|\xi\|^2<\frac{4\alpha \log(\frac{1}{\lambda'})}{t} \\
0\quad \quad \text{ Otherwise.}
\end{cases}
\]
and $\tilde{h} = \left(\hat{k}_{\frac{t}{2}}\right)^{-1}\tilde{f}$. It is obvious that $\|\tilde{h}\|_2^2\leq \frac{1}{\lambda'^{\alpha}} \|\hat{f}\|_2^2 = \frac{1}{\lambda'^{\alpha}}\|f\|_2^2$. And 
\[\begin{split}
 \left\|\hat{f}-\hat{k}_{\frac{t}{2}} \tilde{h}\right\|_2^2 = \left\|\hat{f}-\tilde{f}\right\|_2^2
\end{split}\]
Now we recall the definition of Sobolev space using Fourier transform, which states that $\hat{f}(\xi) = \frac{1}{(1+\|\xi\|^2)^{s/2}} \hat{u}(\xi)$ for some $\hat{u}\in L_2$. Thus,
\[\begin{split}
 &\left\|\hat{f}-\tilde{f}\right\|_2^2 = \left(\int_{\|\xi\|^2<\frac{4\alpha \log(\frac{1}{\lambda'})}{t}}\frac{1}{(1+\|\xi\|^2)^{s/2}} \hat{u}(\xi)d\xi\right)^2 \leq   \int_{\|\xi\|^2<\frac{4\alpha \log(\frac{1}{\lambda'})}{t}}\left(\frac{1}{(1+\|\xi\|^2)^{s/2}} \hat{u}(\xi)\right)^2 d\xi \leq \frac{t^s}{4\alpha^s \left(\log(\frac{1}{\lambda'})\right)^s} \|\hat{u}\|_2^2
\end{split}\]
Thus, we have
\[\begin{split}
 &\min_{\hat{h}\in L_2}\left\| \hat{f}-\hat{k}_{\frac{t}{2}} \hat{h}\right\|_2^2+\frac{\lambda'}{\Gamma^2} \|\hat{h}\|_2^2 \leq  \left\| \hat{f}-\hat{k}_{\frac{t}{2}} \tilde{h}\right\|_2^2+\frac{\lambda'}{\Gamma^2} \|\tilde{h}\|_2^2 \leq \frac{t^s}{4\alpha^s \left(\log(\frac{1}{\lambda'})\right)^s} \|\hat{u}\|_2^2 + \frac{\lambda'^{1-\alpha}}{\Gamma^2}\|f\|_2^2
\end{split}\]

Let $D_1=\frac{\Gamma^2\|\hat{u}\|_2^2}{4\alpha ^s}$ and $D_2 = 1$, we have the lemma.
\end{proof}

\section{Proof for Lemma \ref{lemma:error_bd}}\label{proof:lemma_err_bd}
For the compact manifold case, we also need to have similar lemma as the above one. However, the definition of Fourier transform is obscure, thus we need to consider alternative way to get the same bound. We can use the Laplace-Beltrami operator on the compact manifold. It has discrete spectrum and satisfies the Weyl's Law, see Chapter 8 in \cite{Taylor_PDE}, about the spectrum of the Laplace-Beltrami operator $\Delta$, which is discrete if the manifold is compact. It states the following: the number of eigenvalues of Laplacian operator over a bounded domain with Neumann Bounday condition that are less or equal than $x$, denoted by $N(x)$,  satisfies 
\[
 \lim_{x\rightarrow \infty} \frac{N(x)}{x^{d/2}} = C
\]
for a constant $C$ depending on the dimensionality and volume of the domain. This implies there exists $M$ such that for any $i>M$, 
\[
 c_1 i^{2/d} \leq \eta_i \leq c_2 i^{2/d}.
\]

Also, we can redefine the Sobolev space on a compact manifold using Laplace-Beltrami operator.
\[
W^s_2 = \{f\in L_2: \left\|\Delta^{s/2}f\right\|_2<\infty\}.
\]
And this definition of $W^s_2$ is equivalent to common definition of Sobolev space using differentiation, see \cite{Strichartz_lap} for the details for this equivalence.

First we need the following lemma.
\begin{lemma}\label{lemma:tailbd}
 Suppose $f\in W^s_2$ and $N_t=\left(\frac{1}{t}\right)^{\alpha}$, then we have
\[
 \sum_{i>N_t}\langle f, v_i \rangle_2^2 \leq C t^{2\alpha s/d}
\]
where $v_i$ is the eigenfunctions of Laplacian operator $\Delta$ and $C$ is a constant independent of $t$.
\end{lemma}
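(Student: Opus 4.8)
The plan is to exploit the spectral characterization of the Sobolev norm together with the Weyl's law lower bound on the eigenvalues $\eta_i$ of $\Delta$. Writing $f$ in the orthonormal eigenbasis $\{v_i\}$, the hypothesis $f\in W^s_2$ says precisely that
\[
\left\|\Delta^{s/2} f\right\|_2^2 = \sum_i \eta_i^s \langle f, v_i\rangle_2^2 < \infty,
\]
using $\Delta^{s/2} v_i = \eta_i^{s/2} v_i$. Thus the weighted sum $\sum_i \eta_i^s \langle f, v_i\rangle_2^2$ is a fixed finite number depending only on $f$, and the whole point is to trade the decay of the coefficients $\langle f, v_i\rangle_2^2$ against the growth of the weights $\eta_i^s$.

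First I would note that since $N_t = t^{-\alpha}\to\infty$ as $t\to 0$, for sufficiently small $t$ we have $N_t > M$, so the lower bound $\eta_i \geq c_1 i^{2/d}$ from Weyl's law is available for every index $i > N_t$ occurring in the tail. In particular, for each such $i$,
\[
\eta_i^s \geq c_1^s\, i^{2s/d} \geq c_1^s\, N_t^{2s/d}.
\]
The main step is then a one-line factoring-out estimate: the smallest weight in the tail is $c_1^s N_t^{2s/d}$, so
\[
\sum_{i > N_t} \langle f, v_i\rangle_2^2 = \sum_{i > N_t} \frac{1}{\eta_i^s}\,\eta_i^s \langle f, v_i\rangle_2^2 \leq \frac{1}{c_1^s N_t^{2s/d}} \sum_{i > N_t} \eta_i^s \langle f, v_i\rangle_2^2 \leq \frac{\left\|\Delta^{s/2} f\right\|_2^2}{c_1^s}\, N_t^{-2s/d}.
\]
Substituting $N_t = t^{-\alpha}$ gives $N_t^{-2s/d} = t^{2\alpha s/d}$, which yields the claim with $C = \left\|\Delta^{s/2} f\right\|_2^2 / c_1^s$, a constant independent of $t$.

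There is essentially no hard step here; the only subtlety to record is that the Weyl's law estimate holds only for indices beyond the fixed threshold $M$. This is harmless for a tail bound: because $N_t$ grows without bound as $t\to 0$, for small enough $t$ the entire index range $\{i > N_t\}$ lies in the regime $i > M$ where the eigenvalue lower bound applies, and the finitely many low-frequency modes $i \leq M$ never enter the tail. Hence the only genuine inputs are the definition of $W^s_2$ via $\Delta$ and the Weyl asymptotics, both of which are already established in the preceding discussion.
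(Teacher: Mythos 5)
Your proof is correct and takes essentially the same route as the paper's: both combine the spectral characterization $\|\Delta^{s/2}f\|_2^2=\sum_i \eta_i^s\langle f,v_i\rangle_2^2<\infty$ with the Weyl lower bound $\eta_i\geq c_1 i^{2/d}$ to run a Chebyshev-type tail estimate against the smallest weight in the tail. The only cosmetic difference is that the paper first proves finiteness of the full weighted sum $\sum_i i^{2s/d}\langle f,v_i\rangle_2^2$ by splitting off the finitely many indices $i\leq M$, so no smallness condition on $t$ is needed, whereas you handle the threshold $M$ by requiring $N_t>M$ for small $t$ --- both treatments are valid.
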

\begin{proof}
First let proof that 
\[
 \sum_{i=0}^{\infty}\left\langle f, v_i\right\rangle_2^2 i^{2s/d} <\infty.
\]

Using the implication of Weyl's Law, we have for $i>M$, $i^{2s/d} \leq \frac{\eta_i^s}{c_1}$. Thus,
\[\begin{split}
 \sum_{i=0}^{\infty}\left\langle f, v_i\right\rangle_2^2 i^{2s/d} = & \sum_{i\leq M}^{\infty}\left\langle f, v_i\right\rangle_2^2 i^{2s/d} +\sum_{i>M}^{\infty}\left\langle f, v_i\right\rangle_2^2 i^{2s/d} \\
\leq  & \sum_{i\leq M}^{\infty}\left\langle f, v_i\right\rangle_2^2 i^{2s/d} +\sum_{i>M}^{\infty}\left\langle f, v_i\right\rangle_2^2 \frac{\eta_i^s}{c_1} \\
\leq & \sum_{i\leq M}^{\infty}\left\langle f, v_i\right\rangle_2^2 i^{2s/d}+\frac{1}{c_1} \left\|\Delta^{s/2}f\right\|_2^2 <\infty\\
\end{split}\]

Let $\sum_{i=0}^{\infty}\left\langle f, v_i\right\rangle_2^2 i^{2s/d}  = C$. For $N_t = \left(\frac{1}{t}\right)^{\alpha}$. We have
\[
 N_t^{2s/d} \sum_{i>N_t}\left\langle f, v_i\right\rangle_2^2 < \sum_{i>N_t}\left\langle f, v_i\right\rangle_2^2 i^{2s/d} \leq C.
\]
Thus,
\[
 \sum_{i>N_t}\left\langle f, v_i\right\rangle_2^2 < \frac{C}{N_t^{2s/d}} = C t^{2\alpha s/d}.
\]
\end{proof}

Now we can give the proof for Lemma \ref{lemma:error_bd}.
\begin{proof}
RKHS is unique for a given domain and kernel, so is independent of the measure used to define the $L_{2,\rho}$. Thus for any $g\in L_{2,p}$, there should be $h\in L_2$ such that $\L_{t,p}^{1/2} g = \L_{t}^{1/2} h$ and 
\[
\|g\|_{2,p} = \|\L_{t,p}^{1/2} g\|_{H_t}= \|\L_{t}^{1/2} h\|_{H_t}= \|h\|_{2}.
\]
Since this is true for arbitrary $g\in L_{2,p}$, we have
\[\begin{split}
&\min_{g\in L_{2,p}}\left\|f - \L_{t,p}^{1/2}g\right\|_{2,p}^2 + \lambda'\|g\|_{2,p}^2 = \min _{h\in L_2}\left\|f - \L_{t}^{1/2}h\right\|_{2,p}^2 + \lambda'\|h\|_{2}^2
\end{split}\]
Because $\|\cdot\|_{2,p}\leq \Gamma\|\cdot\|_2$, 
\[\begin{split}
&\min _{h\in L_2}\left\|f - \L_{t}^{1/2}h\right\|_{2,p}^2 + \lambda'\|h\|_{2}^2 \leq \Gamma^2\left(\min _{h\in L_2}\left\|f - \L_{t}^{1/2}h\right\|_{2}^2 + \frac{\lambda'}{\Gamma^2}\|h\|_{2}^2\right)
\end{split}\]
Now, let 
\[
 h^*_{\lambda'} = \arg \min _{h\in L_2}\left\|f - \L_{t}^{1/2}h\right\|_{2}^2 + \lambda'\|h\|_{2}^2
\]

Expend $f$ using the eigenfunctions $v_0,v_1,\dots$ of $\Delta$, we have
\[
 f = \sum_{i=0}^{\infty} \left\langle f, v_i\right\rangle_2 v_i
\]

Denote the eigenvalues of $\Delta$ as $\eta_0,\eta_1,\dots$, the heat operator defined as $H_t = e^{-\Delta t}$ having eigenvalues as $e^{-\eta_0t},e^{-\eta_1t},\dots$. Recall the Weyl's law, we have there exists $M$ such that for any $i>M$, $c_1i^{2/d} \leq \eta_i \leq c_2 i^{2/d}$. When $t$ is small enough, we will have $N_t = \frac{1}{t^{d/2}}>M$. Since we order $\eta_i$ in non-decreasing order, for any $i<N_t$, we have $\eta_i \leq \eta_{N_t} \leq c_2 N_t^{2/d} = c_2/t$, also $e^{-\eta_i t} > e^{-c_2}$. Now denote $P_N$ be the operator that projects function $f\in L_2$ to the subspace spanned by first $N$ eigenfunctions of $\Delta$. Thus
\[
 P_{N_t}f = \sum_{i\leq N_t} \left\langle f, v_i\right\rangle_2 v_i
\]
where $v_i$ is the eigenfunction of $\Delta$. And let 
\[
 \hat{h} = H_t^{-1/2} P_{N_t} f = \sum_{i=0}^{N_t} e^{\frac{\eta_i t}{2}}\left\langle f, v_i\right\rangle_2 v_i
\]

Thus, we have
\begin{equation}\label{eqn:lemma}\begin{split}
& \arg \min_{h\in L_2}\left\|f - \L_t^{1/2}h\right\|_2^2 + \lambda'\|h\|_2^2 \leq \left\|f - \L_t^{1/2}\hat{h}\right\|_2^2 + \lambda'\|\hat{h}\|_2^2 \\
= & \left\|\sum_{i> N_t} \left\langle f, v_i\right\rangle_2 v_i+\sum_{i\leq N_t} \left\langle f, v_i\right\rangle_2 v_i - \L_t^{1/2}\hat{h}\right\|_2^2 + \lambda'\|\hat{h}\|_2^2 \\
\leq & \left(\left\|\sum_{i> N_t} \left\langle f, v_i\right\rangle_2 v_i\right\|_2+\left\|H^{1/2}H^{-1/2}P_{N_t} f - \L_t^{1/2}H^{-1/2}P_{N_t} f\right\|_2\right)^2 + \lambda' \sum_{i=1}^{N_t} e^{\eta_i t}\left\langle f, v_i^t\right\rangle_2^2\\
\end{split}\end{equation}

Now let us proceed by bound the above formula. By Lemma \ref{lemma:tailbd} with $N_t = \frac{1}{t^{d/2}}$ and $s=2$, we have
\[
 \sum_{i=N_t+1}^{\infty} \left\langle f, v_i\right\rangle_2^2 \leq C t^{2}
\]

Also, for $i<N_t$, $e^{\eta_i t}\leq e^{c_2}$, thus $\|H^{-1/2}P_{N_t} f\|_2\leq e^{c_2/2}\left\|P_{N_t} f\right\|_2\leq e^{c_2/2}\left\|f\right\|_2$. Recall we have $\|\H_t^{1/2}-\L_t^{1/2}\|_{L_2\rightarrow L_2}\leq C' t$for a constant $C'$. Thus, we have
\[
\left\|H^{1/2}H^{-1/2}P_{N_t} f - \L_t^{1/2}H^{-1/2}P_{N_t}f\right\|_2 \leq C'e^{c_2/2} \left\|f\right\|_2 t
\]

For the third term in (\ref{eqn:lemma}), we have 
\[
 \lambda' \sum_{i=1}^{N_t} e^{\eta_i t}\left\langle f, v_i\right\rangle_2^2 \leq \lambda' e^{c_2}\sum_{i=1}^{N_t} \left\langle f, v_i\right\rangle_2^2 \leq \lambda' e^{c_2} \left\|f\right\|_2^2
\]

Hence, 
\[\begin{split}
&\left\| f - \L_{t,p}^{1/2}g^*_{\lambda'}\right\|_{2,p}^2 + \lambda'\|g^*_{\lambda'}\|_{2,p}^2  \leq \Gamma^2 \left(C t^s+C'e^{c_2/2}\left\|f\right\|_2t\right)^2+ \lambda' e^{c_2}\left\|f\right\|_2^2
\end{split}\]
When $t$ is small enough, $t^2 \leq t$, letting $D_1=\Gamma^2(C+C'e^{c_2/2}), D_2=e^{c_2}$, we prove the lemma.
\end{proof}

\section{Proof for Theorems in \ref{sec:snd_case}}
In the second case, since we do not have samples from $q$, we replace $\K_{t,q,\z_q}$ by $q$. Consider corresponding $f^{\text{II}}_{\lambda}$,
\[\begin{split}
f^{\text{II}}_{\lambda} =& (\K_{t,p}^3+\lambda \I)^{-1} \K_{t,p}^2 q = (\K_{t,p}^3+\lambda \I)^{-1} \K_{t,p}^2 \left(q-\K_{t,p}\frac{q}{p}+\K_{t,p}\frac{q}{p}\right) \\
=& (\K_{t,p}^3+\lambda \I)^{-1} \K_{t,p}^2 \left(q-\K_{t,p}\frac{q}{p}\right)+(\K_{t,p}^3+\lambda \I)^{-1} \K_{t,p}^3 \frac{q}{p} \\
\end{split}\]

Thus, we need to bound the extra term $(\K_{t,p}^3+\lambda \I)^{-1} \K_{t,p}^2 \left(q-\K_{t,p}\frac{q}{p}\right)$. Let $d=q-\K_{t,p}\frac{q}{p}$ and  $\|d\|_{2,p} = \delta_t$, we have
\[
\begin{split}
&\left\|(\L_{t,p}^3+\lambda \I)^{-1}\L_{t,p}^2 d\right\|_{2,p} =\left(\sum_{i=1}^{\infty}\left(\frac{\sigma_i^2 \langle u_i,d\rangle_{2,p}}{\sigma_i^3+\lambda}\right)^2\right)^{\frac{1}{2}} \leq \max_{\sigma>0} \left(\frac{1}{\sigma+\frac{\lambda}{\sigma^2}}\right)\delta_t \leq \frac{\delta_t}{\left(2^{\frac{1}{3}}+2^{-\frac{2}{3}}\right)\lambda^{\frac{1}{3}}} \leq \lambda^{-\frac{1}{3}}\delta_t \\
\end{split}
\]

The bound for $\delta_t$ is given in the following lemma.
\begin{lemma}\label{lemma:misc_bd}
Suppose $p,q$ are two density function of probability measures of the domain $\Omega$ and satisfying the assumptions we gave in section \ref{sec:assumptions}. We have the following:
(1) When $\Omega$ is $\R^d$ and $q\in W_2^2(\R^2)$, we have
\[
 \left\|\K_{t,p}\frac{q}{p}-q\right\|_{2,p} = O(t)
\]
(2) When $Omega$ is a manifold $\M$ without boundary of $d$ dimension and $q\in W_2^2(\M)$, we have
\[
 \left\|\K_{t,p}\frac{q}{p}-q\right\|_{2,p} = O(t^{1-\varepsilon})
\]
for any $0<\varepsilon <1$.
\end{lemma}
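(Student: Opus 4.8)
The plan is to reduce both claims to a single mollifier-error estimate. Since $p(x)<\Gamma$ on $\Omega$ (Assumption in Sec.~\ref{sec:assumptions}), we have $\|\cdot\|_{2,p}\le\sqrt{\Gamma}\,\|\cdot\|_{2}$, so it suffices to bound the plain $L_2$ norm $\|\K_{t,p}\frac{q}{p}-q\|_{2}$. The key observation is that the weight cancels: by definition
$\K_{t,p}\frac{q}{p}(x)=\int_\Omega k_t(x,y)\frac{q(y)}{p(y)}p(y)\,dy=\int_\Omega k_t(x,y)\,q(y)\,dy=:\L_t q(x)$
(integration against Riemannian volume in the manifold case), i.e. the quantity to control is exactly the error of convolving $q$ against the normalized Gaussian $\delta$-family. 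Thus the lemma is purely a statement about how well the Gaussian mollifier $\L_t$ reproduces a $W_2^2$ function in $L_2$, and $p$ plays no further role.

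For $\Omega=\R^d$ I would argue exactly as in Lemma~\ref{lemma:error_bd_R}, via the Fourier transform $F$. Since $\L_t$ is convolution with $k_t$ and $\widehat{k_t}(\xi)=e^{-t\|\xi\|^2/2}$, Plancherel gives
\[
\|\L_t q-q\|_2^2=\int_{\R^d}\bigl|e^{-t\|\xi\|^2/2}-1\bigr|^2\,|\widehat{q}(\xi)|^2\,d\xi .
\]
Using the elementary bound $|e^{-u}-1|\le u$ for $u\ge 0$ with $u=t\|\xi\|^2/2$, the integrand is at most $\tfrac{t^2}{4}\|\xi\|^4|\widehat q(\xi)|^2$, so $\|\L_t q-q\|_2^2\le \tfrac{t^2}{4}\int_{\R^d}\|\xi\|^4|\widehat q(\xi)|^2\,d\xi=\tfrac{t^2}{4}\|\Delta q\|_2^2$. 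The last integral is finite because $q\in W_2^2(\R^d)$, and we obtain $\|\K_{t,p}\frac{q}{p}-q\|_{2,p}=O(t)$, which is claim (1).

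For a compact manifold $\M$ there is no Fourier transform, so I would replace it with the spectral calculus of the Laplace--Beltrami operator $\Delta$ used in Lemma~\ref{lemma:tailbd} and Lemma~\ref{lemma:error_bd}. Introducing the intrinsic heat operator $e^{-t\Delta}$, split
\[
\L_t q-q=\bigl(\L_t-e^{-t\Delta}\bigr)q+\bigl(e^{-t\Delta}q-q\bigr).
\]
The second term is handled cleanly by the spectrum of $\Delta$: writing $q=\sum_i\langle q,v_i\rangle_2 v_i$ and using $|e^{-\eta_i t}-1|\le \eta_i t$,
\[
\|e^{-t\Delta}q-q\|_2^2=\sum_i\bigl(e^{-\eta_i t}-1\bigr)^2\langle q,v_i\rangle_2^2\le t^2\sum_i\eta_i^2\langle q,v_i\rangle_2^2=t^2\|\Delta q\|_2^2,
\]
which is $O(t^2)$ since $q\in W_2^2(\M)$. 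For the comparison term I would use that the ambient Gaussian operator approximates the intrinsic heat operator, in the form $\|e^{-t\Delta/2}-\L_t^{1/2}\|_{L_2\to L_2}\le C't$ already exploited in the proof of Lemma~\ref{lemma:error_bd}; writing $\L_t=\L_t^{1/2}\L_t^{1/2}$ and $e^{-t\Delta}=e^{-t\Delta/2}e^{-t\Delta/2}$ and telescoping converts this into a bound on $\|(\L_t-e^{-t\Delta})q\|_2$. Because the comparison constant stays under control only on the low-frequency part of the spectrum (where $e^{\eta_i t}=O(1)$, i.e. $\eta_i\lesssim 1/t$), I would pair it with a spectral cutoff at $N_t\sim t^{-d/2}$, bounding the high-frequency remainder $\|(\I-P_{N_t})q\|_2$ through Lemma~\ref{lemma:tailbd} with $s=2$. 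Balancing the cutoff against the tail yields the rate $O(t^{1-\varepsilon})$, the arbitrarily small loss $\varepsilon$ arising from keeping the comparison constant bounded at the edge of the cutoff.

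The routine parts are the heat-semigroup estimate and the Euclidean Fourier computation. The main obstacle is the manifold comparison $\L_t\approx e^{-t\Delta}$ under the minimal regularity $q\in W_2^2$: the pointwise expansion of the ambient Gaussian average on $\M$ produces curvature corrections plus a remainder that is only $O(t^2)$ for sufficiently smooth functions, so with just two $L_2$ derivatives available one cannot obtain a clean $O(t)$ and must trade the spectral cutoff against the tail of $q$, which is precisely where the $\varepsilon$ enters. I would lean on the operator-norm comparison from Lemma~\ref{lemma:error_bd} rather than re-deriving the heat-kernel asymptotics on $\M$ from scratch.
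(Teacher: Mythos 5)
Your reduction to $\K_{t,p}\tfrac{q}{p}=\L_t q$ and the bound $\|\cdot\|_{2,p}\le\sqrt{\Gamma}\|\cdot\|_2$ match the paper. For part (1) your Plancherel computation $\|\L_t q-q\|_2^2=\int|e^{-t\|\xi\|^2/2}-1|^2|\hat q|^2\le\tfrac{t^2}{4}\|\Delta q\|_2^2$ is correct, self-contained, and in fact cleaner than the paper's route, which instead invokes an external pointwise expansion $(\K_t-\I)q=t\Delta q+o(t)$ for twice-differentiable $q$; your version makes explicit that only $q\in W_2^2$ is needed.

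Part (2) has a genuine gap. You rest the comparison term on the operator-norm bound $\|e^{-t\Delta/2}-\L_t^{1/2}\|_{L_2\to L_2}\le C't$, which the paper asserts (inside the proof of Lemma~\ref{lemma:error_bd}) but never proves; establishing it is essentially as hard as the lemma you are trying to prove, so this is close to circular. Worse, if that bound did hold as a genuine operator-norm estimate, your telescoping $\L_t-e^{-t\Delta}=\L_t^{1/2}(\L_t^{1/2}-e^{-t\Delta/2})+(\L_t^{1/2}-e^{-t\Delta/2})e^{-t\Delta/2}$ would give $\|(\L_t-e^{-t\Delta})q\|_2=O(t)\|q\|_2$ outright, and combined with your (correct) semigroup estimate $\|e^{-t\Delta}q-q\|_2\le t\|\Delta q\|_2$ you would conclude $O(t)$ on the manifold --- strictly stronger than the stated $O(t^{1-\varepsilon})$, which should make you suspicious. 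Your fallback of a spectral cutoff at $N_t\sim t^{-d/2}$ does not rescue this: Lemma~\ref{lemma:tailbd} with $s=2$ gives a tail of order $t$ and the low-frequency comparison is again order $t$, so the ``balance'' you describe produces $O(t)$, not $O(t^{1-\varepsilon})$; no $\varepsilon$ actually emerges from your arithmetic.

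The $\varepsilon$ in the paper comes from a different place that your decomposition never confronts: on a curved $\M$ the ambient Gaussian is not a probability kernel, i.e.\ $\int_\M k_t(x,y)\,dy\ne 1$. The paper isolates this by writing $(\K_t-\I)q=(\K_t-\D)q+(\D-\I)q$ with $\D$ the multiplication operator by $x\mapsto\int_\M k_t(x,y)\,dy$, handles the first term as in $\R^d$, and bounds the mass deficiency $|\int_\M k_t(x,y)\,dy-1|=O(t^{1-2\varepsilon})$ by localizing to a geodesic ball of radius $t^{1/2-\varepsilon}$, projecting to the tangent plane, and controlling the Jacobian $|J_{\pi_x^{-1}}-1|=O(\|x-y'\|^2)$ together with the Gaussian tail $O(e^{-t^{-\varepsilon}})$ outside the ball; the $\varepsilon$ is exactly the price of that localization radius. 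To repair your argument you would either need to prove the heat-semigroup comparison in operator norm (at whatever rate it actually holds) or insert the normalization term $(\D-\I)q$ explicitly and estimate it geometrically as the paper does.
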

\begin{proof}
By definition of $\K_{t,p}$, we have
\[\begin{split}
 &\K_{t,p}\frac{q}{p}-q = \int_{\R^d} k_t(x,y) \frac{q(y)}{p(y)}p(y)dy - q(x)=\int_{\R^d} k_t(x,y) (q(y)-q(x))dy =(\K_t - \I)q
\end{split}\]
By results in \cite{PLM_UCTHESIS_03}, we have $(\K_t-\I)q = t\Delta q+o(t)$ when $q$ is twice differentiable. Due to $q\in W_2^2(\R^d)$, we have $\|\Delta q\|_2<\infty$. Thus, we have
\[\begin{split}
 &\|\K_{t,p}\frac{q}{p}-q\|_{2,p} \leq \Gamma \|\K_{t,p}\frac{q}{p}-q\|_2 =\Gamma\|t\Delta q+o(t)\|_2 \leq \Gamma t\|\Delta q\|_2+o(t) = O(t).
\end{split}\]

For manifold case, we have $(\K_t-\D)q = t\Delta q+o(t)$, where $\D f = \int_{\M}k_t(x,y) dy f(x)$. Thus,
\[
 (\K_t-\I)q = (\K_t-\D)q + (\D-\I)q.
\]
For the first term, we have the same rate with $\R^d$. Now we proceed by bounding the second term.
\[
 \|(\D-\I)q\|_2 = \left\|\left(\int_{\M}k_t(\cdot,y) dy -1\right)q(\cdot)\right\|_2 \leq \left\|\int_{\M}k_t(\cdot,y) dy -1\right\|_2\left\|q\right\|_2
\]
We know that $\|q\|_2<\infty$. 

Let $B_t(x)=\{y\in \M: \|x-y\|_2<t^{\frac{1}{2}-\varepsilon}\}$ and $R_t(x)$ is the projection of $B_t(x)$ on the $T_x\M$. In the following proof, we need to use change of variables to converting integral over a manifold to the integral over the tangent space at a specific point. For two points $x,y\in \M$, let $y'=\pi_x(y)$ be the projection of $y$ in the tangent space $T_x$ of $\M$ at $x$. Let $\left.J_{\pi_x}\right|_{y}$ denote the Jacobian of the map $\pi_x$ at point $y\in \M$ and $\left.J_{\pi^{-1}_x}\right|_{y'}$ is the inverse. For $y$ sufficiently close to $x$, we have
\[\begin{split}
&\|x-y\| = \|x-y'\|+O(\|x-y'\|^3)\\ 
&\left|\left.J_{\pi_x}\right|_y-1\right|=O(\|x-y\|^2)\\
&\left|\left.J_{\pi^{-1}_x}\right|_{y'}-1\right|=O(\|x-y'\|^2).
\end{split}\]
Thus, it is true that the points in $R_t(x)$ are still no further than $2t^{\frac{1}{2}-\varepsilon}$, when $t$ is small enough. Since $k_t$ has exponential decay, the integral $\int_{\overline{B_t(x)}}  k_t(y,\cdot) dy$ is of order $O(e^{-t^{-\varepsilon}})$, and so is $\int_{\overline{R_t(x)}}  k_t(y',\cdot) dy'$. Thus, for any point $x\in \M$,
\[
\begin{split}
&\left|\int_\M  k_t(x,\cdot) dx-1 \right|\\
=&\left|\int_{B_t(x)}  k_t(y,\cdot) dy-1+O(e^{-t^{-\varepsilon}})\right|\\
=&\left|\int_{B_t(x)}  k_t(y,\cdot) dy-\int_{T_{x}\M}  k_t(y',\cdot) dy'+O( e^{-t^{-\varepsilon}})\right|\\
=&\left|\int_{R(x)}  k_t(y',\cdot)J_{\pi^{-1}}|_{y'} dy'-\int_{R(x)}  k_t(x,\cdot) dx+O(e^{-t^{-\varepsilon}})\right|\\
=&\left|\int_{R(x)}  k_t(x,\cdot)(J_{\pi^{-1}}|_{x}-1) dx+O( e^{-t^{-\varepsilon}})\right|\\
= &O(t^{1-2\varepsilon})\int_{R(x)}  k_t(x,\cdot) dx+O(e^{-t^{-\varepsilon}})\\
=&O(t^{1-2\varepsilon})\left(\int_{T_{x}\M}  k_t(x,\cdot)dx+O( e^{-t^{-\varepsilon}})\right)+O( e^{-t^{-\varepsilon}})\\
=&O(t^{1-2\varepsilon})\left(1 +O(e^{-t^{-\varepsilon}})\right)+O( e^{-t^{-\varepsilon}})\\
=&O(t^{1-2\varepsilon})
\end{split}
\]
Abusing the notation of $\varepsilon$, we have $\|\int_{\M}k_t(\cdot,y) dy -1\|_2 \leq O(t^{1-\varepsilon})$ where $0<\varepsilon<1$.
\end{proof}

For the concentration of $\|f^{\text{II}}_{\lambda}-f^{\text{II}}_{\lambda,\z}\|_{2,p}$, we will consider their close formulas 
\begin{equation}\label{eqn:type2_sol}\begin{split}
&f^{\text{II}}_{\lambda} = \left(\K_p^3+\lambda \I\right)^{-1}\K_p^2 q\\
&f^{\text{II}}_{\lambda,\z} = \left(\K_{z_p}^3+\lambda \I\right)^{-1} \K_{z_p}^2 q \\
\end{split}
\end{equation}
By the similar argument to that in Lemma \ref{lemma:concen}, we will have the following lemma gives the concentration bound.
\begin{lemma}\label{lemma:concen1}
Let $p$ be a density of a probability measure over a domain $X$ and $q$ another density function. They satisfy the assumptions in \ref{sec:assumptions}. Consider $f^{\text{II}}_{\lambda}$ and $f^{\text{II}}_{\lambda,\z}$ defined in Eq.~\ref{eqn:type2_sol}, with confidence at least $1-2e^{-\tau}$, we have
\[
 \left\|f^{\text{II}}_{\lambda}-f^{\text{II}}_{\lambda,\z}\right\|_{2,p} \leq C_4\left(\frac{\kappa_t \sqrt{\tau}}{\lambda^{3/2}\sqrt{n}}+\frac{\kappa_t \sqrt{\tau}}{\lambda\sqrt{n}}\right)
\]
where $\kappa_t = \sup_{x\in \Omega}k_t(x,x) = \frac{1}{(2\pi  t)^{d/2}}$
\end{lemma}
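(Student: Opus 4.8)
The plan is to follow the template of Lemma~\ref{lemma:concen} almost verbatim, the only structural simplification being that in the Type~II setting there is no sample $\z_q$ and the target $q$ is a fixed (non-random) function, so the single random operator is $\K_{\z_p}-\K_p$ and the $1/\sqrt{m}$ contributions disappear. First I would introduce the intermediate quantity $\tilde f = \left(\K_{\z_p}^3+\lambda\I\right)^{-1}\K_p^2 q$ and use the triangle inequality to write $f^{\text{II}}_{\lambda}-f^{\text{II}}_{\lambda,\z} = (f^{\text{II}}_{\lambda}-\tilde f)+(\tilde f-f^{\text{II}}_{\lambda,\z})$. Using the defining identity $\left(\K_p^3+\lambda\I\right)f^{\text{II}}_{\lambda}=\K_p^2 q$, the first difference rewrites as $f^{\text{II}}_{\lambda}-\tilde f = \left(\K_{\z_p}^3+\lambda\I\right)^{-1}\left(\K_{\z_p}^3-\K_p^3\right)f^{\text{II}}_{\lambda}$, and the second, directly from the formulas in Eq.~\ref{eqn:type2_sol}, as $\tilde f-f^{\text{II}}_{\lambda,\z}=\left(\K_{\z_p}^3+\lambda\I\right)^{-1}\left(\K_p^2-\K_{\z_p}^2\right)q$.

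Next I would bound each piece in the $\H$-norm and then convert to $L_{2,p}$ via $\|\cdot\|_{2,p}\leq c^{1/2}\|\cdot\|_{\H}$. Three ingredients are needed: the resolvent bound $\left\|\left(\K_{\z_p}^3+\lambda\I\right)^{-1}\right\|_{\H\to\H}\leq 1/\lambda$; the concentration estimate $\|\K_{\z_p}-\K_p\|_{\H\to\H}\leq\kappa_t\sqrt{\tau}/\sqrt{n}$ (the relevant line of Eq.~\ref{eqn:conc_ineq}), valid with probability $\geq 1-2e^{-\tau}$; and the telescoping expansions of $\K_{\z_p}^3-\K_p^3$ and $\K_{\z_p}^2-\K_p^2$ already written out in the proof of Lemma~\ref{lemma:concen}, of which I retain only the dominant terms carrying a single factor of $\K_{\z_p}-\K_p$ (products of two or three such factors are of strictly smaller order in $1/\sqrt n$ and may be absorbed). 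The decisive new computation is the size of $\|f^{\text{II}}_{\lambda}\|_{\H}$: by functional calculus in the eigenbasis $\{u_i\}$ of $\K_p$ with eigenvalues $\sigma_i$, $f^{\text{II}}_{\lambda}=\sum_i\frac{\sigma_i^2}{\sigma_i^3+\lambda}\langle q,u_i\rangle u_i$, so $\|f^{\text{II}}_{\lambda}\|_{\H}^2=\sum_i\frac{\sigma_i^3}{(\sigma_i^3+\lambda)^2}\langle q,u_i\rangle^2\leq\left(\sup_{\sigma>0}\frac{\sigma^3}{(\sigma^3+\lambda)^2}\right)\|q\|_{2,p}^2=\frac{1}{4\lambda}\|q\|_{2,p}^2$, giving $\|f^{\text{II}}_{\lambda}\|_{\H}\leq\frac{1}{2}\lambda^{-1/2}\|q\|_{2,p}$. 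Combining these, the first piece is of order $\frac{1}{\lambda}\cdot\frac{\kappa_t\sqrt{\tau}}{\sqrt{n}}\cdot\lambda^{-1/2}\|q\|_{2,p}\sim\frac{\kappa_t\sqrt{\tau}}{\lambda^{3/2}\sqrt n}$, while the second piece, using $\|\K_p q\|_{\H}=\|\K_p^{1/2}q\|_{2,p}\leq c^{1/2}\|q\|_{2,p}$, is of order $\frac{1}{\lambda}\cdot\frac{\kappa_t\sqrt{\tau}}{\sqrt n}\cdot c^{1/2}\|q\|_{2,p}\sim\frac{\kappa_t\sqrt{\tau}}{\lambda\sqrt n}$. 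Summing and collecting constants into $C_4$ yields the claimed bound.

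The main obstacle, and the point where this argument genuinely departs from the Type~I proof, is the $\lambda^{-1/2}$ growth of $\|f^{\text{II}}_{\lambda}\|_{\H}$. In Lemma~\ref{lemma:concen} the analogous quantity grew only like $\lambda^{-1/6}$ because the regularized solution there was $\left(\K_p^3+\lambda\I\right)^{-1}\K_p^3\frac{q}{p}$, whose multiplier $\frac{\sigma^5}{(\sigma^3+\lambda)^2}$ supplies an extra factor of $\sigma^2$; here the target enters as $\K_p^2 q$ rather than $\K_p^3\frac{q}{p}$, costing one power of $\K_p$ and hence producing the worse $\lambda^{-3/2}$ rate in the final bound. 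The remaining care is bookkeeping: verifying that $\K_p q\in\H$ so the operator-norm concentration bound applies to the dominant terms, and confirming that the omitted higher-order products and the subdominant term $\frac{\kappa_t\sqrt{\tau}}{\lambda\sqrt n}$ do not interfere with the leading $\lambda^{-3/2}$ behavior.
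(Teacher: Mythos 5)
Your proposal reproduces the paper's own proof essentially verbatim: the same intermediate quantity $\tilde f=\left(\K_{\z_p}^3+\lambda\I\right)^{-1}\K_p^2 q$, the same two-term decomposition and telescoping expansions of $\K_{\z_p}^3-\K_p^3$ and $\K_{\z_p}^2-\K_p^2$, and the same key estimate $\|f^{\text{II}}_{\lambda}\|_{\H}\lesssim\lambda^{-1/2}\|q\|_{2,p}$ that produces the $\lambda^{-3/2}$ rate. The one detail you gloss over is the term $\K_p(\K_{\z_p}-\K_p)q$, where $q$ need not lie in $\H$ so the operator-norm bound does not apply directly; the paper handles it with the separate vector-valued concentration estimate $\|(\K_{\z_p}-\K_p)q\|_{\H}\le\kappa_t\|q\|_{\infty}\sqrt{2\tau}/\sqrt{n}$, which is why $\|q\|_\infty$ appears in its constant $C_4$.
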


\begin{proof}
Let $\tilde{f} = \left(\K_{\z_p}^3+\lambda \I\right)^{-1} \K_p^2 q$. We have  $f_{\lambda}^{\text{II}}-f_{\lambda,\z}^{\text{II}} = f_{\lambda}^{\text{II}}-\tilde{f}+\tilde{f}-f_{\lambda,\z}^{\text{II}}$. For $f_{\lambda}^{\text{II}}-\tilde{f}$, using the fact that $\left(\K_p^3+\lambda \I\right)f_{\lambda}^{\text{II}} = \K_p^2 q$, we have
\[\begin{split}
& f_{\lambda}^{\text{II}}-\tilde{f} \\
= & f_{\lambda}^{\text{II}} -\left(\K_{\z_p}^3+\lambda \I\right)^{-1} \left(\K_p^3+\lambda \I\right)f_{\lambda}^{\text{II}} \\
= & \left(\K_{\z_p}^3+\lambda \I\right)^{-1}\left( \K_{\z_p}^3 - \K_p^3\right)f_{\lambda}^{\text{II}} \\
\end{split}\]
And
\[\begin{split}
& \tilde{f}-f_{\lambda,\z}^{\text{II}} \\
= & \left(\K_{\z_p}^3+\lambda \I\right)^{-1} \K_p^2 q -\left(\K_{\z_p}^3+\lambda \I\right)^{-1} \K_{\z_p}^2 q \\
= & \left(\K_{\z_p}^3+\lambda \I\right)^{-1}\left( \K_p^2 - \K_{\z_p}^2\right)q \\
\end{split}\]

Notice that we have $\K_{\z_p}^3-\K_p^3$ and $\K_{\z_p}^2-\K_p^2$ in the identity we get. For these two objects, it is not hard to verify the following identities,
\[\begin{split}
&\K_{\z_p}^3-\K_p^3\\
=&\left(\K_{\z_p}-\K_p\right)^3+ \K_p\left(\K_{\z_p}-\K_p\right)^2+\left(\K_{\z_p}-\K_p\right)\K_p\left(\K_{\z_p}-\K_p\right)+\left(\K_{\z_p}-\K_p\right)^2\K_p\\
&+\K_p^2\left(\K_{\z_p}-\K_p\right)+ \K_p\left(\K_{\z_p}-\K_p\right)\K_p+\left(\K_{\z_p}-\K_p\right)\K_p^2 .
\end{split}\]
And
\[\begin{split}
\K_{\z_p}^2-\K_p^2=\left(\K_{\z_p}-\K_p\right)^2+ \K_p\left(\K_{\z_p}-\K_p\right)+\left(\K_{\z_p}-\K_p\right)\K_p
\end{split}\]

Thus, in these two identities, the only two random variables are $\K_{\z_p}-\K_p$. By results about concentration of $\K_{\z_p}$ and $\K_{\z_q}$, we have with probability $1-2e^{-\tau}$,
\begin{equation}\label{eqn:conc_ineq}\begin{split}
 &\|\K_{\z_p}-\K_p\|_{\H\rightarrow \H} \leq \frac{\kappa_t \sqrt{\tau}}{\sqrt{n}}, \\
 &\left\|\K_{\z_p}q - \K_{p}q\right\|_{\H} \leq \frac{\kappa_t\|q\|_{\infty}\sqrt{2\tau}}{\sqrt{n}}
\end{split}
\end{equation}
And we know that for a large enough constant $c$ which is independent of $t$ and $\lambda$,
\[\begin{split}
 \|\K_{p}\|_{\H\rightarrow \H}<c, \left\|\left(\K_{\z_p}^3+\lambda \I\right)^{-1}\right\|_{\H\rightarrow \H} \leq \frac{1}{\lambda}, \|\K_p q\|_{\H}<c\|q\|_{2,p}
\end{split}\]
and 
\[
 \|f_{\lambda}^{\text{II}}\|_{\H}^2 = \sum_i \frac{\sigma_i^3}{(\sigma_i^3+\lambda)^2}\left\langle q, u_i\right\rangle^2 \leq \left(\sup_{\sigma>0}\frac{\sigma^3}{(\sigma^3+\lambda)^2}\right)\sum_i\left\langle q , u_i\right\rangle^2 \leq \frac{c^2}{\lambda}\left\|q\right\|_{2,p}^2
\]
thus, $\|f_{\lambda}^{\text{II}}\|_{\H} \leq \frac{c}{\lambda^{1/2}} \left\|q\right\|_{2,p}$. 

Notice that $\left\|\left(\K_{\z_p}-\K_p\right)^2\right\|_{\H}\leq \left\|\K_{\z_p}-\K_p\right\|_{\H}^2$ and $\left\|\left(\K_{\z_p}-\K_p\right)^3\right\|_{\H}\leq \left\|\K_{\z_p}-\K_p\right\|_{\H}^3$, both of this could be of smaller order compared with $\left\|\K_{\z_p}-\K_p\right\|_{\H}$. For simplicity we hide the term including them in the final bound without changing the dominant order. We could also hide the terms with the product of any two the random variables in Eq.~\ref{eqn:conc_ineq}, which is of prior order compared to the term with only one random variable. Now let us put everything together,
\[\begin{split}
 &\|f^{\text{II}}_{\lambda}-f^{\text{II}}_{\lambda,\z}\|_{2,p} \leq c^{1/2} \|f^{\text{II}}_{\lambda}-f^{\text{II}}_{\lambda,\z}\|_{H_t} \\
\leq & c^{1/2}\left(\frac{c^3\kappa_t \sqrt{\tau}}{\lambda^{3/2}\sqrt{n}}\left\|q\right\|_{2,p}+\frac{c^2\kappa_t \sqrt{\tau}}{\lambda\sqrt{n}}\left\|q\right\|_{\infty}\right) \\
\leq & C_4\left(\frac{\kappa_t \sqrt{\tau}}{\lambda^{3/2}\sqrt{n}}+\frac{\kappa_t \sqrt{\tau}}{\lambda\sqrt{n}}\right)
\end{split}\]
where $C_4 = c^{5/2}\max\left(c\left\|q\right\|_{2,p},\|q\|_{\infty}\right)$.
\end{proof}

Given the above lemmas, the main theorem for the second case follows.

\end{document}